\DeclareMathOperator*{\argmin}{arg\,min}
\begin{document}
\title{Optimal $1$-Wasserstein distance for WGANs}

\author{\name Arthur St\'ephanovitch 
       \email stephanovitch@lpsm.paris \\
       \addr Universit\'e Paris Cit\'e, CNRS, LPSM\\
       F-75013 Paris, France
       \AND
       \name Ugo Tanielian \email ugo.tanielian@gmail.com\\
       \addr Criteo AI Lab, Paris, France
       \AND
       \name Beno\^{\i}t Cadre 
       \email benoit.cadre@univ-rennes2.fr \\
       \addr Univ Rennes, CNRS, IRMAR - UMR 6625\\
       F-35000 Rennes, France
       \AND
       \name Nicolas Klutchnikoff 
       \email nicolas.klutchnikoff@univ-rennes2.fr \\
       \addr Univ Rennes, CNRS, IRMAR - UMR 6625\\
       F-35000 Rennes, France
       \AND
        \name G\'erard Biau 
        \email gerard.biau@sorbonne-universite.fr\\
       \addr Sorbonne Universit\'e, CNRS, LPSM\\
       F-75005 Paris, France}
\maketitle

\vspace{-1.3cm}
\begin{center}
\rule{0.6\linewidth}{1pt}
\end{center}
\begin{abstract}
The mathematical forces at work behind Generative Adversarial Networks raise challenging theoretical issues. Motivated by the important question of characterizing the geometrical properties of the generated distributions, we provide a thorough analysis of Wasserstein GANs (WGANs) in both the finite sample and asymptotic regimes. We study the specific case where the latent space is univariate and derive results valid regardless of the dimension of the output space. We show in particular that for a fixed sample size, the optimal WGANs are closely linked with connected paths minimizing the sum of the squared Euclidean distances between the sample points. We also highlight the fact that WGANs are able to approach (for the $1$-Wasserstein distance) the target distribution as the sample size tends to infinity, at a given convergence rate and provided the family of generative Lipschitz functions grows appropriately. We derive in passing new results on optimal transport theory in the semi-discrete setting.
\end{abstract}
\begin{center}
\rule{0.6\linewidth}{1pt}
\end{center}
\begin{keywords}
Wasserstein Generative Adversarial Networks, Wasserstein distance, optimal distribution, shortest path, rate of convergence, optimal transport theory
\end{keywords}

\section{Introduction}
    \label{section:introduction}
    Recent years have witnessed the advent of generative methodologies based on Generative Adversarial Networks \citep[GANs,][]{GANs}, with outstanding achievements in the fields of image \citep{radford2015unsupervised, karras2017progressive}, video \citep{vondrick2016generating}, and text generation \citep{SeqGANs}, just to name a few. The surveys by \citet{lucic2018GANs} and \citet{borji2019pros} cover the different GANs techniques together with a comparison of their performances. We are concerned with the Wasserstein GAN (WGAN) approach of \citet{arjovsky2017wasserstein}, which uses the $1$-Wasserstein distance as an alternative to the Jensen-Shannon divergence implemented in traditional GANs. Over the years, WGANs and their derivatives have gained popularity in the machine learning community. They are today considered as one of the most successful generative techniques, achieving state-of-the art results in difficult problems \citep{karras2017progressive, karras2018style} while improving the stability and getting rid of unpleasant issues such as mode collapse \citep{gulrajani2017improved}. 
   
    To get started, let us properly define WGANs. Assume that we are given a sample $X_1, \hdots, X_n$ of independent $\mathbb R^d$-valued random variables, identically distributed according to some unknown distribution $\mu$. Throughout the manuscript, the space $\mathbb R^d$ as well as all other spaces $\mathbb R^k$ are equipped with the Euclidean norm $\|\cdot\|$, with no reference to $d$ or $k$ as the context is clear. The generative problem is to use the sample $X_1, \hdots, X_n$ to learn $\mu$ and, simultaneously, generate new ``fake'' data that look ``similar'' to the $X_i$'s. In the WGAN framework, this problem is addressed by minimizing the $1$-Wasserstein distance between a family of candidate distributions and the empirical measure of the sample. Recall here that for two probability measures $\pi_1$ and $\pi_2$ on $\mathbb R^d$, the $1$-Wasserstein distance $W_1(\pi_1,\pi_2)$ between $\pi_1$ and $\pi_2$ is defined by
    \begin{equation*}
    W_1(\pi_1, \pi_2)=\inf_{\pi \in \Pi (\pi_1, \pi_2)}\int_{\mathbb R^d \times \mathbb R^d} \|x-y\|{\rm d}\pi(x,y),
    \end{equation*}
    where $\Pi(\pi_1, \pi_2)$ denotes the collection of all joint probability measures $\pi$ on $\mathbb R^d \times \mathbb R^d$ with marginals $\pi_1$ and $\pi_2$ \citep[e.g.,][]{villani2008optimal}. Notice that $W_1(\cdot,\cdot)$ is not a distance in the strict sense, because it may take the value $+\infty$. We also recall that the empirical measure $\mu_n$ based on $X_1, \hdots, X_n$ is defined, for any Borel set $A \subseteq \mathbb R^d$, by $\mu_n(A)=\frac{1}{n}\sum_{i=1}^n \mathds 1{\{X_i \in A\}}$. 
    Now, let $U$ be a uniform random variable on $[0,1]^p$ and, for $K>0$, let $\text{Lip}_K(E,E')$ be the set of $K$-Lipschitz continuous functions from $E\subseteq \mathbb R^k$ to $E'\subseteq \mathbb R^{k'}$, equipped with their respective Euclidean norms, that is
    \begin{equation*}
        \text{Lip}_K (E,E')= \{G:E\to E' : \|G(x)-G(y)\|\leqslant K\|x-y\|, \ (x,y) \in E^2\}.    
    \end{equation*} 
    For $G \in \text{Lip}_K([0,1]^p,\mathbb R^d)$, we denote by $G_{\sharp U}$ the pushforward distribution of $U$ by $G$, that is, for any Borel set $A \subseteq \mathbb R^d$, $G_{\sharp U}(A)=\lambda_p(G^{-1}(A))$, where $\lambda_p$ is the Lebesgue measure on $\mathbb R^p$. In their abstract formulation, WGANs use the family of pushforward distributions $\{G_{\sharp U}:G \in \text{Lip}_K([0,1]^p,\mathbb R^d)\}$ as candidate distributions to estimate $\mu$, with the objective of finding the best function $G$ that minimizes the $1$-Wasserstein distance between $G_{\sharp U}$ and the empirical measure $\mu_n$. In other words, one seeks to find an optimal ${\widehat G}_{K} \in \text{Lip}_K([0,1]^p,\mathbb R^d)$ such that
    \begin{equation}\label{eq:studyGANs}
      W_1({\widehat G}_{K\sharp U},\mu_n)=\underset{G \in \text{Lip}_K([0,1]^p,\mathbb R^d)}{\inf} \ W_1(G_{\sharp U}, \mu_n).
    \end{equation}
    Once a minimizer ${\widehat G}_{K}$ has been found, it is easy to generate ``fake'' observations, by simply taking a uniform i.i.d.~sample $U_1, \hdots, U_m$ and computing ${\widehat G}_{K}(U_1), \hdots,  {\widehat G}_{K}(U_m)$. In the GAN literature, the space $[0,1]^p$ is called the latent space and the distribution of the random variable $U$ the latent distribution. It should be stressed that assuming Lipschitz continuous candidate functions $G$ is classical when defining WGANs \citep[e.g.,][]{zhou2019lipschitz}. However, some authors have also considered smoother classes, such as for example functions with Lipschitz partial derivatives up to some order \citep[e.g.,][]{luise2020generalization, schreuder2021statistical}. To keep things as simple as possible, we do not make further assumptions on the generative functions other than their Lipschitz property.  
    
    The key to approach the infimum in~\eqref{eq:studyGANs} is to use the dual formulation of the $1$-Wasserstein distance \citep[][]{kantorovich1958space}. Indeed, one has
    \begin{align*}
     W_1(G_{\sharp U}, \mu_n) &= \sup \limits_{ f \in \text{Lip}_1(\mathbb R^d,\mathbb R)}\ \int_{\mathbb R^d} f{\rm d}{G_{\sharp U}} - \int_{\mathbb R^d} f{\rm d}\mu_n \\
     &=\sup \limits_{ f \in \text{Lip}_1(\mathbb R^d,\mathbb R)}\ \int_{[0,1]^p}f(G(u)){\rm d}u - \frac{1}{n}\sum_{i=1}^n f(X_i),
    \end{align*}
    so that the WGAN optimization Problem~\eqref{eq:studyGANs} takes the min-max form
    \begin{equation}
    \label{problem}
        W_1({\widehat G}_{K\sharp U},\mu_n)=\underset{G \in \text{Lip}_K([0,1]^p,\mathbb R^d)}{\inf} \ \sup \limits_{f \in \text{Lip}_1(\mathbb R^d,\mathbb R)}\ \int_{[0,1]^p} f(G(u)){\rm d}u - \frac{1}{n}\sum_{i=1}^n f(X_i).
    \end{equation}
    Since the nonparametric classes $\text{Lip}_K([0,1]^p,\mathbb R^d)$ and $\text{Lip}_1(\mathbb R^d,\mathbb R)$ are too large to be implemented, they are replaced in practice by parametric models, respectively called the generator and the discriminator. In most applications, these parametric models take the form of multilayer neural networks, either feedforward or convolutional, hence the name WGANs. It is also important to note that in practice the function $G_{\sharp U}$ in \eqref{eq:studyGANs} is estimated by random samples $G(U_1), \hdots, G(U_m)$ drawn from $U$. In other words, there exists an estimation error---on top of the approximation error by neural networks---between the optimum $\inf_G W_1(G_{\sharp U} , \mu_n)$ and any simulation. However, sampling from $U$ is easy and one can take sufficiently large $m$. From an optimization perspective, the training of (W)GANs is challenging. The min-max optimum in~\eqref{problem} is usually found by using stochastic gradient descent, alternatively on the generator's and the discriminator's parameters. Studying the convergence of the different learning procedures is an interesting question, tackled for example by \citet{kodali2017convergence} and \citet{mescheder2018training}.
    
    In addition to the numerous empirical research studies, several theoretical articles aimed at understanding the mathematical and statistical properties of the adversarial problem~\eqref{problem} and its extensions to integral probability metrics \citep[IPM,][]{IPMsMuller}. For example, leveraging the approximation properties of some family of neural networks, \citet{BST} study the convergence of the model as the sample size tends to infinity, and clarify the respective effects of the generator and the discriminator by underlining some trade-off properties. Assuming smoothness properties on the generator and the discriminator, \citet{liang2018well} and \citet{nonparametric2018singh} exhibit rates of convergence under an IPM-based loss for estimating densities that live in Sobolev spaces, while \citet{nonparametric2019wallach} explore the case of Besov spaces. More recently, \citet{schreuder2021statistical} have stressed the properties of IPM losses defined with smooth functions on a compact set. Remarkably, \citet{liang2018well} discusses bounds for the Kullback-Leibler divergence, the Hellinger distance, and the $1$-Wasserstein distance. Studying a different facet of the problem, \citet{luise2020generalization} analyze the interplay between the latent distribution and the complexity of the pushforward map, and how it affects the overall performance. 
    
    In this paper, we seek to describe the properties of the $K$-Lipschitz continuous functions that achieve the infimum in~\eqref{eq:studyGANs}. Our approach is motivated by an active line of experimental research, which aims at characterizing the distributions output by GANs, typically the geometry of their supports. For example, when dealing with the learning of disconnected manifolds, \citet{tanielian2020learning} derived lower-bounds on the measure of the proposal distribution that lies out of the target manifold. Another much-debated question is to understand to what extent GANs memorize the dataset \citet{nagarajan2018theoretical}. In this regard, \citet{gulrajani2018towards} stress their tendency to memorize, and, in turn, propose a new evaluation protocol that enhances generalization. Yet, most of the conclusions on this subject are of an experimental nature, without clear theoretical arguments regarding the statistical properties of the distribution produced by GANs. 
    
    Motivated by the above, we provide in the present article a thorough analysis of Problem~\eqref{eq:studyGANs}. Since this question is highly nontrivial, we deeply study the univariate latent setting ($p=1$). Beyond the technical aspects, the motivation to study the univariate case is related to the so-called manifold hypothesis \citep{fefferman2016testing, facco2017}, which states that high-dimensional datasets may lay on manifolds of lower dimensions. For instance, \citet{YoonHaeng2021} show that using a latent dimension $p=2$ is already sufficient to generate high-quality images for the MNIST dataset. We later give intuitions for the case $p>1$. 
    
    Our contributions are the following:
    \begin{enumerate}
        \item To grasp how WGANs can approach the distribution $\mu$, we start in Section~\ref{Kfixed} by an asymptotic analysis of $W_1({\widehat G}_{K\sharp U},\mu)$ as the sample size $n$ tends to infinity, assuming that the Lipschitz constant $K$ is kept fixed, independent of the data. We show in particular that in most situations, and independently of the dimension $d$, one has $\liminf_{n\to \infty}  W_1({\widehat G}_{K\sharp U},\mu)>0 \text{ a.s.}$
        \item Next, we provide in Section~\ref{1DO} a thorough finite sample analysis of the case $d=1$, that is, whenever the output space is univariate. In this context, the Lipschitz constant $K$ is allowed to depend on the sample $X_1, \hdots, X_n$. We explicitly describe the (two) functions achieving the infimum in~\eqref{eq:studyGANs}, give the exact value of the infimum, and show that the corresponding optimal distributions have atoms at the $X_i$'s. Finally, taking an asymptotic point of view, we prove that $\lim_{n\to \infty}  W_1({\widehat G}_{K\sharp U},\mu)=0$ and offer convergence rates.
        \item We then discuss in Section~\ref{sec:opt_transport} new existence results on transport maps in semi-discrete optimal transport theory, for measures that are non necessarily absolutely continuous with respect to the Lebesgue measure on $\mathbb{R}^d$. This step is necessary before diving into the analysis of Problem~\eqref{eq:studyGANs} for $d>1$.
        \item In Section~\ref{sec:multi_dim_observations}, we move to the case where the observations are multivariate ($d>1$) and derive a finite sample bound on the infimum in~\eqref{eq:studyGANs}. We show in particular, provided $K$ is allowed to depend on the sample, that the bound is achieved by a distribution concentrated on a shortest-path-type graph constructed on the $X_i$'s. Up to our knowledge, this is the first time that such bounds are available in the literature. Taking neural networks for the generator and the discriminator classes, we illustrate the results empirically. Similarly to Section~\ref{1DO}, we also provide convergence rates for $\lim_{n\to \infty}  W_1({\widehat G}_{K\sharp U},\mu)$.
        \end{enumerate}
    All the proofs are gathered in the Annex \citep{supplementarymaterial}, with the exception of the proofs of Theorem~\ref{theorem2} and Theorem~\ref{theorem3}.
    \section{Asymptotic analysis}\label{Kfixed}
    The study begins with an asymptotic analysis of Problem~\eqref{eq:studyGANs}, when the sample size $n$ tends to infinity and the Lipschitz constant $K$ is assumed to be fixed. For more clarity, the univariate case $d=1$ is handled in Theorem~\ref{theorem1asymptotique1} and the multivariate case $d>1$ in Theorem~\ref{theorem1asymptotique2}. Recall that the latent variable $U$ is assumed to be uniformly distributed on $[0,1]$, and that the data $X_1,\hdots,X_n$ are i.i.d~with unknown distribution $\mu$. Throughout, we let 
    \[
    {\widehat{\mathscr{G}}}_K=\argmin_{G \in \text{Lip}_K ([0,1],\mathbb R^d)}W_1(G_{\sharp U}, \mu_n) 
    \]
    be the set of minimizers of Problem~\eqref{eq:studyGANs}, that is,
    \begin{equation*}
        {\widehat{\mathscr{G}}}_K = \{\widehat{G}_K \in \text{Lip}_K ([0,1],\mathbb R^d) : W_1(\widehat{G}_{K\sharp U}, \mu_n) = \underset{G \in \text{Lip}_K([0,1],\mathbb R^d)}{\inf} W_1(G_{\sharp U}, \mu_n) \}.
    \end{equation*}
    Observe that $\{\widehat{G}_{K\sharp U}: \widehat{G}_K \in {\widehat{\mathscr{G}}}_K\}$ is the collection of optimal distribution(s). Whenever $\mu$ is of order $1$, i.e., $\mathbb E\|X_1\|=\int_{\mathbb R^d} \|x\|\mu({\rm d}x)<\infty$, it is convenient to consider $\mathscr{G}_K$, the population version of ${\widehat{\mathscr{G}}}_K$ defined by
    \[
    \mathscr{G}_K=\argmin_{G_K \in \text{Lip}_K ([0,1],\mathbb R^d)}W_1(G_{K\sharp U}, \mu).
    \]
    We start with the following simple but useful lemma.
    \begin{lemma}
    \label{lemma1}
    The set ${\widehat{\mathscr{G}}}_K$ is not empty. In addition, assuming that $\mu$ is of order $1$, the set $\mathscr{G}_K$ is not empty. 
    \end{lemma}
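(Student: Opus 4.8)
The plan is to invoke the direct method of the calculus of variations, working with the uniform topology on $\text{Lip}_K([0,1],\mathbb R^d)$: I would first produce a minimizing sequence that stays in a compact set, then extract a uniformly convergent subsequence by Arzel\`a--Ascoli, and finally check that the limit realizes the infimum because $G\mapsto W_1(G_{\sharp U},\nu)$ is continuous along uniformly convergent sequences. \emph{Finiteness.} For every $G\in\text{Lip}_K([0,1],\mathbb R^d)$ the measure $G_{\sharp U}$ is compactly supported and $\mu_n$ has finite support, hence $W_1(G_{\sharp U},\mu_n)<\infty$ and $I:=\inf_G W_1(G_{\sharp U},\mu_n)\in[0,\infty)$ (indeed $I\le W_1(\delta_{X_1},\mu_n)$). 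Fix a minimizing sequence $(G_m)$ with $W_1(G_{m\sharp U},\mu_n)\le I+1$ for all $m$.

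\emph{A priori bound (the crux).} Set $R=\max_{1\le i\le n}\|X_i\|$, so that $\mathrm{supp}(\mu_n)\subseteq\overline{B}(0,R)$. Applying the dual formulation of $W_1$ with the $1$-Lipschitz test function $x\mapsto\mathrm{dist}(x,\overline{B}(0,R))$, which vanishes on $\mathrm{supp}(\mu_n)$, gives
\[
W_1(G_{m\sharp U},\mu_n)\ \ge\ \int_0^1\mathrm{dist}\big(G_m(u),\overline{B}(0,R)\big)\,{\rm d}u\ \ge\ \mathrm{dist}\big(G_m(\tfrac12),\overline{B}(0,R)\big)-\tfrac K2,
\]
the last inequality because $u\mapsto\mathrm{dist}(G_m(u),\overline{B}(0,R))$ is $K$-Lipschitz. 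Hence $\|G_m(\tfrac12)\|\le R+\tfrac K2+I+1=:C$, and the $K$-Lipschitz property then forces $\sup_{u\in[0,1]}\|G_m(u)\|\le C+\tfrac K2$. So the family $(G_m)$ is uniformly bounded, and it is equicontinuous since each $G_m$ is $K$-Lipschitz.

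\emph{Compactness and passage to the limit.} By Arzel\`a--Ascoli, a subsequence $(G_{m_j})$ converges uniformly to some $G^\star\in C([0,1],\mathbb R^d)$; a uniform (even pointwise) limit of $K$-Lipschitz maps is $K$-Lipschitz, so $G^\star\in\text{Lip}_K([0,1],\mathbb R^d)$. Pushing the Lebesgue measure on $[0,1]$ forward by $u\mapsto(G_{m_j}(u),G^\star(u))$ yields a coupling of $G_{m_j\sharp U}$ and $G^\star_{\sharp U}$, whence $W_1(G_{m_j\sharp U},G^\star_{\sharp U})\le\|G_{m_j}-G^\star\|_\infty\to 0$. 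Since $\mu_n$ and all the measures $G_{m_j\sharp U}$, $G^\star_{\sharp U}$ have finite first moment, the triangle inequality for $W_1$ gives $|W_1(G_{m_j\sharp U},\mu_n)-W_1(G^\star_{\sharp U},\mu_n)|\le W_1(G_{m_j\sharp U},G^\star_{\sharp U})\to 0$, so $W_1(G^\star_{\sharp U},\mu_n)=\lim_j W_1(G_{m_j\sharp U},\mu_n)=I$, that is, $G^\star\in{\widehat{\mathscr{G}}}_K$ and this set is nonempty.

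\emph{Population version.} If $\mu$ is of order $1$, then $W_1(G_{\sharp U},\mu)<\infty$ for every $G$ (compact support together with $\mathbb E\|X_1\|<\infty$), and the argument carries over unchanged except for the a priori bound: testing the dual against $x\mapsto\|x\|$ yields $W_1(G_{m\sharp U},\mu)\ge\|G_m(\tfrac12)\|-\tfrac K2-\mathbb E\|X_1\|$, which again confines a minimizing sequence to a uniformly bounded equicontinuous family, and the previous step shows $\mathscr{G}_K\ne\emptyset$. The only genuinely delicate point is the a priori bound --- ruling out that a minimizing sequence escapes to infinity --- together with upgrading, in the compactness step, weak lower semicontinuity of $W_1$ to plain continuity along the uniformly convergent subsequence, which is exactly what turns ``bounded below'' into ``attained''; the rest (Arzel\`a--Ascoli, stability of the Lipschitz constant under limits, the estimate $W_1(G_{\sharp U},H_{\sharp U})\le\|G-H\|_\infty$) is routine.
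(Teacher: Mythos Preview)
Your proof is correct and follows essentially the same approach as the paper: both establish continuity of $G\mapsto W_1(G_{\sharp U},\cdot)$ via the coupling estimate $W_1(G_{\sharp U},H_{\sharp U})\le\|G-H\|_\infty$, derive an a priori bound confining the minimization to a uniformly bounded equicontinuous family, and conclude by Arzel\`a--Ascoli. The only cosmetic differences are that the paper restricts to a compact set up front (rather than taking a minimizing sequence first) and obtains the a priori bound by a direct pointwise estimate $\|G(u)-X_i\|\ge\|G\|_\infty-K-\max_i\|X_i\|$ instead of your dual test function; the substance is the same.
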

    In the sequel, we let $S(\mu)$ be the support of $\mu$, i.e., 
    \[S(\mu)=\{x \in \mathbb R^d: \mu(B(x,\varepsilon))>0 \text{ for all } \varepsilon >0\},\]
    where $B(x,\varepsilon)$ is the closed ball in $\mathbb R^d$ centered at $x$ of radius $\varepsilon$. We are now ready to state the first theorem, which reveals the different behaviors of the quantity $W_1({\widehat G}_{K\sharp  U},\mu)$ in dimension $d=1$, provided ${\widehat G}_K$ is any minimizer in $\mathscr {\widehat G}_{K}$. Interestingly, we distinguish different cases depending on both the smoothness of the distribution function of $\mu$ and the boundedness of its support $S(\mu)$. 
    \begin{theorem}[Case $d=1$] 
    \label{theorem1asymptotique1}
    Let ${\widehat G}_{K} \in \mathscr {\widehat G}_{K}$. Assume that $\mu$ is of order $1$, and let $F^{-1}$ be the generalized inverse of the distribution function $F$ of $\mu$, i.e., for all $u\in (0,1)$,
    $F^{-1}(u)=\inf\{ x\in\mathbb R \, :\, F(x)\geqslant u\}.$
    \begin{enumerate}
    \item Assume that $S(\mu)$ is bounded. 
    \begin{enumerate}
    \item[$(i)$] If $F^{-1}\in {\rm Lip}_{K_0}([0,1],\mathbb R)$ for some $K_0>0$, then, for all $K\geqslant K_0$, 
    \[\lim_{n\to \infty} W_1({\widehat G}_{K\sharp  U},\mu)=0 \text{ a.s.}\]
    \item[$(ii)$] If $F\in {\rm Lip}_{K_1}(\mathbb R,[0,1])$ for some $K_1>0$, then, for all $K<1/K_1$, 
    \[\liminf_{n\to \infty} \ W_1({\widehat G}_{K\sharp  U},\mu)>0 \text{ a.s.}\]
    \end{enumerate}
    \item Assume that $S(\mu)$ is unbounded. Then, for all $K>0$, 
    \[\liminf_{n\to \infty} \ W_1({\widehat G}_{K\sharp  U},\mu)>0 \text{ a.s.}\]
    \end{enumerate}
    \end{theorem}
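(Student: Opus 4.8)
The plan is to isolate two ingredients and combine them.

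\emph{Ingredient A (a uniform lower bound).} Any $G\in\mathrm{Lip}_K([0,1],\mathds R)$ is continuous on the connected set $[0,1]$, so $I_G:=G([0,1])$ is a compact interval, and the $K$-Lipschitz property gives $|I_G|\leqslant K$; moreover $G_{\sharp U}$ is supported on $I_G$. Hence for any coupling of $\mu$ and $G_{\sharp U}$ the second coordinate lies almost surely in $I_G$, so $\|x-y\|\geqslant \mathrm{dist}(x,I_G)$ and therefore $W_1(G_{\sharp U},\mu)\geqslant \int_{\mathds R}\mathrm{dist}(x,I_G)\,\mu({\rm d}x)$. Since $\widehat G_K\in\mathrm{Lip}_K([0,1],\mathds R)$, this yields, for every $n$,
\[
W_1(\widehat G_{K\sharp U},\mu)\ \geqslant\ c(\mu,K):=\inf\big\{\,\textstyle\int_{\mathds R}\mathrm{dist}(x,I)\,\mu({\rm d}x)\ :\ I\subseteq\mathds R\text{ a closed interval},\ |I|\leqslant K\,\big\}.
\]
\emph{Ingredient B (empirical convergence).} Since $\mu$ is of order $1$, one has $W_1(\mu_n,\mu)\to 0$ a.s.; this is classical, following from $\mu_n\Rightarrow\mu$ a.s.\ combined with $\int\|x\|\,\mu_n({\rm d}x)\to\int\|x\|\,\mu({\rm d}x)$ a.s.\ (strong law of large numbers).

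For part~$1(i)$, I would take the quantile map as a competitor: when $F^{-1}\in\mathrm{Lip}_{K_0}([0,1],\mathds R)$ and $K\geqslant K_0$, the map $F^{-1}$ lies in $\mathrm{Lip}_K([0,1],\mathds R)$ and satisfies the classical identity $F^{-1}_{\sharp U}=\mu$. As $\widehat G_K$ minimizes $G\mapsto W_1(G_{\sharp U},\mu_n)$ over that class, the triangle inequality gives
\[
W_1(\widehat G_{K\sharp U},\mu)\ \leqslant\ W_1(\widehat G_{K\sharp U},\mu_n)+W_1(\mu_n,\mu)\ \leqslant\ W_1(F^{-1}_{\sharp U},\mu_n)+W_1(\mu_n,\mu)\ =\ 2\,W_1(\mu_n,\mu),
\]
which tends to $0$ a.s.\ by Ingredient~B; since $W_1\geqslant 0$, the limit is $0$ a.s.

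For parts~$1(ii)$ and $2$, by Ingredient~A it suffices to prove $c(\mu,K)>0$, a purely deterministic statement (which moreover upgrades the conclusion to hold surely). I would argue by contradiction: suppose there are closed intervals $I_j$ with $|I_j|\leqslant K$ and $\int\mathrm{dist}(x,I_j)\,\mu({\rm d}x)\to 0$. Either $(a)$ the $I_j$ remain in a fixed compact set, in which case a subsequence converges, $I_j\to I_\infty$ (a closed interval with $|I_\infty|\leqslant K$); then $\mathrm{dist}(\cdot,I_j)\to\mathrm{dist}(\cdot,I_\infty)$ pointwise, dominated by $\|x\|+C$ (integrable since $\mu$ is of order $1$), so dominated convergence forces $\int\mathrm{dist}(x,I_\infty)\,\mu({\rm d}x)=0$, i.e.\ $\mu(I_\infty)=1$. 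In case~$2$ this contradicts that $S(\mu)$ is unbounded; in case~$1(ii)$, $F\in\mathrm{Lip}_{K_1}$ implies $\mu(I)\leqslant K_1|I|$ for every interval, whence $1=\mu(I_\infty)\leqslant K_1|I_\infty|\leqslant K_1K<1$, a contradiction. Or $(b)$ the $I_j$ escape to infinity, i.e.\ $\min I_j\to+\infty$ or $\max I_j\to-\infty$ along a subsequence; in the first case, picking $x_0$ with $\mu((-\infty,x_0])>0$ gives $\int\mathrm{dist}(x,I_j)\,\mu({\rm d}x)\geqslant(\min I_j-x_0)\,\mu((-\infty,x_0])\to\infty$, and symmetrically in the second, again a contradiction. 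Hence $c(\mu,K)>0$ and $\liminf_n W_1(\widehat G_{K\sharp U},\mu)\geqslant c(\mu,K)>0$.

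The genuinely routine points are the quantile identity $F^{-1}_{\sharp U}=\mu$ and the $1$-Wasserstein law of large numbers. The step requiring care — and the main obstacle — is the dichotomy for $c(\mu,K)$: verifying the integrable domination in the dominated-convergence step and ruling out minimizing intervals drifting to infinity. This is exactly where the hypotheses get consumed ($\mu$ of order $1$ for the domination; $K<1/K_1$ with $F$ Lipschitz, or the unboundedness of $S(\mu)$, to close the contradiction).
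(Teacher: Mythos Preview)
Your proof is correct. For part~$1(i)$ you argue exactly as the paper does: take $F^{-1}$ as a competitor, apply the triangle inequality, and invoke $W_1(\mu_n,\mu)\to 0$ a.s. For parts~$1(ii)$ and~$2$, however, your route is genuinely different. The paper proceeds by contradiction: assuming $\liminf_n W_1(\widehat G_{K\sharp U},\mu)=0$ on an event of positive probability, it deduces $\inf_G W_1(G_{\sharp U},\mu)=0$, invokes Lemma~\ref{lemma1} to obtain a population minimizer $G_K$ with $G_K(U)\stackrel{\mathscr L}{\sim}\mu$, and then derives a contradiction---in case~$2$ from the compactness of $G_K([0,1])$ versus the unbounded support, and in case~$1(ii)$ via the more delicate observation that $F\circ G_K$ is a strict contraction (Lipschitz constant $KK_1<1$) yet satisfies $(F\circ G_K)^\ell(U)\stackrel{\mathscr L}{\sim}U$ for all $\ell\geqslant 0$. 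Your argument instead produces, for every $n$ and surely (not merely a.s.), the explicit lower bound $c(\mu,K)=\inf_{|I|\leqslant K}\int\mathrm{dist}(x,I)\,\mu({\rm d}x)$ and shows $c(\mu,K)>0$ by a compactness dichotomy on intervals of length at most $K$. This is more elementary---it bypasses Lemma~\ref{lemma1} and the iteration trick entirely---and yields a quantitative, sample-independent bound; the paper's approach, by contrast, isolates the conceptual obstruction (no $K$-Lipschitz pushforward of $U$ can equal $\mu$) but is less direct, especially in case~$1(ii)$.
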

    A first remark could be that, in $1(i)$, the support $S(\mu)$ is necessarily bounded since $F^{-1}$ is assumed to be a $K_0$-Lipschitz function on $[0,1]$. Next, note that both conditions in $1(i)$ and $1(ii)$ may be satisfied simultaneously or not. For example, when $\mu$ is the uniform distribution on $[0,1]$, they are both verified with $K_0=K_1=1$. Also, observing that $K_0K_1\geqslant 1$ (since $F\circ F^{-1}$ is the identity function), these two conditions focus in fact on different regimes. The first one pertains to the case where the set of generative functions ought to be big while the second one claims that a smaller class cannot recover the target distribution $\mu$. In $2$, we notice however that independently of the smoothness of $\mu$ and the magnitude of $K$, WGANs cannot recover the target distribution. This is for example the case when $\mu$ is a standard Gaussian distribution on the real line. The mechanism is illustrated in Figure~\ref{fig:figure1_dim1_real}, which shows the values of $W_1({\widehat G}_{K\sharp U},\mu)$ as a function of both $n$ and $K$, when the target distribution $\mu$ is either uniform (left) or Gaussian (right). In the uniform case, as predicted by Theorem~\ref{theorem1asymptotique1}, we see that $W_1({\widehat G}_{K\sharp U},\mu)$ significantly decreases for $K$ larger than $1$ and stays rather constant for smaller $K$. In the Gaussian setting, $1$-Wasserstein distances are far from zero, independently of the value of $n$ and $K$. In the experiment, the generator is a 3-layer feedforward neural network while the discriminator is a 5-layer network.
    \begin{figure}[h]
        \centering
        \subfloat[The distribution $\mu$ is uniform.]
        {
            \includegraphics[width=0.44\linewidth]{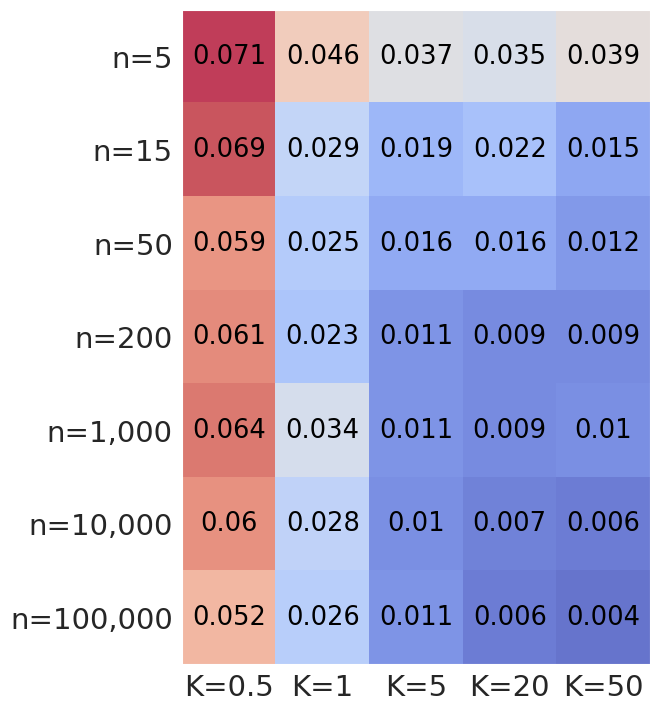}
        }\hfill
        \subfloat[The distribution $\mu$ is standard Gaussian.] 
        {
            \includegraphics[width=0.44\linewidth]{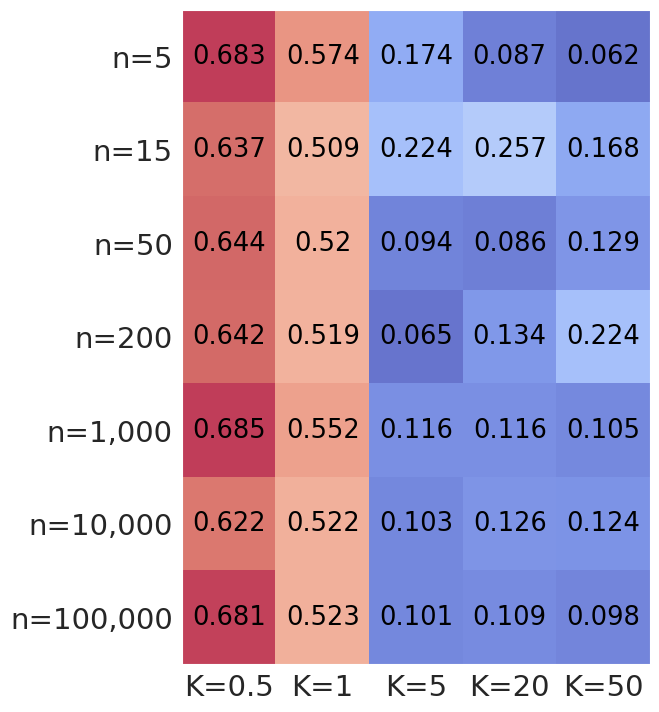}
        }
        \caption{$1$-Wasserstein distance $W_1({\widehat G}_{K\sharp U}, \mu)$ as a function of $n$ and $K$ (the bluer the lower and the redder the higher). Results are averaged over $2$ runs. \label{fig:figure1_dim1_real}}
    \end{figure}

    These results should of course be appreciated in the light of the specific case where both the latent space and the target distribution $\mu$ share the same dimension $1$. In the case where $\mu$ lies on a space of dimension strictly larger than $1$, then the  minimizers in $\mathscr {\widehat G}_{K}$ cannot reconstruct $\mu$, as stated by the following theorem.
    \begin{theorem}[Case $d>1$] 
    \label{theorem1asymptotique2}
    Let ${\widehat G}_{K} \in \mathscr {\widehat G}_{K}$. Assume that $\mu$ is of order $1$ and that $\lambda_d(S(\mu))>0$, where $\lambda_d$ denotes the Lebesgue measure on $\mathbb R^d$. Then, for all $K>0$,
    \[\liminf_{n\to \infty} \ W_1({\widehat G}_{K\sharp U},\mu)>0 \text{ a.s.}\]
    \end{theorem}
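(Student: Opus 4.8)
The plan is to prove something stronger and entirely deterministic: there is a constant $c=c(\mu,K)>0$ such that $W_1(G_{\sharp U},\mu)\geqslant c$ for \emph{every} $G\in\text{Lip}_K([0,1],\mathbb R^d)$; the conclusion then follows at once, since (by Lemma~\ref{lemma1}) $\widehat{\mathscr G}_K\neq\varnothing$ and every $\widehat G_K\in\widehat{\mathscr G}_K$ is $K$-Lipschitz on $[0,1]$, its being a minimizer playing no further role. The key geometric fact is that the image $\Gamma_G:=G([0,1])$ is a compact connected set with $\mathrm{diam}(\Gamma_G)\leqslant K$ and $\mathcal H^1(\Gamma_G)\leqslant K$ (the image of $[0,1]$, which has $\mathcal H^1$-measure $1$, under a $K$-Lipschitz map), and that $G_{\sharp U}$ is supported on $\Gamma_G$. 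Plugging the $1$-Lipschitz function $x\mapsto -\,\mathrm{dist}(x,\Gamma_G)$ into the Kantorovich--Rubinstein duality (legitimate because both $\mu$ and $G_{\sharp U}$ have a finite first moment) gives
\[
W_1(G_{\sharp U},\mu)\;\geqslant\;\Phi(\Gamma_G),\qquad \Phi(A):=\int_{\mathbb R^d}\mathrm{dist}(x,A)\,\mu({\rm d}x),
\]
where $\Phi(A)<\infty$ for any nonempty closed bounded $A$ since $\mu$ is of order $1$. Hence it suffices to show $c:=\inf\{\Phi(\Gamma_G):G\in\text{Lip}_K([0,1],\mathbb R^d)\}>0$.

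To prove $c>0$ I would first localize. Pick $R>0$ with $\mu(\bar B(0,R))\geqslant 1/2$. If $\Gamma_G\cap\bar B(0,2R)=\varnothing$, then $\mathrm{dist}(x,\Gamma_G)\geqslant R$ for every $x\in\bar B(0,R)$, so $\Phi(\Gamma_G)\geqslant R\,\mu(\bar B(0,R))\geqslant R/2$; and if $\Gamma_G\cap\bar B(0,2R)\neq\varnothing$, then $\Gamma_G\subseteq B^\star:=\bar B(0,2R+K)$ because $\mathrm{diam}(\Gamma_G)\leqslant K$. So it is enough to bound $\Phi$ away from $0$ over curves contained in the fixed compact ball $B^\star$. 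Assume, for contradiction, that $G_k\in\text{Lip}_K([0,1],B^\star)$ satisfy $\Phi(\Gamma_{G_k})\to 0$. By the Arzel\`{a}--Ascoli theorem the $G_k$ are precompact in the uniform norm, so a subsequence converges uniformly to some $G^\star\in\text{Lip}_K([0,1],B^\star)$; writing $\Gamma^\star:=G^\star([0,1])$, we have $d_H(\Gamma_{G_{k_j}},\Gamma^\star)\leqslant\|G_{k_j}-G^\star\|_\infty\to 0$, and $\Phi$ is $1$-Lipschitz for the Hausdorff distance $d_H$ (since $|\mathrm{dist}(x,A)-\mathrm{dist}(x,B)|\leqslant d_H(A,B)$ and $\mu$ is a probability measure of order $1$), whence $\Phi(\Gamma^\star)=0$. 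As $\Gamma^\star$ is closed this forces $\mu(\Gamma^\star)=1$, i.e.\ $S(\mu)\subseteq\Gamma^\star$. But $\mathcal H^1(\Gamma^\star)\leqslant K<\infty$ together with $d>1$ implies $\lambda_d(\Gamma^\star)=0$ (a set of finite $\mathcal H^1$-measure has zero $\mathcal H^d$-measure, hence zero Lebesgue measure, when $d>1$), contradicting $\lambda_d(S(\mu))>0$. Therefore $c>0$, and consequently $\liminf_{n\to\infty}W_1(\widehat G_{K\sharp U},\mu)\geqslant c>0$ almost surely (indeed, surely).

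The crux is this last step, i.e.\ the uniform bound $c>0$: one must exclude that the (data-dependent) optimal curves either escape to infinity or collapse onto thinner and thinner sets approximating $S(\mu)$. The first possibility is killed by localizing inside a ball carrying at least half the mass of $\mu$; the second by the compactness supplied by Arzel\`{a}--Ascoli together with the dimensional fact that a rectifiable curve, having finite $\mathcal H^1$-measure, is Lebesgue-null in $\mathbb R^d$ as soon as $d>1$ --- which is precisely where the hypothesis $d>1$ enters, consistently with the possibility of recovery when $d=1$ (Theorem~\ref{theorem1asymptotique1}, case $1(i)$). The only routine verifications are that Kantorovich--Rubinstein duality applies and that $\Phi$ is well defined and Hausdorff-Lipschitz, both guaranteed by $\mu$ being of order $1$ and $G_{\sharp U}$ having compact support.
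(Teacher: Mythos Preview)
Your proof is correct. Both your argument and the paper's ultimately rest on the same dimensional obstruction—that the image $G([0,1])$ of a $K$-Lipschitz map has finite $\mathcal H^1$-measure and hence zero $\lambda_d$-measure when $d>1$, so it cannot contain $S(\mu)$—but the route to that contradiction differs. The paper first reduces to the deterministic statement $\inf_{G}W_1(G_{\sharp U},\mu)>0$ via the almost-sure convergence $W_1(\mu_n,\mu)\to 0$ and the triangle inequality, then invokes Lemma~\ref{lemma1} to produce an actual minimizer $G_K$ with $W_1(G_{K\sharp U},\mu)=0$, and from that equality (using a family of Lipschitz bump functions $f_\ell$) deduces $S(\mu)\subseteq G_K([0,1])$. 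You bypass the empirical measure entirely and prove the stronger uniform estimate $W_1(G_{\sharp U},\mu)\geqslant c(\mu,K)>0$ directly, via the single test function $-\mathrm{dist}(\cdot,\Gamma_G)$ in Kantorovich--Rubinstein duality, a localization step, and an Arzel\`a--Ascoli/Hausdorff-continuity argument on the auxiliary functional $\Phi$. Your approach makes the uniformity (and the ``surely'' rather than merely ``a.s.'') explicit and never touches $\mu_n$; the paper's is slightly shorter once Lemma~\ref{lemma1} is available, since having a genuine minimizer with $W_1=0$ avoids extracting a limiting curve.
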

    The condition on the support of $\mu$ states that $\mu$ is a ``true'' measure on $\mathbb R^d$. We leave it as an exercise to prove that the same result holds by assuming that the Hausdorff dimension of $S(\mu)$ is strictly larger than 1. 
    \section{Finite sample analysis in a univariate output space}\label{1DO}
    The topic of the present section is to fully describe the set of minimizers $\mathscr {\widehat G}_{K}$ (Lemma~\ref{lemma1}), in the specific setting where both the output and the latent spaces are univariate. We denote by $X_{(1)}, \hdots, X_{(n)}$ the reordering of $X_1, \hdots, X_n$ according to their increasing values, that is $X_{(1)} \leqslant X_{(2)} \leqslant \cdots \leqslant X_{(n)}$, where ties are broken arbitrarily. Importantly, the Lipschitz constant $K$ is now allowed to depend upon the sample and is chosen to satisfy the constraint $K\geqslant n \max \limits_{i=1,\hdots,n-1} (X_{(i+1)}-X_{(i)})$. 
    
    The analysis starts by introducing the following function ${\widehat G}_{K}^{\star}:[0,1] \to \mathbb R$, which will play a key role in solving Problem~\eqref{eq:studyGANs}: for all $u\in [0,1]$,
    \begin{equation} 
    \label{Gstar}
    {\widehat G}^{\star}_{K}(u) = \left\{
    \begin{array}{ll}
     X_{(1)} & \text{if } u\in\big[0,\frac{1}{n}-\frac{X_{(2)}-X_{(1)}}{2K}\big] \medskip\\
        X_{(i)}+ K\big(u-(\frac{i}{n}-\frac{X_{(i+1)}-X_{(i)}}{2K})\big) & \text{if } u \in\big[\frac{i}{n}-\frac{X_{(i+1)}-X_{(i)}}{2K},\frac{i}{n}+\frac{X_{(i+1)}-X_{(i)}}{2K}\big]  \smallskip\\
        &  \text{for } 1\leqslant i\leqslant n-1 \medskip\\
        X_{(i+1)} & \text{if } u\in \big[\frac{i}{n}+\frac{X_{(i+1)}-X_{(i)}}{2K},\frac{i+1}{n}-\frac{X_{(i+2)}-X_{(i+1)}}{2K}\big] \smallskip\\
        & \text{for } 1\leqslant i\leqslant n-2 \medskip\\
        X_{(n)} & \text{if } u\in \big[ \frac{n-1}{n} + \frac{X_{(n)}-X_{(n-1)}}{2K},1\big].
    \end{array}
    \right.
    \end{equation} 
    Observe that ${\widehat G}_{K}^{\star}$ is piecewise linear and that the condition $K\geqslant n \max \limits_{i=1,\hdots,n-1} (X_{(i+1)}-X_{(i)})$ ensures that this function is well-defined. We also note that ${\widehat G}_{K}^{\star} \in \text{Lip}_K([0,1],\mathbb R)$ and that it visits each data point, going iteratively from $X_{(i)}$ to $X_{(i+1)}$. A typical example is shown in Figure~\ref{Gstargraph}. Observe that, for each $i\in \{1, \hdots, n\}$,
    \begin{equation}
    \label{NFR}
    \lambda_1\big (\{u \in [0,1] :  |{\widehat G}^{\star}_{K}(u) -X_i |\leqslant  |{\widehat G}^{\star}_{K}(u) -X_j |\ :\ j=1, \hdots, n \}\big) = \frac{1}{n}.
    \end{equation}
    This geometric feature has an interpretation in terms of Voronoi cells and will play an important role in the multivariate extension of ${\widehat G}^{\star}_{K}$, as we will see in Section~\ref{sec:multi_dim_observations}.
    \begin{figure}
    \centering
    \begin{tikzpicture}
     \begin{axis}[
        xmin = 0, xmax = 1 ,
        ymin = 0, ymax = 10,
                    grid = both,
                minor tick num = 0,
                major grid style = {lightgray},
                minor grid style = {lightgray!25},
        ytick={1.0,2.0,4.0,7.0,9.0},
        yticklabels={$X_{(1)}$,$X_{(2)}$,$X_{(3)}$,$X_{(4)}$,$X_{(5)}$},
        xtick={0,0.2,0.4,0.6,0.8,1},
        xticklabels={0,$\frac{1}{5}$,$\frac{2}{5}$,$\frac{3}{5}$,$\frac{4}{5}$,1},
        width = 0.80\linewidth, height = 0.45\linewidth,]
     \addplot[
        thin,
        blue,
       thick
    ] file[skip first]{function.dat}; 
     \end{axis}
    \end{tikzpicture}
    \caption{An example of function ${\widehat G}^{\star}_{K}$, with $n=5$ and $K=25$.}
    \label{Gstargraph}
    \end{figure}
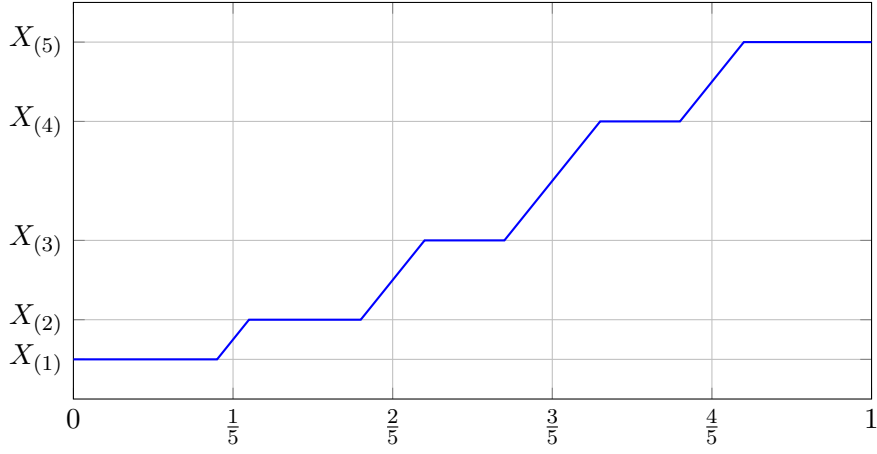
    
    \begin{proposition} 
    \label{proposition1}
    Assume that $K\geqslant n \max \limits_{i=1,\hdots,n-1} (X_{(i+1)}-X_{(i)})$, and let ${\widehat G}_{K}^{\star} \in {\rm Lip}_K([0,1],\mathbb R)$ be defined in~\eqref{Gstar}. Then
    \[
    W_1({\widehat G}^{\star}_{K\sharp  U},\mu_n) = \frac{1}{4K} \sum \limits_{i=1}^{n-1}(X_{(i+1)}-X_{(i)})^2.
    \]
    \end{proposition}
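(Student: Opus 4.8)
\emph{Proof plan.} The plan is to establish the two matching inequalities separately, using respectively the primal (infimum over couplings) and the dual (supremum over $1$-Lipschitz test functions) formulations of $W_1$ recalled above; it turns out that one and the same elementary triangle-area computation produces both bounds, which reflects complementary slackness.

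\emph{Upper bound.} First I would exhibit an explicit coupling $\gamma\in\Pi({\widehat G}^{\star}_{K\sharp U},\mu_n)$, namely the pushforward of the Lebesgue measure on $[0,1]$ by $u\mapsto\big({\widehat G}^{\star}_K(u),N(u)\big)$, where $N(u)\in\{X_1,\dots,X_n\}$ denotes a data point closest to ${\widehat G}^{\star}_K(u)$. Its first marginal is ${\widehat G}^{\star}_{K\sharp U}$ by construction, and its second marginal is exactly $\mu_n$ thanks to the Voronoi property~\eqref{NFR}. Then $W_1({\widehat G}^{\star}_{K\sharp U},\mu_n)\leqslant\int_0^1|{\widehat G}^{\star}_K(u)-N(u)|\,{\rm d}u$, and I would compute this integral by splitting $[0,1]$ into the flat pieces of ${\widehat G}^{\star}_K$ (on which ${\widehat G}^{\star}_K(u)$ equals some $X_{(i)}$, so the integrand vanishes) and the affine ramps. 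On the ramp joining $X_{(i)}$ to $X_{(i+1)}$---an interval of length $(X_{(i+1)}-X_{(i)})/K$ on which ${\widehat G}^{\star}_K$ has slope $K$---the integrand increases linearly from $0$ to $(X_{(i+1)}-X_{(i)})/2$ on its first half and decreases symmetrically on its second half, so its contribution equals the area of two triangles, that is, $(X_{(i+1)}-X_{(i)})^2/(4K)$. Summing over $i=1,\dots,n-1$ yields $W_1({\widehat G}^{\star}_{K\sharp U},\mu_n)\leqslant\frac{1}{4K}\sum_{i=1}^{n-1}(X_{(i+1)}-X_{(i)})^2$.

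\emph{Lower bound.} For the reverse inequality I would plug into the dual formula the $1$-Lipschitz function $f$ that vanishes on $(-\infty,X_{(1)}]\cup[X_{(n)},+\infty)$ and at each $X_{(i)}$, and equals $x\mapsto\min\{x-X_{(i)},\,X_{(i+1)}-x\}$ on $[X_{(i)},X_{(i+1)}]$ (piecewise linear with slopes $\pm1$, hence in ${\rm Lip}_1(\mathds R,\mathds R)$). Since $f$ vanishes at every $X_i$, one has $\int f\,{\rm d}\mu_n=0$; and since $f\circ{\widehat G}^{\star}_K$ vanishes on the flat pieces, the substitution $x={\widehat G}^{\star}_K(u)$ (with ${\rm d}x=K\,{\rm d}u$ on each ramp) gives $\int f\,{\rm d}{\widehat G}^{\star}_{K\sharp U}=\int_0^1 f\big({\widehat G}^{\star}_K(u)\big)\,{\rm d}u=\frac{1}{K}\sum_{i=1}^{n-1}\int_{X_{(i)}}^{X_{(i+1)}}f(x)\,{\rm d}x$, and each of these integrals is the area of a triangle of base $X_{(i+1)}-X_{(i)}$ and height $(X_{(i+1)}-X_{(i)})/2$. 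Hence $W_1({\widehat G}^{\star}_{K\sharp U},\mu_n)\geqslant\int f\,{\rm d}{\widehat G}^{\star}_{K\sharp U}-\int f\,{\rm d}\mu_n=\frac{1}{4K}\sum_{i=1}^{n-1}(X_{(i+1)}-X_{(i)})^2$, matching the claimed identity.

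\emph{Where the difficulty lies.} There is no serious obstacle once~\eqref{Gstar} is unpacked; the two halves are bookkeeping exercises. The delicate points are: (i) verifying that the second marginal of $\gamma$ equals $\mu_n$, which is precisely the content of~\eqref{NFR} and is essentially the only place where the hypothesis $K\geqslant n\max_{i}(X_{(i+1)}-X_{(i)})$ enters, through the well-definedness of~\eqref{Gstar}; (ii) keeping careful track of the exact endpoints of the flat and affine pieces of ${\widehat G}^{\star}_K$ so that the splitting of each ramp into two halves, and the change of variables on each ramp, are carried out correctly; and (iii) noting that everything degenerates gracefully when $X_{(i)}=X_{(i+1)}$, where the corresponding ramp collapses to a point, $f$ remains continuous across it, and the term $(X_{(i+1)}-X_{(i)})^2$ vanishes. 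Alternatively one could bypass the dual step by invoking the one-dimensional identity $W_1(\pi_1,\pi_2)=\int_{\mathds R}|F_{\pi_1}-F_{\pi_2}|$ for the distribution functions, but building both the coupling and the potential by hand keeps the argument self-contained and makes the complementary-slackness structure visible.
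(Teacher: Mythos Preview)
Your proposal is correct. Both the paper and you obtain the lower bound the same way, by plugging in the piecewise-linear tent function into the dual (the paper writes this slightly more implicitly via a Taylor expansion with integral remainder and then identifies the maximizing $f_0$, but it is the same potential). The genuine difference is in the upper bound: the paper stays entirely on the dual side, bounding $\Delta(f)$ from above for \emph{every} $1$-Lipschitz $f$ using $|f'|\leqslant 1$ inside the Taylor remainder, whereas you switch to the primal side and produce the explicit nearest-neighbor coupling $u\mapsto({\widehat G}^{\star}_K(u),N(u))$, whose second marginal is $\mu_n$ precisely by~\eqref{NFR}. Your route is a bit more transparent---it exhibits simultaneously an optimal plan and an optimal potential, so the complementary-slackness structure you mention is visible---while the paper's route has the minor advantage of never needing to check the marginal condition on the coupling (at the cost of the density-of-$C^1$ and Taylor machinery). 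Either way the computation reduces to the same triangle areas.
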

    The key message of Proposition~\ref{proposition1} is that the $1$-Wasserstein distance between ${\widehat G}^{\star}_{K\sharp U}$ and $\mu_n$ depends on the sum of the {\it squared} distances $(X_{(i+1)}-X_{(i)})^2$. We are now in a position to state the main result of the section. 
    \begin{theorem}\label{theorem1}
    Assume that $K\geqslant n \max \limits_{i=1,\hdots,n-1} (X_{(i+1)}-X_{(i)})$, and let the function ${\widehat G}_{K}^{\star} \in {\rm Lip}_K([0,1],\mathbb R)$ be defined in~\eqref{Gstar}. Then 
    \begin{equation*}
    W_1({\widehat G}^{\star}_{K\sharp  U},\mu_n)=\underset{G \in {\rm Lip}_K([0,1],\mathbb R)}{\inf} \ W_1(G_{\sharp U}, \mu_n)=\frac{1}{4K} \sum \limits_{i=1}^{n-1}(X_{(i+1)}-X_{(i)})^2.
    \end{equation*}
    Moreover, ${\widehat{\mathscr{G}}}_K= \{{\widehat G}_K^\star, {\widehat G}_K^\star\circ S\}$,  where $S(u)=1-u$, $u\in [0,1]$.
    \end{theorem}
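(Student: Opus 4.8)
The inequality $W_1(\widehat{G}^\star_{K\sharp U},\mu_n)\ge \inf_{G\in{\rm Lip}_K([0,1],\mathds R)}W_1(G_{\sharp U},\mu_n)$ is trivial, and the reverse inequality together with the explicit value $\tfrac1{4K}\sum_{i=1}^{n-1}(X_{(i+1)}-X_{(i)})^2$ is exactly Proposition~\ref{proposition1} applied to $\widehat{G}^\star_K$ (which, being non-decreasing, is its own monotone rearrangement). So the theorem reduces to two statements: (a) the \emph{universal} lower bound $W_1(G_{\sharp U},\mu_n)\ge \tfrac1{4K}\sum_{i=1}^{n-1}(X_{(i+1)}-X_{(i)})^2$ for every $G\in{\rm Lip}_K([0,1],\mathds R)$, and (b) the identification $\widehat{\mathscr{G}}_K=\{\widehat{G}^\star_K,\widehat{G}^\star_K\circ S\}$.

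For (a), write $\Delta_i=X_{(i+1)}-X_{(i)}$ and $\delta_i=\Delta_i/K$, so the hypothesis on $K$ gives $\delta_i\le 1/n$. Given $G\in{\rm Lip}_K([0,1],\mathds R)$, I would first pass to the one-dimensional picture: letting $G^\ast$ be the non-decreasing rearrangement of $G$ (equivalently, the quantile function of $G_{\sharp U}$), one has $W_1(G_{\sharp U},\mu_n)=\int_0^1|G^\ast(t)-F_{\mu_n}^{-1}(t)|\,{\rm d}t=\sum_{i=1}^n\int_{(i-1)/n}^{i/n}|G^\ast(t)-X_{(i)}|\,{\rm d}t$. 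The key preliminary fact is that $G^\ast$ is again $K$-Lipschitz: for $v_1<v_2$ in the (compact interval) range of $G$, the intermediate value theorem yields points $u_1,u_2$ with $G(u_1)=v_1$, $G(u_2)=v_2$ and $v_1\le G\le v_2$ on the segment between them, whence $\lambda_1(\{v_1\le G\le v_2\})\ge|u_2-u_1|\ge(v_2-v_1)/K$; this says the c.d.f.\ of $G_{\sharp U}$ increases at rate $\ge 1/K$ on the range, so its generalized inverse $G^\ast$ is $K$-Lipschitz. Next, the intervals $I_i^-=[\tfrac in-\tfrac{\delta_i}2,\tfrac in]$ and $I_i^+=[\tfrac in,\tfrac in+\tfrac{\delta_i}2]$, $i=1,\dots,n-1$, are pairwise non-overlapping, with $I_i^-\subseteq((i-1)/n,i/n]$ and $I_i^+\subseteq(i/n,(i+1)/n]$; keeping only these pieces and substituting $t=\tfrac in\mp s$,
\[
W_1(G_{\sharp U},\mu_n)\ \ge\ \sum_{i=1}^{n-1}\int_0^{\delta_i/2}\Big(\big|G^\ast(\tfrac in-s)-X_{(i)}\big|+\big|G^\ast(\tfrac in+s)-X_{(i+1)}\big|\Big)\,{\rm d}s.
\]
By the triangle inequality and $|G^\ast(\tfrac in+s)-G^\ast(\tfrac in-s)|\le 2Ks$, the $i$-th integrand is at least $\Delta_i-|G^\ast(\tfrac in+s)-G^\ast(\tfrac in-s)|\ge \Delta_i-2Ks$, so the $i$-th term is at least $\int_0^{\delta_i/2}(\Delta_i-2Ks)\,{\rm d}s=\Delta_i^2/(4K)$. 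Summing gives (a), hence the value of the infimum.

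For (b), both claimed functions attain the value: $\widehat{G}^\star_K$ by Proposition~\ref{proposition1}, and $\widehat{G}^\star_K\circ S$ because $S_{\sharp U}=U$ implies $(\widehat{G}^\star_K\circ S)_{\sharp U}=\widehat{G}^\star_{K\sharp U}$, while $\widehat{G}^\star_K\circ S\in{\rm Lip}_K([0,1],\mathds R)$. Conversely, let $G$ attain the infimum. Tracing the equality cases in the chain above, for each $i$ and a.e.\ $s\in(0,\delta_i/2]$ one must have $|G^\ast(\tfrac in+s)-G^\ast(\tfrac in-s)|=2Ks$, which with $K$-Lipschitzness forces $G^\ast$ to be affine with slope $+K$ on $I_i^-\cup I_i^+$, and $G^\ast$ must equal $X_{(i)}$ on the remaining part of $((i-1)/n,i/n]$; by continuity this pins $G^\ast=\widehat{G}^\star_K$ on $[0,1]$, hence $G_{\sharp U}=\widehat{G}^\star_{K\sharp U}=:\nu^\star$. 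To finish, note that $\nu^\star$ has support $[X_{(1)},X_{(n)}]$ and absolutely continuous part of density exactly $1/K$ on $(X_{(1)},X_{(n)})$; by the area formula the a.c.\ density of $G_{\sharp U}$ at $y$ equals $\sum_{u\in G^{-1}(y)}|G'(u)|^{-1}\ge K^{-1}\#\{u\in G^{-1}(y):G'(u)\neq 0\}$, so for a.e.\ $y\in(X_{(1)},X_{(n)})$ the level set $G^{-1}(y)$ contains at most one non-critical point. Since $G(\{G'=0\})$ is Lebesgue-null for Lipschitz $G$, this excludes any strict interior extremum of $G$, i.e.\ $G$ is monotone; and a continuous monotone map is uniquely determined by its pushforward of $U$ (its generalized inverse is the c.d.f.\ of that pushforward). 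Therefore $G=\widehat{G}^\star_K$ when $G$ is non-decreasing, and applying this to $G\circ S$ gives $G=\widehat{G}^\star_K\circ S$ otherwise.

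The main obstacle is the uniqueness step: converting ``a.e.\ level has at most one non-critical preimage'' into genuine monotonicity of $G$ requires careful bookkeeping of the Lebesgue-null set of critical values, and one must verify that the degenerate regimes — ties among the $X_i$'s, or $K=n\max_i(X_{(i+1)}-X_{(i)})$ exactly, where some atoms of $\nu^\star$ may collapse — do not break the continuity argument pinning $G^\ast=\widehat{G}^\star_K$ nor the final rigidity. The Lipschitz-rearrangement lemma, though elementary, is also worth isolating, since it is the bridge from the min--max problem to the one-dimensional quantile picture on which everything rests.
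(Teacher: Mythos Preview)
Your proof is correct and takes a genuinely different, cleaner route than the paper's. The paper works directly with an arbitrary $G$: it reduces via the Monge formulation to an infimum over measurable partitions $C_1,\dots,C_n$ of $[0,1]$ with $\lambda_1(C_i)=1/n$, then locates points $u_i$ with $G(u_i)=(X_{(i)}+X_{(i+1)})/2$ by the intermediate value theorem and builds intervals $A_i^\pm$ around them; because $G$ need not be monotone and its range need not contain all the $X_{(i)}$, this forces a case analysis indexed by $\ell_1,\ell_2$ and a separate treatment of the complement of $\bigcup_i A_i$. You instead pass to the quantile function $G^\ast$ of $G_{\sharp U}$ and exploit the explicit one-dimensional formula $W_1(G_{\sharp U},\mu_n)=\int_0^1|G^\ast-F_{\mu_n}^{-1}|$, so the intervals $I_i^\pm$ sit canonically at $i/n$, no range truncation or case split is needed, and the lower bound collapses to a two-line triangle-inequality/Lipschitz computation. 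The price is the Lipschitz-rearrangement lemma; note that your phrase ``$v_1\le G\le v_2$ on the segment between them'' does not follow from bare IVT---you should pick the \emph{last} time $G$ hits $v_1$ before the \emph{first} subsequent time it hits $v_2$---but the conclusion is correct. For uniqueness, the paper argues by revisiting the equality cases in its inequalities and asserting that the resulting three constraints (visiting each $X_{(i)}$, moving at speed $K$ between them, and spending Lebesgue time $1/n$ in each Voronoi cell) single out $\widehat G^\star_K$ and $\widehat G^\star_K\circ S$; your approach---first pinning $G^\ast=\widehat G^\star_K$ from the equality cases, then using the area formula and Sard to force monotonicity of $G$---is more systematic and actually explains why $G$ cannot oscillate, a point the paper leaves implicit.
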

    Theorem~\ref{theorem1} states that there are only two minimizers in ${\widehat{\mathscr{G}}}_K$. Moreover, the two distributions ${\widehat G}^{\star}_{K\sharp  U}$ and $({\widehat G}_{K}^{\star}\circ S)_{\sharp  U}$ are identical. We thus conclude that in the univariate setting, the distribution output by the WGAN Problem~\eqref{eq:studyGANs} exists and is unique, provided $K$ is large enough. It is important to note that the distribution ${\widehat G}^{\star}_{K\sharp  U}$ has atoms at the $X_i$'s, of respective sizes
    \begin{align}\label{eq:measure_atoms}
        \frac{1}{n}- \frac{X_{(2)}-X_{(1)}}{2K} \quad \text{for } X_{(1)} &,\quad \frac{1}{n}-\frac{X_{(n)}-X_{(n-1)}}{2K} \quad \text{for } X_{(n)}, \nonumber \\
        \frac{1}{n}-\frac{X_{(i+1)}-X_{(i-1)}}{2K} \quad &\text{for } X_{(i)}, \ i=2,\hdots,n-1,
    \end{align}
    and that it is absolutely continuous with respect to the Lebesgue measure elsewhere. 
    
    Being able to describe the minimizers of Problem~\eqref{eq:studyGANs} helps us to better understand the overall objective of WGANs when playing with different parameters. For example, when the dataset (and thus the sample size $n$) is kept fixed, the $1$-Wasserstein distance $W_1({\widehat G}^{\star}_{K\sharp  U},\mu_n)$ decreases towards $0$ as the Lipschitz constant $K$ gets bigger. This is easily explained by the fact that when $K$ increases, the class of generative distributions increases as well, and the measure of the atoms in~\eqref{eq:measure_atoms} of ${\widehat G}^{\star}_{K\sharp  U}$ grows towards $1$. In this regime, the optimal distribution ${\widehat G}^{\star}_{K\sharp  U}$ tends to memorize the data samples, that is the WGAN overfits the data. On the opposite, the measures of the different atoms $X_{(i)}$, $i \in \{1, \hdots, n\}$, decrease with the distance $X_{(i+1)}-X_{(i-1)}$. Consequently, any outlier data, far from its nearest neighbors, will be less sampled by the optimal distribution.
    
    In order to illustrate the result of Theorem~\ref{theorem1}, we consider a synthetic setting where both the class of generative and discriminative functions are replaced by parametric neural networks. The generator is composed of ReLU neural networks of respective depths $3$ (Figure~\ref{fig:dim1a} and~\ref{fig:dim1b}) and $5$ (Figure~\ref{fig:dim1c} and~\ref{fig:dim1d}), with a width $100$, while the discriminator is composed of  ReLU neural networks of depth $5$ and width $100$. The true distribution is assumed to be uniform on $[0,10]$. We train a WGAN architecture in the setting of both $n=5$ and $n=9$, with the choice $K=50$ (we choose $K$ big enough such that $K \geqslant n \max \limits_{i=1,\hdots,n-1} (X_{(i+1)}-X_{(i)})$). The Lipschitz constraint on the generator is implemented using a gradient penalty similar to the one used for the discriminator in \citet{gulrajani2017improved}. The obtained results are depicted in Figure~\ref{fig:dim1}. 
    \begin{figure}[!h]
        \centering
        \subfloat[Fitting $n=5$ data points with a generator depth equal to $3$. $W_1({\widehat G}^{\star}_{K\sharp  U},\mu_n)=0.080$ and $W_1(G^{\theta}_{\sharp  U},\mu_n)=0.501$.\label{fig:dim1a}]
        {
            \includegraphics[width=0.44\linewidth]{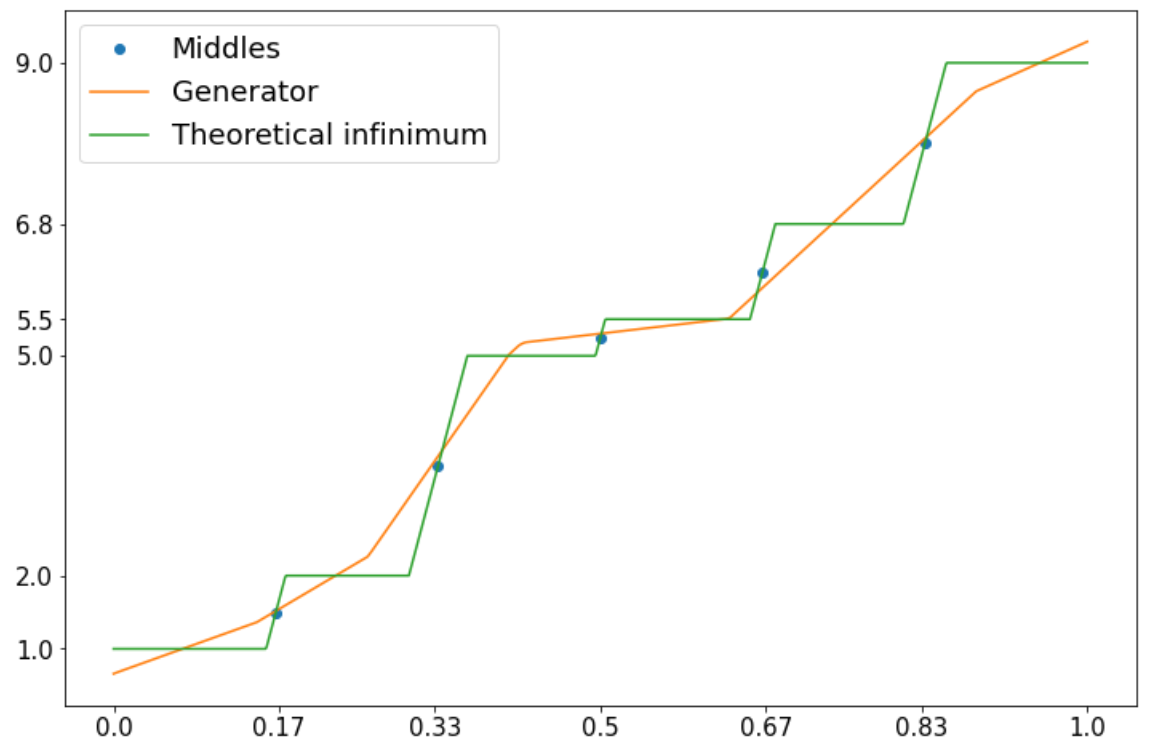}
        }\hfill
        \subfloat[Fitting $n=5$ data points with a generator depth equal to $5$. $W_1({\widehat G}^{\star}_{K\sharp  U},\mu_n)=0.080$ and $W_1(G^{\theta}_{\sharp  U},\mu_n)=0.165$.\label{fig:dim1c}]
        {
            \includegraphics[width=0.44\linewidth]{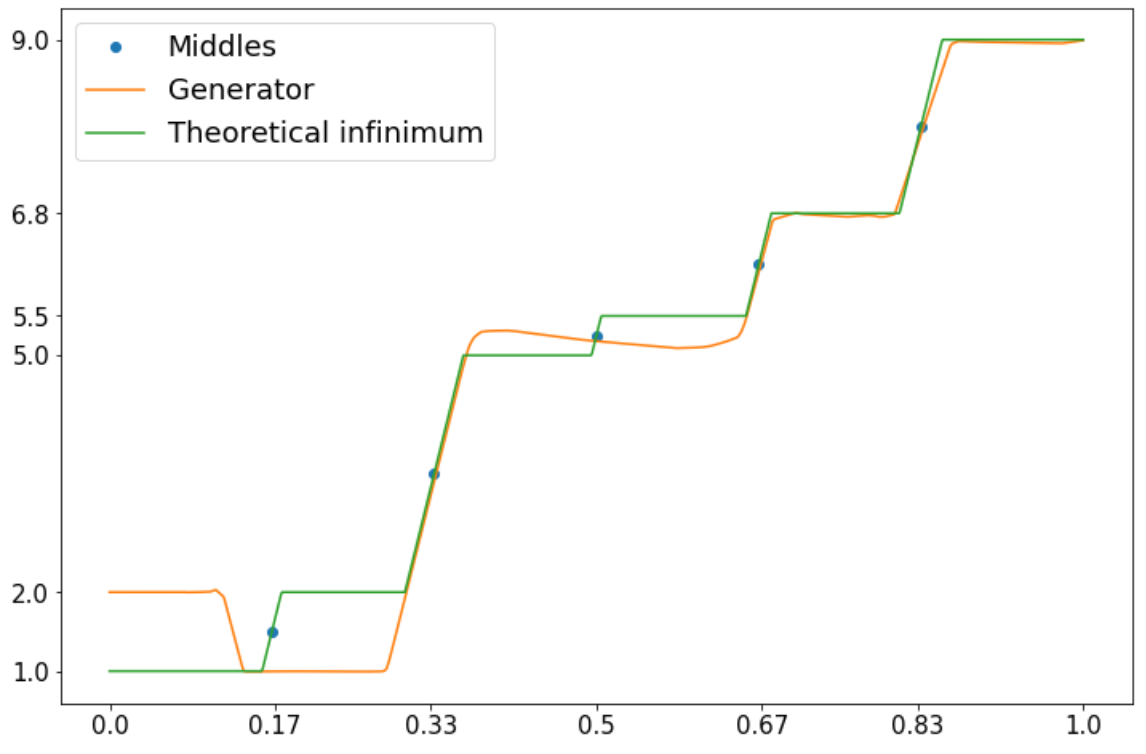}
        }\vfill
        \subfloat[Fitting $n=9$ data points with a generator depth equal to $3$. $W_1({\widehat G}^{\star}_{K\sharp  U},\mu_n)=0.033$ and $W_1(G^{\theta}_{\sharp  U},\mu_n)=0.280$.\label{fig:dim1b}]
        {
            \includegraphics[width=0.44\linewidth]{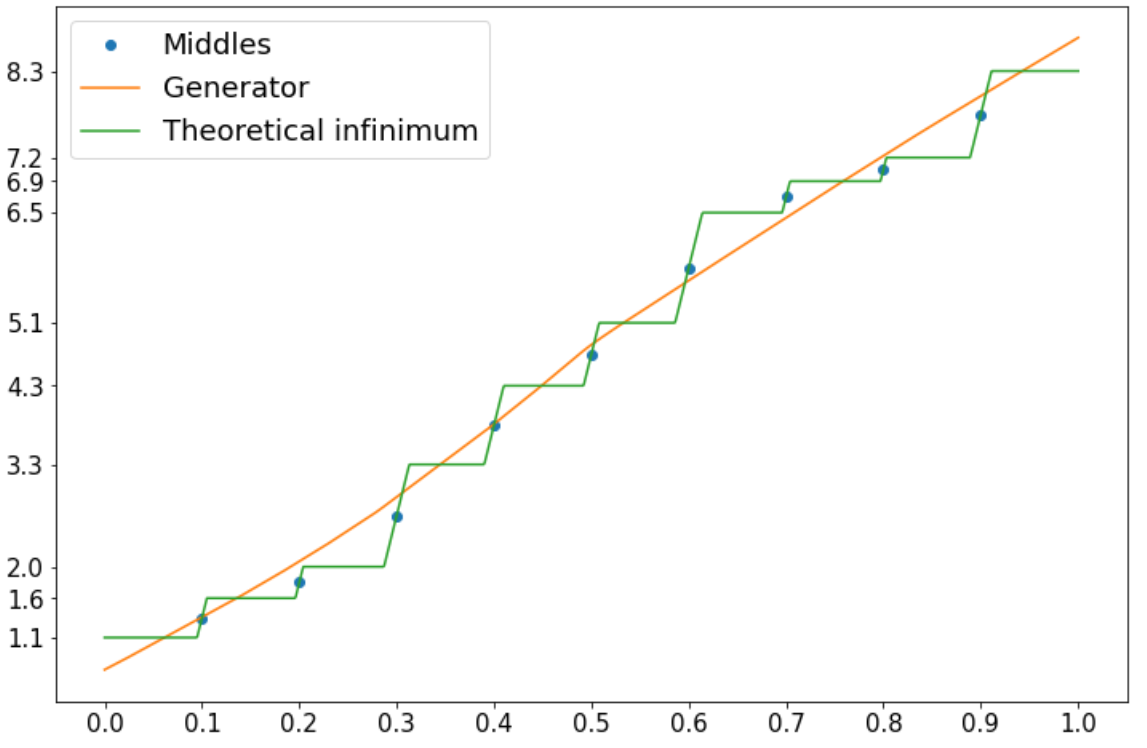}
        }
        \hfill
        \subfloat[Fitting $n=9$ data points with a generator depth equal to $5$. $W_1({\widehat G}^{\star}_{K\sharp  U},\mu_n)=0.033$ and $W_1(G^{\theta}_{\sharp  U},\mu_n)=0.210$.\label{fig:dim1d}]
        {
            \includegraphics[width=0.44\linewidth]{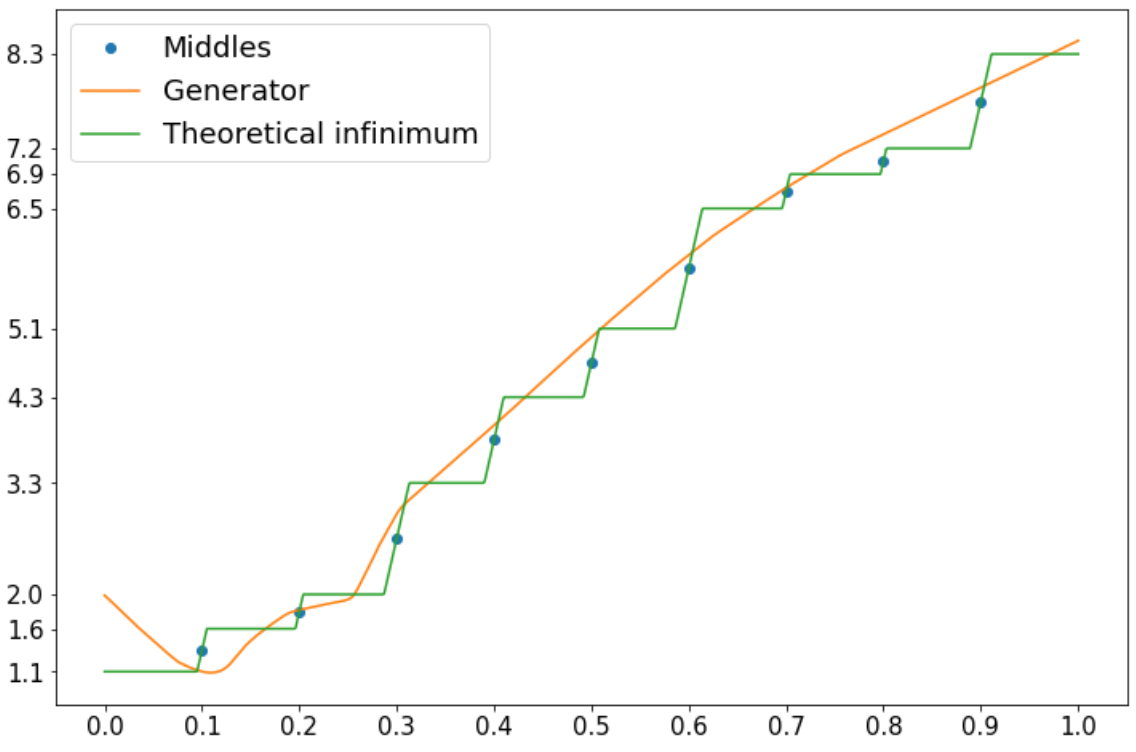}
        }
        \caption{Output functions $G^{\theta}$ of the WGANs compared with the optimal ${\widehat G}^{\star}_{K}$. \label{fig:dim1}}
    \end{figure}
    
    We see that the parametric WGANs (denoted by $G^{\theta}$) get close to the optimal function ${\widehat G}^{\star}_{K}$ while operating some smoothing. This smoothing is due to the fact that the networks cannot replicate all Lipschitz functions. Therefore, the optimal parametric WGANs have a higher $1$-Wasserstein distance to the empirical distribution than ${\widehat G}^{\star}_{K}$. Interestingly, as the number of samples increases and the architecture remains fixed, it gets more complicated for the generator to memorize the dataset. As expected, the results of the parametric WGANs get better as the depth of the generator increases.
    
    Changing a little bit the way of looking at the problem, one may take an asymptotic point of view in the sample size $n$ and analyze the asymptotic behavior of the $1$-Wasserstein distance $W_1({\widehat G}^{\star}_{K\sharp  U},\mu)$, as done in Section~\ref{section:introduction}. However, a major difference is that, in accordance with Theorem~\ref{theorem1}, the Lipschitz constant $K$ is now viewed as a data-dependent random variable larger than $\underline {K}_1$, where 
    \[\underline {K}_1:=n\max_{i=1,\hdots,n-1} (X_{(i+1)}-X_{(i)}).\]  
    \begin{proposition}
    \label{K1d1}
    Assume that $S(\mu)=[A,B]$, where $-\infty<A<B<\infty$. \begin{enumerate}
        \item If $\mu$ admits a strictly positive probability density on $[A,B]$, continuously differentiable, with a unique minimum on $[A,B]$, then
    \[\frac{1}{\underline {K}_1}=\mathscr O((\log n)^{-1})\text{ a.s.}\]
        \item For all $K\geqslant \underline {K}_1$, 
    \[ W_1({\widehat G}^{\star}_{K\sharp  U},\mu)=\mathscr O(n^{-1/2})\text{ in probability}.\]
    \end{enumerate}
    \end{proposition}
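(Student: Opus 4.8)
The two parts are largely independent; I would settle part~(2) first, as it is elementary, and then attack the spacings estimate of part~(1).

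\emph{Part (2).} The starting point is the triangle inequality
\[
W_1({\widehat G}^{\star}_{K\sharp U},\mu)\ \le\ W_1({\widehat G}^{\star}_{K\sharp U},\mu_n)+W_1(\mu_n,\mu).
\]
For the first term I would invoke Proposition~\ref{proposition1}, which gives $W_1({\widehat G}^{\star}_{K\sharp U},\mu_n)=\frac{1}{4K}\sum_{i=1}^{n-1}(X_{(i+1)}-X_{(i)})^2$. Bounding each square by $\big(\max_j(X_{(j+1)}-X_{(j)})\big)(X_{(i+1)}-X_{(i)})$ and summing yields $\frac{1}{4K}\big(\max_j(X_{(j+1)}-X_{(j)})\big)(X_{(n)}-X_{(1)})\le\frac{B-A}{4K}\max_j(X_{(j+1)}-X_{(j)})$; since $K\ge\underline{K}_1=n\max_j(X_{(j+1)}-X_{(j)})$, the first term is at most $(B-A)/(4n)$, deterministically, hence $\mathscr{O}(n^{-1})$. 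For the second term I would use the one-dimensional representation $W_1(\mu_n,\mu)=\int_{\mathbb R}|F_n(x)-F(x)|\,\mathrm dx$, note that the integrand vanishes off $[A,B]$, and bound $\mathbb E|F_n(x)-F(x)|\le\sqrt{F(x)(1-F(x))/n}\le\tfrac12 n^{-1/2}$, so that $\mathbb E\,W_1(\mu_n,\mu)\le(B-A)/(2\sqrt n)$ and $W_1(\mu_n,\mu)=\mathscr{O}(n^{-1/2})$ in probability by Markov's inequality. Adding the two bounds proves the claim.

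\emph{Part (1).} Here I must show that $\liminf_n\underline{K}_1/\log n>0$ a.s., i.e.\ that the largest \emph{interior} spacing is at least of order $n^{-1}\log n$. I would use the probability integral transform: the $U_i:=F(X_i)$ are i.i.d.\ uniform on $[0,1]$, and since $f$ is continuous on the compact $[A,B]$ it is bounded above by some $M<\infty$; the identity $X_{(i+1)}-X_{(i)}=\int_{U_{(i)}}^{U_{(i+1)}}\frac{\mathrm dt}{f(F^{-1}(t))}$ then gives $X_{(i+1)}-X_{(i)}\ge M^{-1}(U_{(i+1)}-U_{(i)})$. Invoking the classical strong law for maximal uniform spacings, $n\max_{1\le i\le n-1}(U_{(i+1)}-U_{(i)})/\log n\to1$ a.s., one gets $\underline{K}_1\ge(1+o(1))M^{-1}\log n$ a.s., hence $1/\underline{K}_1=\mathscr{O}((\log n)^{-1})$ a.s. The extra hypotheses are precisely what upgrades this to the sharp asymptotics $\underline{K}_1/\log n\to1/\min_{[A,B]}f$: localizing around the unique minimizer $x_0$ of $f$, on a $\delta$-neighbourhood $I_\delta$ one has $\sup_{I_\delta}f$ arbitrarily close to $f(x_0)$ as $\delta\downarrow0$ while $\inf_{[A,B]\setminus I_\delta}f>f(x_0)$ for each fixed $\delta$; the $\sim n\delta$ sample points whose images land in $F(I_\delta)$ still have a maximal spacing of order $n^{-1}\log n$, so the corresponding $X$-spacings attain $(1-o(1))(n\sup_{I_\delta}f)^{-1}\log n$, whereas every spacing with midpoint outside $I_\delta$ is strictly smaller; letting $\delta\downarrow0$ pins down the constant.

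The genuinely delicate point is this last localization: one must certify, almost surely and uniformly in $n$, that the maximal $X$-spacing is realized by consecutive order statistics near $x_0$, which requires coupling the strong law for maximal uniform spacings \emph{restricted to sub-intervals} with the quantitative gap $\inf_{[A,B]\setminus I_\delta}f>f(x_0)$ — this is where uniqueness of the minimizer and the $C^1$ control of $f$ near $x_0$ enter — followed by a diagonal argument in $\delta$. If one only wants the $\mathscr{O}$-statement as stated, the short argument above (boundedness of $f$ together with the lower half of the uniform-spacings strong law) already suffices, and I would organise the write-up accordingly, relegating the exact constant to a remark.
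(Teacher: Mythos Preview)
Your argument is correct in both parts. Part~(2) is essentially identical to the paper's proof: the same triangle inequality, the same telescoping bound $W_1({\widehat G}^{\star}_{K\sharp U},\mu_n)\le(B-A)/n$ via $K\ge\underline{K}_1$, and then $W_1(\mu_n,\mu)=\mathscr O(n^{-1/2})$---the only cosmetic difference being that you supply an elementary moment bound through $\int|F_n-F|$ where the paper simply cites \citet[Theorem~1]{FournierGuillin2015}.

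Part~(1) is where your write-up diverges from the paper. The paper does not argue at all: it just invokes \citet[Theorem~2]{Deheuvels84}, which under the stated hypotheses (strictly positive $C^1$ density with a unique minimum) gives the sharp almost-sure asymptotics $\underline{K}_1/\log n\to 1/\min_{[A,B]}f$. Your route---probability integral transform, the bound $X_{(i+1)}-X_{(i)}\ge M^{-1}(U_{(i+1)}-U_{(i)})$ from $f\le M$, and the strong law for maximal uniform spacings---is more elementary and, as you correctly observe, delivers the $\mathscr O((\log n)^{-1})$ statement from boundedness of $f$ alone; the extra regularity is only needed for the exact constant. This is a useful clarification the paper does not make. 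The trade-off is that your localization sketch for the sharp constant is exactly the content of Deheuvels' theorem, so if you want that constant it is cleaner to cite the reference as the paper does rather than redo the diagonal argument. One small point worth making explicit in your write-up: the classical strong law is usually stated for the $n{+}1$ spacings including the two edge gaps, whereas $\underline{K}_1$ involves only the $n{-}1$ interior spacings; the passage is harmless (e.g.\ because the third-largest of the $n{+}1$ exchangeable spacings already lies among the interior ones and still satisfies $n M_n^{(3)}/\log n\to 1$ a.s.), but it deserves one sentence.
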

    The proof of Proposition~\ref{K1d1} reveals that $W_1({\widehat G}^{\star}_{K\sharp  U},\mu_n)=\mathscr O(n^{-1})$ in probability, which should be compared with the rate $W_1(\mu,\mu_n)=\mathscr O(n^{-1/2})$ \citep[][Theorem 1]{FournierGuillin2015}. Therefore, the speed of convergence to $0$ of $W_1({\widehat G}^{\star}_{K\sharp  U},\mu)$ is significantly slowed down by the term $W_1(\mu,\mu_n)$. Besides, the assumptions on $\mu$ are made here for simplicity, and many other cases may be handled similarly by connecting $\underline {K}_1$ to statistical results regarding the analysis of maximal spacings. For example, built on results from Extreme Values Theory, \citet[][Theorem 1 or Example 1]{Deheuvels86} entails that when $\mu$ is standard Gaussian, then, in probability,
    \[\frac{1}{\underline {K}_1}=
    \mathscr O\Big(\frac{\sqrt{\log n}}{n}\Big) \quad \mbox {and} \quad W_1({\widehat G}^{\star}_{K\sharp  U},\mu)=\mathscr O (n^{-1/2}).\]
    Similar results, yet with different rates, may be obtained for the Cauchy and Gamma distributions \citep[][Example $2$ and Example $3$]{Deheuvels86}. The general message is that, provided the class of candidate distributions grows with the sample size $n$, then the WGANs can asymptotically recover the target distribution $\mu$.
    \section{A general result in semi-discrete optimal transport}\label{sec:opt_transport}
    We now turn to the multivariate case, assuming that the observations $X_1,\hdots ,X_n$ are i.i.d.~according to an unknown distribution $\mu$ on $\mathbb{R}^d$, $d>1$. As we will see below, characterizing the optimal transport problem is much more complicated in the multivariate setting, and requires a more involved analysis. The key is to better describe the optimal transport function between $G_{\sharp  U}$ and $\mu_n$, keeping in mind that for $d>1$, $G_{\sharp  U}$ is never absolutely continuous with respect to the Lebesgue measure $\lambda_d$ on $\mathbb{R}^d$. Therefore, we need to extend existing results to larger classes of distributions.
    
    Recall, as we saw in the introduction, that for two probability measures $\pi_1$ and $\pi_2$ on $\mathbb R^d$,
    \begin{equation*}
    W_1(\pi_1, \pi_2)=\inf_{\pi \in \Pi (\pi_1, \pi_2)}\int_{\mathbb R^d \times \mathbb R^d} \|x-y\|{\rm d}\pi(x,y),
    \end{equation*}
    where $\Pi(\pi_1, \pi_2)$ denotes the collection of all transport plans between $\pi_1$ and $\pi_2$, that is, the joint probability measures $\pi$ on $\mathbb R^d \times \mathbb R^d$ with marginals $\pi_1$ and $\pi_2$. When $\pi_1$ is {\it nonatomic}, then, according to \citet[][Theorem B]{pratelli2020existence},
    \begin{equation}
    \label{W1Monge}
    W_1(\pi_1,\pi_2)=\inf_T \int_{\mathbb R^d} \|x-T(x)\|{\rm d}\pi_1(x),
    \end{equation}
    where the infimum is taken over all measurable functions $T:\mathbb R^d \to \mathbb R^d$ satisfying $T_{\sharp  \pi_1}=\pi_2$. Such a function $T$ is called a transport map from $\pi_1$ to $\pi_2$, and~\eqref{W1Monge} is referred to as the Monge formulation of the $1$-Wasserstein distance. Providing existence and unicity results for transport maps is, in general, a difficult question. It turns out however that in the so-called semi-discrete setting, where $\pi_1$ is absolutely continuous with respect to the Lebesgue measure and $\pi_2=\sum_{i=1}^n \alpha_i\delta_{x_i}$ is 
    discrete ($\alpha_i \geqslant 0$, $\sum_{i=1}^n{\alpha_i}=1$), the Monge problem has a simple and elegant solution in terms of additively weighted Voronoi diagram of $\mathbb R^d$ \citep[e.g.,][]{aurenhammer1998minkowski} around the atoms $\{x_1, \hdots, x_n\}$ of $\pi_2$. Recall that for a vector $w=(w_1, \hdots, w_n) \in \mathbb R^n$ that assigns to each $x_i$ a weight $w_i$, the additively weighted Voronoi tessellation is the set of cells
    \[
    \text{Vor}^w(i) = \big\{x \in \mathbb{R}^d :  \|x-x_i\|-w_i \leqslant \|x-x_j\|-w_j \text{ for all }j\neq i\big\}, \quad i=1, \hdots, n.
    \]
    Now, according to \citet[][Theorem 2 and Theorem 3]{Hartmann2020} (see also \citealp{geiss2013optimally}), there exists in this semi-discrete setting a $\pi_1$-almost surely unique transport map $T^{\star}$ such that $W_1(\pi_1,\pi_2)=\int_{\mathbb R^d} \|x-T^{\star}(x)\|{\rm d}\pi_1(x)$. Noting that the intersection of two boundaries has Lebesgue measure zero (and thus $\pi_1$-measure zero by absolute continuity), this optimal function $T^{\star}$ is defined $\lambda_d$-almost surely and has the form
    \begin{equation}\label{eq:optimalmapHartman}
    T^{\star}(x)=\sum_{i=1}^n x_i \mathds 1\{x \in \text{Vor}^{w^{\star}}(i)\},
    \end{equation}
    where the weight vector $w^{\star}$ is {\it adapted} to $(\pi_1,\pi_2)$ in the sense that $\pi_1(\text{Vor}^{w^{\star}}(i))=\alpha_i$ for all $i \in \{1, \hdots, n\}$. The existence of such an adapted vector is guaranteed by Theorem 3 of \citealp{Hartmann2020}, who also provide an algorithm to compute it.
    
    Returning to the WGAN problem, it seems natural to consider the semi-discrete setting with $\pi_1=G_{\sharp  U}$ and $\pi_2=\mu_n$, and to describe the optimal transport maps between these two distributions in order to gain information on $W_1(G_{\sharp  U},\mu_n)$. Unfortunately, there is no reason for $G_{\sharp  U}$ to be nonatomic and, even if this is the case, it is impossible for this distribution to be absolutely continuous with respect to the Lebesgue measure as soon as $d> 1$. We therefore conclude that none of the above results can be used to characterize the infimum in~\eqref{eq:studyGANs} and that some extensions are needed. In the rest of this section, we address this issue and offer a solution in two steps. First, we prove in Proposition~\ref{proposition2} that the WGAN optimization in Problem~\eqref{eq:studyGANs} can be safely restricted to distributions $G_{\sharp  U}$ that are nonatomic. Second, we provide in Theorem~\ref{theorem2} a solution to the Monge problem under the sole assumption that $\pi_1$ is nonatomic with compact support, getting rid of the absolute continuity requirement. To the extent of our knowledge, this is the first time that such a theorem has been proved, and it therefore provides a new resource in the toolbox of optimal transport theory.
    \begin{proposition}\label{proposition2}
    Let ${\rm Lip}_K^-([0,1],\mathbb R^d)=\{G \in {\rm Lip}_K([0,1],\mathbb R^d) : G_{\sharp  U} \mbox{ is nonatomic}\}$. Then 
    \[ 
    \inf \limits_{G \in {\rm Lip}_K([0,1],\mathbb R^d)} W_1(G_{\sharp  U},\mu_n) = \inf \limits_{G \in {\rm Lip}_K^-([0,1],\mathbb R^d)} W_1(G_{\sharp  U},\mu_n).
    \]
    \end{proposition}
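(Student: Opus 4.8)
The inequality ``$\le$'' in the statement is trivial since $\text{Lip}_K^-([0,1],\mathds R^d)\subseteq \text{Lip}_K([0,1],\mathds R^d)$, so the plan is to establish the reverse inequality by a density argument: I would show that every $G\in\text{Lip}_K([0,1],\mathds R^d)$ can be approximated, arbitrarily well in sup-norm, by some $\tilde G\in\text{Lip}_K([0,1],\mathds R^d)$ whose pushforward $\tilde G_{\sharp U}$ is nonatomic. This suffices: the map $u\mapsto(G(u),\tilde G(u))$ pushes $U$ forward to a coupling of $G_{\sharp U}$ and $\tilde G_{\sharp U}$, so that $W_1(G_{\sharp U},\tilde G_{\sharp U})\le\int_0^1\|G(u)-\tilde G(u)\|\,{\rm d}u\le\|G-\tilde G\|_\infty$ (all the distances here are finite because the measures involved are compactly supported), and the triangle inequality then gives $W_1(\tilde G_{\sharp U},\mu_n)\le W_1(G_{\sharp U},\mu_n)+\|G-\tilde G\|_\infty$. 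Letting $\|G-\tilde G\|_\infty\to0$ and taking the infimum over $G$ yields $\inf_{G\in\text{Lip}_K^-}W_1(G_{\sharp U},\mu_n)\le\inf_{G\in\text{Lip}_K}W_1(G_{\sharp U},\mu_n)$, which is what is needed.

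To construct $\tilde G$ I would proceed in two steps. Step one reduces to piecewise linear generators: given $G$, let $G_N$ be the piecewise linear interpolation of $G$ at the grid $\{j/N:0\le j\le N\}$. Since $\|G(j/N)-G((j-1)/N)\|\le K/N$, every linear piece of $G_N$ has slope of Euclidean norm at most $K$; a continuous map that is piecewise linear with finitely many pieces, all of slope-norm $\le K$, is $K$-Lipschitz, so $G_N\in\text{Lip}_K([0,1],\mathds R^d)$, and $\|G_N-G\|_\infty\le 2K/N$. Step two removes the flat pieces of $G_N$. Such a map has only finitely many maximal intervals on which it is constant, and off these intervals it is linear with nonzero slope on each piece, hence has finite level sets there. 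On each maximal flat interval $[\alpha,\beta]$, of length $\ell=\beta-\alpha>0$ and constant value $c$, replace $G_N$ by the tent $u\mapsto c+\min(u-\alpha,\beta-u)\,\rho\, e_1$, where $e_1$ is a fixed unit vector of $\mathds R^d$ and $\rho=\min\{K,\varepsilon/\ell\}$; this keeps the value $c$ at $\alpha$ and at $\beta$ (so $\tilde G$ remains continuous), introduces only slopes of norm $\rho\le K$, and displaces $G_N$ by at most $\rho\ell/2\le\varepsilon/2$. The resulting $\tilde G$ is continuous and piecewise linear with finitely many pieces, all of slope-norm $\le K$, hence $\tilde G\in\text{Lip}_K([0,1],\mathds R^d)$, and $\|\tilde G-G\|_\infty\le\varepsilon/2+2K/N\le\varepsilon$ for $N$ large enough.

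The point that needs care is checking that $\tilde G_{\sharp U}$ is genuinely nonatomic, i.e., that the modification has not secretly created a new atom: here finite piecewise linearity does the work. Indeed $\tilde G$ consists of finitely many linear pieces, and every one of them has nonzero slope — the pieces inherited from $G_N$ that were not flat, and the two pieces of slope $\pm\rho e_1$ of each tent (with $\rho>0$ since $K>0$ and $\varepsilon>0$). Hence every level set $\tilde G^{-1}(z)$ meets each piece in at most one point, is therefore finite, and has Lebesgue measure zero; consequently $\tilde G_{\sharp U}(\{z\})=\lambda_1(\tilde G^{-1}(z))=0$ for all $z\in\mathds R^d$, so $\tilde G\in\text{Lip}_K^-([0,1],\mathds R^d)$. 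The remaining verifications — that piecewise linear interpolation preserves the Lipschitz constant, that these interpolants converge uniformly to $G$, and the triangle-inequality bookkeeping above — are elementary.
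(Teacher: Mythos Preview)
Your proof is correct and follows the same overall strategy as the paper: approximate an arbitrary $G\in\text{Lip}_K([0,1],\mathds R^d)$ in sup-norm by maps in $\text{Lip}_K^-([0,1],\mathds R^d)$, then pass the $W_1$ distance through via the coupling bound $W_1(G_{\sharp U},\tilde G_{\sharp U})\le\|G-\tilde G\|_\infty$ and the triangle inequality. The paper's construction also plants small tents in the $e_1$ direction on flat stretches of the map, exactly as in your Step~2. The one genuine difference is your Step~1: you first pass to a piecewise-linear interpolant $G_N$ before tenting. This extra step costs almost nothing but buys a clean nonatomicity check---for a piecewise-linear map with finitely many pieces, the only way a level set can have positive Lebesgue measure is through a flat piece, so once every piece has nonzero slope the level sets are finite. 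The paper instead modifies $G$ directly on the non-degenerate connected components of $G^{-1}(\{y:\lambda_1(G^{-1}(y))>0\})$ and asserts that the resulting $G_m$ is nonatomic because it is ``not constant over any non-degenerate interval''; strictly speaking this implication fails for general Lipschitz maps (a level set can be a fat Cantor set, totally disconnected yet of positive measure, in which case the paper's $\mathcal I$ is empty and $G_m=G$). Your piecewise-linear detour sidesteps this subtlety entirely.
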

    In the following, for $w \in \mathbb{R}^n$ and $i\in \{1,\hdots  ,n\}$, we let $\text{Vor}^w(i)$ be the $i$-th weighted Voronoi cell associated with the sample $X_1, \hdots, X_n$. We denote by $\partial \text{Vor}^w(i)$ the boundary of $\text{Vor}^w(i)$ and let $\text{Vor}^w(i)^\circ = \text{Vor}^w(i) \setminus \partial \text{Vor}^w(i)$ be its interior. For any $p\in \{1,\hdots  ,n\}$ and any set $\{j_1,\hdots  ,j_p\}$ where the $j_k$'s are all different and in $\{1,\hdots ,n\}^p$, we let
\begin{equation}\label{gammai1...ip}
        \Gamma_{j_1\hdots  j_p}^w = \bigcap \limits_{k=1}^p \text{Vor}^{w}(j_k) \setminus \Big( \bigcup \limits_{\ell \notin \{j_1,\hdots ,j_p\}}  \text{Vor}^{w}(\ell) \Big).
    \end{equation}
    Observe that each set $\Gamma_{j_1\hdots j_p}^w$ above is the subset of the common boundary of the Voronoi cells $\text{Vor}^{w}(j_1), \hdots, \text{Vor}^{w}(j_p)$ that has no intersection with any other cell $\text{Vor}^w(l)$, for all $l \notin \{j_1,\hdots  ,j_p\}$. Note also that together, the $\Gamma_{j_1\hdots j_p}^w$ (for all $p$ and all different sets $\{j_1,\hdots  ,j_p\}$) form a partition of the set of the boundaries of the Voronoi cells. For a given $w$, we will be interested in the class $\mathscr {\mathscr H}^w$ of functions taking values in the sample $X_1, \hdots, X_n$ defined by
    \begin{align}
    {\mathscr H}^w & =\big\{T:\mathbb{R}^d \to \mathscr \{X_1, \hdots, X_n\} \ : \ \forall x \in \text{Vor}^w(i)^\circ, T(x)=X_i \nonumber\\
    & \hspace{4.5cm}\text{and } \forall x \in \Gamma_{j_1\hdots j_p}^w, T(x) \in \{X_{j_1},\hdots  ,X_{j_p}\}\big\}.\label{Hw}
    \end{align}
    The following result states under which assumptions we can find an optimal transport map from a nonatomic probability measure $\nu$ to the empirical measure $\mu_n$. 
    \begin{proposition}\label{proposition3}
    Let $\nu$ be a probability measure on $\mathbb{R}^d$ with finite first moment. If there exists $w^{\star}\in \mathbb{R}^n$ and $T^{\star}\in \mathscr H^{w^{\star}}$ such that $T_{\sharp  \nu}^{\star} = \mu_n$, then $T^{\star}$ is an optimal transport map from $\nu$ to $\mu_n$.
    \end{proposition}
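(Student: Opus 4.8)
The plan is to certify the optimality of $T^{\star}$ by exhibiting an explicit Kantorovich potential, read off directly from the weight vector $w^{\star}$. Define $\psi:\mathds{R}^d\to\mathds{R}$ by
\[
\psi(x)=\min_{1\leqslant j\leqslant n}\big(\|x-X_j\|-w_j^{\star}\big).
\]
As a minimum of finitely many $1$-Lipschitz functions, $\psi\in{\rm Lip}_1(\mathds{R}^d,\mathds{R})$, and by the very definition of the weighted Voronoi cells one has $\psi(x)=\|x-X_i\|-w_i^{\star}$ for every $x\in\text{Vor}^{w^{\star}}(i)$.

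The heart of the proof is the pointwise identity
\[
\psi(x)-\psi\big(T^{\star}(x)\big)=\|x-T^{\star}(x)\|\qquad\text{for every }x\in\mathds{R}^d.
\]
To see this, fix $x$ and let $i$ be the index with $T^{\star}(x)=X_i$. Since $T^{\star}\in\mathscr{H}^{w^{\star}}$, either $x\in\text{Vor}^{w^{\star}}(i)^\circ$, or $x\in\Gamma_{j_1\ldots j_p}^{w^{\star}}$ for some index set $\{j_1,\ldots,j_p\}$ containing $i$; in both cases $x\in\text{Vor}^{w^{\star}}(i)$, so $\psi(x)=\|x-X_i\|-w_i^{\star}$. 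On the other hand, taking $j=i$ in the minimum gives $\psi(X_i)\leqslant-w_i^{\star}$. Subtracting yields $\psi(x)-\psi(X_i)\geqslant\|x-X_i\|$, and the reverse inequality is merely the $1$-Lipschitz property of $\psi$; hence equality.

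Optimality now follows from weak duality alone. (We may assume $\int_{\mathds{R}^d}\|x\|\,{\rm d}\nu(x)<\infty$, since otherwise $W_1(\nu,\mu_n)=+\infty$ and every transport map is trivially optimal.) The measure $\pi^{\star}=({\rm id},T^{\star})_{\sharp\nu}$ lies in $\Pi(\nu,\mu_n)$, its first marginal being $\nu$ and its second $T^{\star}_{\sharp\nu}=\mu_n$ by hypothesis. For any $\tilde\pi\in\Pi(\nu,\mu_n)$, the inequality $\psi(x)-\psi(y)\leqslant\|x-y\|$ gives
\[
\int\|x-y\|\,{\rm d}\tilde\pi(x,y)\;\geqslant\;\int\big(\psi(x)-\psi(y)\big)\,{\rm d}\tilde\pi(x,y)\;=\;\int\psi\,{\rm d}\nu-\int\psi\,{\rm d}\mu_n,
\]
the last step because the integrand depends only on the marginals. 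By the identity above, $\int\|x-T^{\star}(x)\|\,{\rm d}\nu(x)=\int\big(\psi(x)-\psi(T^{\star}(x))\big)\,{\rm d}\nu(x)=\int\psi\,{\rm d}\nu-\int\psi\,{\rm d}\mu_n$, so $\pi^{\star}$ attains the Kantorovich infimum and $\int\|x-T^{\star}(x)\|\,{\rm d}\nu(x)=W_1(\nu,\mu_n)$. Since any transport map $T$ from $\nu$ to $\mu_n$ induces a plan of cost $\int\|x-T(x)\|\,{\rm d}\nu(x)\geqslant W_1(\nu,\mu_n)$, the map $T^{\star}$ realizes the Monge infimum in~\eqref{W1Monge} and is therefore optimal.

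I do not anticipate a serious obstacle: all the difficulty is compressed into the choice of $\psi$, after which the argument is essentially bookkeeping. The only delicate point is that $T^{\star}(x)=X_i$ must be shown to imply $x\in\text{Vor}^{w^{\star}}(i)$ for \emph{every} $x$ rather than $\nu$-almost every $x$ --- precisely what the definition of $\mathscr{H}^{w^{\star}}$ through the partition $\{\Gamma_{j_1\ldots j_p}^{w^{\star}}\}$ of the Voronoi boundaries ensures --- together with the (routine) measurability of $T^{\star}$, which is constant on each cell of a finite Borel partition of $\mathds{R}^d$. It is worth noting that the proof uses neither the absolute continuity of $\nu$, nor that $w^{\star}$ is adapted to $(\nu,\mu_n)$, nor strong duality, and does not even invoke the nonatomicity of $\nu$ beyond the standing assumption that such a $T^{\star}$ exists.
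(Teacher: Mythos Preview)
Your proof is correct. Both your argument and the paper's exploit the same underlying identity --- that on $\text{Vor}^{w^{\star}}(i)$ one has $\|x-X_i\|-w_i^{\star}=\min_j(\|x-X_j\|-w_j^{\star})$ --- but you and the paper package it differently. The paper never names the potential $\psi$: it fixes a competing transport \emph{map} $T$ and compares pointwise, observing that $\|x-T^{\star}(x)\|-w_{T^{\star}(x)}^{\star}\leqslant\|x-T(x)\|-w_{T(x)}^{\star}$ for every $x$, then integrates and cancels the weight terms using $\int w_{T(x)}^{\star}\,{\rm d}\nu=\int w_{T^{\star}(x)}^{\star}\,{\rm d}\nu$ (both maps push $\nu$ to $\mu_n$). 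This yields optimality only among transport maps, with the passage to $W_1(\nu,\mu_n)$ relying implicitly on the Pratelli result quoted earlier in the paper. Your route --- making $\psi$ explicit and invoking weak Kantorovich duality --- is slightly longer to set up but proves more: $T^{\star}$ is optimal among all couplings, so you recover $\int\|x-T^{\star}(x)\|\,{\rm d}\nu=W_1(\nu,\mu_n)$ directly, without appealing to the Monge--Kantorovich equivalence and hence without truly using the nonatomicity hypothesis (as you correctly remark).
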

    We deduce from Proposition~\ref{proposition3} that in order to state the existence of an optimal transport map, it is enough to show that there exist $w^{\star}\in \mathbb{R}^n$ and $T^{\star}\in \mathscr H^{w^{\star}}$ such that $T_{\sharp  \nu}^{\star} = \mu_n$. This result plays a key role in the proof of the next theorem, which guarantees the existence of an optimal transport map between {\it any} nonatomic probability measure $\nu$ (so, non necessarily absolutely continuous with respect to the Lebesgue measure) and the empirical measure $\mu_n$. It should be stressed that Theorem~\ref{theorem2} also holds if the empirical measure $\mu_n$ is replaced by a more general discrete measure, with a finite number of atoms. The adaptation is easy and is left to the reader.
    \begin{theorem}\label{theorem2}
    Let $\nu$ be a nonatomic probability measure on $\mathbb{R}^d$ with compact support. Then there exists an optimal transport map from $\nu$ to $\mu_n$, which is defined $\lambda_d$-almost everywhere by
    \[
    T^{\star}(x)=\sum_{i=1}^n X_i \mathds 1\{x \in {\rm Vor}^{w^{\star}}(i)\},
    \]
    for some $w^{\star}\in \mathbb R^n$.
    \end{theorem}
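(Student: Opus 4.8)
The plan is to reduce the statement to Proposition~\ref{proposition3}: it suffices to exhibit a weight vector $w^{\star}\in\mathds R^n$ and a map $T^{\star}\in\mathscr{H}^{w^{\star}}$ with $T^{\star}_{\sharp\nu}=\mu_n$, after which both optimality and the claimed $\lambda_d$-a.e.\ form are automatic. (We may and do assume $X_1,\ldots,X_n$ pairwise distinct: coincident data points can be merged into a single weighted atom, which is harmless in view of the remark following the statement.) To produce $w^{\star}$, I would work with the concave potential
\[
g(w)=\int_{\mathds R^d}\min_{1\leqslant i\leqslant n}\big(\|x-X_i\|-w_i\big)\,{\rm d}\nu(x)+\frac1n\sum_{i=1}^n w_i,\qquad w\in\mathds R^n ,
\]
which is finite because ${\rm supp}(\nu)$ is compact, concave and continuous as an integral of a pointwise minimum of affine functions, and invariant under $w\mapsto w+c(1,\ldots,1)$. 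On the slice $\{w:\sum_i w_i=0\}$ one has the crude bound $g(w)\leqslant 2R'-\tfrac{1}{n-1}\|w\|_\infty$, where $R'=\max\big(\sup_{x\in{\rm supp}(\nu)}\|x\|,\ \max_i\|X_i\|\big)$, so $g$ is coercive there and attains its maximum at some $w^{\star}$; by translation invariance $w^{\star}$ is then a global maximizer on $\mathds R^n$.

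Next I would extract the first-order conditions at $w^{\star}$. Writing $I(x)=\argmin_{1\leqslant i\leqslant n}(\|x-X_i\|-w_i^{\star})$, one has $\{x:I(x)=\{i\}\}=\text{Vor}^{w^{\star}}(i)^\circ$ and, for $p\geqslant 2$, $\{x:I(x)=\{j_1,\ldots,j_p\}\}=\Gamma^{w^{\star}}_{j_1\ldots j_p}$, so these finitely many Borel sets partition $\mathds R^d$. For any $S\subseteq\{1,\ldots,n\}$, the right-derivative at $0$ of $t\mapsto g(w^{\star}+t\,\mathds 1_S)$ equals $-\nu(\{x:I(x)\cap S\neq\varnothing\})+|S|/n$ (the integrand is piecewise-linear in $t$ with explicit right-derivative $-\mathds 1\{I(x)\cap S\neq\varnothing\}$, and one passes inside the integral by dominated convergence); since $w^{\star}$ is a maximizer this quantity is $\leqslant 0$, i.e.\ $\nu(\{x:I(x)\cap S\neq\varnothing\})\geqslant |S|/n$ for every $S$. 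These are exactly the Hall (max-flow--min-cut) feasibility conditions for the finite transportation problem that ships the $\nu$-mass of each boundary piece $\Gamma^{w^{\star}}_{j_1\ldots j_p}$ among the atoms $X_{j_1},\ldots,X_{j_p}$ so that $X_i$ receives in total $\tfrac1n-\nu(\text{Vor}^{w^{\star}}(i)^\circ)\geqslant 0$. Hence there exist $m_{i,\Gamma}\geqslant 0$ with $\sum_\Gamma m_{i,\Gamma}=\tfrac1n-\nu(\text{Vor}^{w^{\star}}(i)^\circ)$ for each $i$ and $\sum_i m_{i,\Gamma}=\nu(\Gamma)$ for each piece $\Gamma$. (Equivalently, these numbers can be read directly off $0\in\partial g(w^{\star})$ together with the subdifferential--integral interchange.)

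Finally I would turn this routing into a genuine measurable map using nonatomicity of $\nu$: set $T^{\star}\equiv X_i$ on $\text{Vor}^{w^{\star}}(i)^\circ$, and on each piece $\Gamma^{w^{\star}}_{j_1\ldots j_p}$---on which $\nu$ is nonatomic---partition it into measurable subsets of $\nu$-masses $m_{j_1,\Gamma},\ldots,m_{j_p,\Gamma}$, letting $T^{\star}$ equal $X_{j_\ell}$ on the $\ell$-th one. Then $T^{\star}\in\mathscr{H}^{w^{\star}}$ and $\nu(\{x:T^{\star}(x)=X_i\})=\nu(\text{Vor}^{w^{\star}}(i)^\circ)+\sum_\Gamma m_{i,\Gamma}=\tfrac1n$, i.e.\ $T^{\star}_{\sharp\nu}=\mu_n$. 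Proposition~\ref{proposition3} then gives that $T^{\star}$ is an optimal transport map; and since $\bigcup_i\partial\text{Vor}^{w^{\star}}(i)$ is $\lambda_d$-negligible (it lies in a finite union of hyperboloid branches and hyperplanes), $T^{\star}$ coincides $\lambda_d$-a.e.\ with $x\mapsto\sum_{i=1}^n X_i\,\mathds 1\{x\in\text{Vor}^{w^{\star}}(i)\}$, which is the asserted form.

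The crux---and the reason the classical semi-discrete results do not apply---is that, $\nu$ being merely nonatomic, the Voronoi boundaries may carry positive $\nu$-mass, so one cannot simply assign each closed cell to its atom. Two things require care: first, that probing $g$ in all directions $\mathds 1_S$ really yields the full family of Hall inequalities, not just the weaker per-cell squeeze $\nu(\text{Vor}^{w^{\star}}(i)^\circ)\leqslant\tfrac1n\leqslant\nu(\text{Vor}^{w^{\star}}(i))$; second, that this feasibility, combined with nonatomicity, can be upgraded to an honest map---the fractional flow exists by Hall on the finite bipartite graph of boundary pieces versus atoms, and nonatomicity then lets us split each piece into subsets of prescribed mass. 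By contrast, the coercivity of $g$, which is the only place compactness of ${\rm supp}(\nu)$ enters, is entirely routine.
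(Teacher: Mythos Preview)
Your argument is correct and takes a genuinely different route from the paper's. The paper obtains $w^{\star}$ by an \emph{approximation/compactness} scheme: it mollifies $\nu$ into absolutely continuous measures $\nu_\varepsilon$, invokes the classical semi-discrete result of \citet{Hartmann2020} to get adapted weights $w_\varepsilon$, uses compactness of the support to confine the $w_\varepsilon$'s to a bounded set, extracts a convergent subsequence $w_{\psi(m)}\to w^{\star}$, and then---via a diagonal extraction on the finitely many boundary pieces---shows that the limiting boundary masses $\alpha_{j_1\ldots j_p}(j_i)$ satisfy $\nu(\text{Vor}^{w^{\star}}(i)^\circ)+\sum\alpha_{\cdots}(i)=\tfrac1n$, after which nonatomicity is used exactly as you do. Your approach is instead \emph{variational and intrinsic}: you maximize the dual potential $g$ directly over weights, and read the allocation off the first-order conditions. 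What your route buys is self-containment (no appeal to the absolutely continuous case) and a cleaner structural explanation---the Hall inequalities appear transparently as directional-derivative constraints at the optimum, and feasibility of the boundary allocation is then pure finite combinatorics. What the paper's route buys is that the existence of a maximizing weight is borrowed rather than proved, and the boundary allocation comes ``for free'' as a subsequential limit of the smooth allocations, with no need to invoke Hall or max-flow--min-cut. Both arguments use compact support only to bound the weights and nonatomicity only at the final splitting step.
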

\begin{proof}
    Let $K \subseteq \mathbb{R}^d$ be the compact support of $\nu$. For $\varepsilon \in (0,1]$, we let $\nu_\varepsilon$ be the probability measure on $\mathbb{R}^d$ defined for any Borel subset $A$ by
    \[
    \nu_\varepsilon(A) = \int_{\mathbb{R}^d} \frac{\lambda_d(A\cap B(x,\varepsilon))}{\lambda_d(B(x,\varepsilon))}{\rm d}\nu(x),
    \]
    where $B(x,\varepsilon)$ stands for the closed ball centered at $x$ of radius $\varepsilon$. Observe that $\nu_\varepsilon$ has compact support $K_\varepsilon$, where 
    \[
    K_\varepsilon = \{x \in \mathbb{R}^d :  \exists z \in K \text{ such that } \|x-z\|\leqslant \varepsilon\}.
    \]
    Since, for any Borel subset $A$ such that $\lambda_d(A)=0$ one has $\nu_\varepsilon(A)=0$, we see that $\nu_\varepsilon$ is absolutely continuous with respect to the Lebesgue measure. Thus, according to \citet[][]{Hartmann2020}, there exists $w_\varepsilon=(w_{\varepsilon_1}, \hdots,w_{\varepsilon_n})  \in \mathbb{R}^n$ solution to the Monge problem between $\nu_\varepsilon$ and $\mu_n$. In particular, for each $i \in \{1,\hdots  ,n\}$, $\nu_\varepsilon(\text{Vor}^{w_\varepsilon}(i))=\frac{1}{n}$.
    
    Clearly, adding a constant to each $w_{\varepsilon_i}$ does not change the definition of the cells. Thus, in the sequel, it is assumed that $w_{\varepsilon_1}=0$. Let $C=\max \limits_{i =1,\hdots  ,n} \max \limits_{z\in K_\varepsilon} \ \|X_i-z\|$. If there exists $i\in \{2,\hdots  ,n\}$ such that $w_{\varepsilon_i}> C$, then $\text{Vor}^{w_\varepsilon}(1) \cap K_\varepsilon = \emptyset$. This is not possible since $\nu_\varepsilon(\text{Vor}^{w_\varepsilon}(1))= \frac{1}{n}$. Likewise, if $w_{\varepsilon_i}< -C$, then $\text{Vor}^{w_\varepsilon}(i) \cap K_\varepsilon = \emptyset$. Therefore, we may only consider $w_\varepsilon$'s such that $\|w_\varepsilon\|_\infty \leqslant C$, where $\|\cdot\|_{\infty}$ stands for the supremum norm on $\mathbb R^n$.
    
    Consider now the sequence $(w_{1/m})_{m\in \mathbb{N}^{\star}}$, which, as we have seen, takes its values in a compact set. Thus, there exists a subsequence $(w_{1/\varphi(m)})_{m\in \mathbb{N}^{\star}}$ that converges to some $w^{\star} \in \mathbb{R}^n$. As for now, to lighten the notation, we let $\psi(m) = 1/\varphi(m)$ for all $m\in{\mathbb N}^\star$. Clearly, we have
     \begin{align}
     \nu_{\psi(m)}(\text{Vor}^{w_{\psi(m)}}(i)) & = \int_{\mathbb{R}^d} \mathds{1}\{x \in \text{Vor}^{w^{\star}}(i)^c  \} \frac{\lambda_d\big(\text{Vor}^{w_{\psi(m)}}(i)\cap B(x,\psi(m))\big)}{\lambda_d\big(B(x,\psi(m))\big)}{\rm d}\nu(x)\nonumber\\
    & \quad + \int_{\mathbb{R}^d} \mathds{1}\{x \in \text{Vor}^{w^{\star}}(i)^\circ  \} \frac{\lambda_d\big(\text{Vor}^{w_{\psi(m)}}(i)\cap B(x,\psi(m))\big)}{\lambda_d\big(B(x,\psi(m))\big)}{\rm d}\nu(x)\nonumber\\
    & \quad + \int_{\mathbb{R}^d} \mathds{1}\{x \in \partial \text{Vor}^{w^{\star}}(i)  \} \frac{\lambda_d\big(\text{Vor}^{w_{\psi(m)}}(i)\cap B(x,\psi(m))\big)}{\lambda_d\big(B(x,\psi(m))\big)}{\rm d}\nu(x).\label{3I}
    \end{align}
    For $x \in  \text{Vor}^{w^{\star}}(i)^c$ and all $m$ large enough, \[\frac{\lambda_d\big(\text{Vor}^{w_{\psi(m)}}(i) \cap B(x,\psi(m))\big)}{\lambda_d \big( B(x,\psi(m))\big)}=0.
    \]
    Therefore, by dominated convergence, the first integral in identity~\eqref{3I} tends to $0$ as $m$ tends to infinity. Similarly, for $x \in  \text{Vor}^{w^{\star}}(i)^\circ$ and all $m$ large enough, 
    \[
    \frac{\lambda_d\big(\text{Vor}^{w_{\psi(m)}}(i) \cap B(x,\psi(m))\big)}{\lambda_d \big( B(x,\psi(m))\big)}=1.
    \]
    Thus, by dominated convergence, the second integral tends to $\nu(\text{Vor}^{w^{\star}}(i)^\circ )$. The analysis of the third integral in~\eqref{3I} is more delicate and is done by carefully studying each part of the boundary $\partial \text{Vor}^{w^{\star}}(i)$. For any $p\in \{1,\hdots  ,n\}$ and $j_1,\hdots  ,j_p$ all different, let
    \begin{equation*}
        \Gamma_{j_1\hdots  j_p}^{w^\star} = \bigcap \limits_{k=1}^p \text{Vor}^{w^\star}(j_k) \setminus \Big( \bigcup \limits_{\ell \notin \{j_1,\hdots ,j_p\}}  \text{Vor}^{w^\star}(\ell) \Big).
    \end{equation*}
    Using the notation
    \[
    \alpha_{j_1\hdots  j_p}(i)_{\psi(m)} := \int_{\mathbb{R}^d}\mathds{1}\{ x \in \Gamma_{j_1\hdots j_p}^{w^\star} \} \frac{\lambda_d\big(\text{Vor}^{w_{\psi(m)}}(i)\cap B(x,\psi(m))\big)}{\lambda_d\big(B(x,\psi(m))\big)}{\rm d}\nu(x),
    \]
    we see that
    \[\nu(\Gamma_{j_1\hdots j_p}^{w^\star})=\sum \limits_{i=1}^n \alpha_{j_1\hdots  j_p}(i)_{\psi(m)}.
    \]
    Observe that for $i \notin \{j_1,\hdots ,j_p\}$,  $\alpha_{j_1\hdots j_p}(i)_{\psi(m)} \to 0$ as $m$ tends to infinity, since for all $ x\in  \Gamma_{j_1\hdots j_p}^{w^\star}$,
    \[
    \text{Vor}^{w_{\psi(m)}}(i) \cap B(x,\psi(m)) =\emptyset
    \]
    for all $m$ large enough. Moreover, $\alpha_{j_1\hdots j_p}(j_1)_{\psi(m)} \in [0,1/n]$. Thus, we can extract a subsequence $(\psi_1(m))_{m\in \mathbb{N}^{\star}}$ such that $\alpha_{j_1\hdots j_p}(j_1)_{\psi_1(m)}$ converges to some $\alpha_{j_1\hdots j_p}(j_1)$ as $m$ tends to infinity. Likewise, we can extract a subsequence $\psi_{12}(m)$ such that  $\alpha_{j_1\hdots j_p}(j_2)_{\psi_{12}(m)}$ converges to some $\alpha_{j_1\hdots j_p}(j_2)$. Repeating the same procedure, we obtain a subsequence $\psi_{1\hdots p}(m)$ such that each $\alpha_{j_1\hdots j_p}(j_k)_{\psi_{1\hdots p}(m)}$, $k \in \{1,\hdots,p\}$, converges to $\alpha_{j_1\hdots j_p}(j_k)$ as $m$ tends to infinity. In particular, 
    \[\nu(\Gamma_{j_1\hdots j_p}^{w\star})=\sum \limits_{k=1}^p \alpha_{j_1\hdots j_p}(j_k).
    \]
    
    Starting from the subsequence $\psi_{1\hdots p}(m)$, we may repeat the previous exercise for all sets $\Gamma_{j_1\hdots j_p}^{w^\star}$, where $p\in \{1,\hdots ,n\}$, $j_1,\hdots ,j_p$ are all different, and the subsequence $\Psi(m)$ is such that any $\alpha_{j_1\hdots j_p}(j_k)_{\Psi(m)}$ converges to some $\alpha_{j_1\hdots j_p}(j_k)$, for all $j_1,\hdots ,j_p$. We conclude that there exists a subsequence of the third integral in~\eqref{3I} that converges to $\sum \limits_{p=1}^n\sum \limits_{j_1,\hdots, j_p} \alpha_{j_1\hdots j_p}(i)$. Since, for $i\in \{1,\hdots ,n\}$, $\nu_{\Psi(m)}(\text{Vor}^{w_{\Psi(m)}}(i))=\frac{1}{n}$ for all $m$, we have, letting $m\to\infty$,
    \begin{equation}
    \nu(\text{Vor}^{w^{\star}}(i)^\circ)+ \sum \limits_{p=1}^n\sum \limits_{j_1,\hdots,j_p} \alpha_{j_1\hdots j_p}(i)=\frac{1}{n}.\label{transportmap}
    \end{equation}
    Now, cut each $\Gamma_{j_1\hdots j_p}^{w^\star}$ into arbitrarily $p$ disjoints parts $A_{j_1\hdots j_p}(j_k)$ such that $\nu(A_{j_1\hdots j_p}(j_k)) = \alpha_{j_1\hdots j_p}(j_k)$ (this is always possible since $\nu$ is nonatomic). Let $T^{\star} : \mathbb{R}^d \to \{X_1, \hdots, X_n\}$ be defined by
    \begin{equation*} 
    T^{\star}(x) = \left\{
    \begin{array}{ll}
     X_i & \text{for } x \in \text{Vor}^{w^{\star}}(i)^\circ \\
    X_{j_k} & \text{for }x \in A_{j_1\hdots j_p}(j_k).
    \end{array}
    \right.
    \end{equation*} 
    Then $T^{\star} \in \mathscr H^{w^{\star}}$ and, by identity~\eqref{transportmap}, $T^{\star}_{\sharp  \nu}=\mu_n$. This, together with Proposition~\ref{proposition3}, concludes the proof of the theorem.
\end{proof}
While the expression of the optimal transport map as given in Theorem~\ref{theorem2} (for nonatomic source measure) and the one from \citet{Hartmann2020}, as recalled in equation \eqref{eq:optimalmapHartman} (for absolutely continuous source measure), are the same, there is a significant difference in their definition at the boundaries of the cells. Indeed, these boundaries have Lebesgue measure zero. Therefore, when the source measure is absolutely continuous, the optimal mapping can take any values at the boundaries. However, when the source is assumed to be only nonatomic, the boundaries may have strictly positive measure. Consequently, the choice of values for an optimal mapping at the boundaries should be made with care. In the proof of Theorem~\ref{theorem2}, we show that for each set $\Gamma_{j_1\hdots j_p}^{w^\star}$ and each $i\in \{1,\hdots n\}$, there exist weights $\alpha_{j_1\hdots j_p}(i)\geq 0$ such that
 \[\nu(\Gamma_{j_1\hdots j_p}^{w\star})=\sum \limits_{k=1}^p \alpha_{j_1\hdots j_p}(j_k)
    \]
and 
\[
\nu(\text{Vor}^{w^{\star}}(i)^\circ)+ \sum \limits_{p=1}^n\sum \limits_{j_1,\hdots,j_p} \alpha_{j_1\hdots j_p}(i)=\frac{1}{n}.
\]
Then, cutting each subset $\Gamma_{j_1\hdots j_p}^{w^\star}$ into $p$ arbitrarily disjoint parts $A_{j_1\hdots j_p}(j_k)$ such that $\nu(A_{j_1\hdots j_p}(j_k)) = \alpha_{j_1\hdots j_p}(j_k)$ and defining $T^{\star} : \mathbb{R}^d \to \{X_1, \hdots, X_n\}$ by
    \begin{equation*} 
    T^{\star}(x) = \left\{
    \begin{array}{ll}
     X_i & \text{for } x \in \text{Vor}^{w^{\star}}(i)^\circ \\
    X_{j_k} & \text{for }x \in A_{j_1\hdots j_p}(j_k),
    \end{array}
    \right.
    \end{equation*} 
we obtain that $T^{\star}$ is an optimal transport map.

Combining the result of Proposition~\ref{proposition2} with Theorem~\ref{theorem2}, we can now properly characterize the $1$-Wasserstein distance between the optimal distribution $G_{\sharp U}$ derived from ${\rm Lip}_K([0,1],\mathbb R^d)$ and the empirical measure $\mu_n$. 
    \section{Finite sample analysis in a multivariate output space} \label{sec:multi_dim_observations}
    We are now ready to analyze Problem~\eqref{eq:studyGANs} in the more realistic multivariate setting. In the remainder of the section, it is therefore assumed that the observations $X_1, \hdots, X_n$ take their values in $\mathbb R^d$ with $d>1$, while the latent space still has dimension 1. Following the schema of Section~\ref{1DO}, we first define a candidate function ${\widehat G}_K^\star \in \text{Lip}_K ([0,1],\mathbb R^d)$, compute $W_1({\widehat G}_{K\sharp  U}^\star,\mu_n)$ in Proposition~\ref{proposition4}, and finally show in Theorem~\ref{theorem2} that ${\widehat G}^{\star}_{K}$ solves Problem~\eqref{eq:studyGANs} in a large subset of $\text{Lip}_K ([0,1],\mathbb R^d)$.
    Finally, similarly to Section~\ref{1DO}, we conclude with an asymptotic analysis of $W_1({\widehat G}_{K\sharp  U}^\star,\mu_n)$ when $K$ is a function of the sample size $n$.
    
    \subsection{Construction of ${\widehat G}^{\star}_{K}$}
    In the multivariate setting, the shortest path among the $n$ data samples $X_i$, $i \in \{1, \hdots, n\}$, plays an essential role in the definition of the optimal ${\widehat G}_{K}^{\star}$. The set of paths connecting all data points $X_1, \hdots, X_n$, while minimizing the sum of the squared Euclidean distances, is defined as follows:
    \begin{equation}\label{shortestpath}
    (k,\sigma) \in \argmin \Big\{ \sum \limits_{i=1}^{n+k'-1} \|X_{\sigma'(i+1)}-X_{\sigma'(i)}\|^2 : k' \in \mathbb{N}, \sigma' \in \mathscr S_{k'}\Big\},
    \end{equation}
    where $\mathscr S_{k'}$ denotes the set of all discrete functions $\sigma':\{1,\hdots,n+k'\} \to \{1, \hdots, n\}$ such that $\sigma'(\{1,\hdots,n+k'\}) = \{1,\hdots,n\}$ and $\sigma'(j)\neq \sigma'(j+1)$. Note that such a pair $(k,\sigma)$ may not be unique and keep in mind that $\sigma$ depends on $k$. 
    
    An important remark is that any shortest path (with a squared norm) is allowed to visit several times the same point (i.e., $k>0$). This is a consequence of the fact that the squared Euclidean distance does not verify the triangle inequality. Note also that the number of visits to the point $X_i$ is equal to $|\sigma^{-1}(i)|$. An illustration of four shortest paths in dimension $2$ is provided in Figure~\ref{fig:squared_euc_distance}. On the top, every single data point is visited once (i.e., $k=0$ in formula~\eqref{shortestpath}), contrary to the two examples in the bottom, where a point is visited twice (i.e., $k=1$). 
    
    \begin{figure}[h]
       \centering
        \subfloat[Shortest path with $n=4$, $k=0$ in~\eqref{shortestpath}.]
        {
            \includegraphics[width=0.425\textwidth]{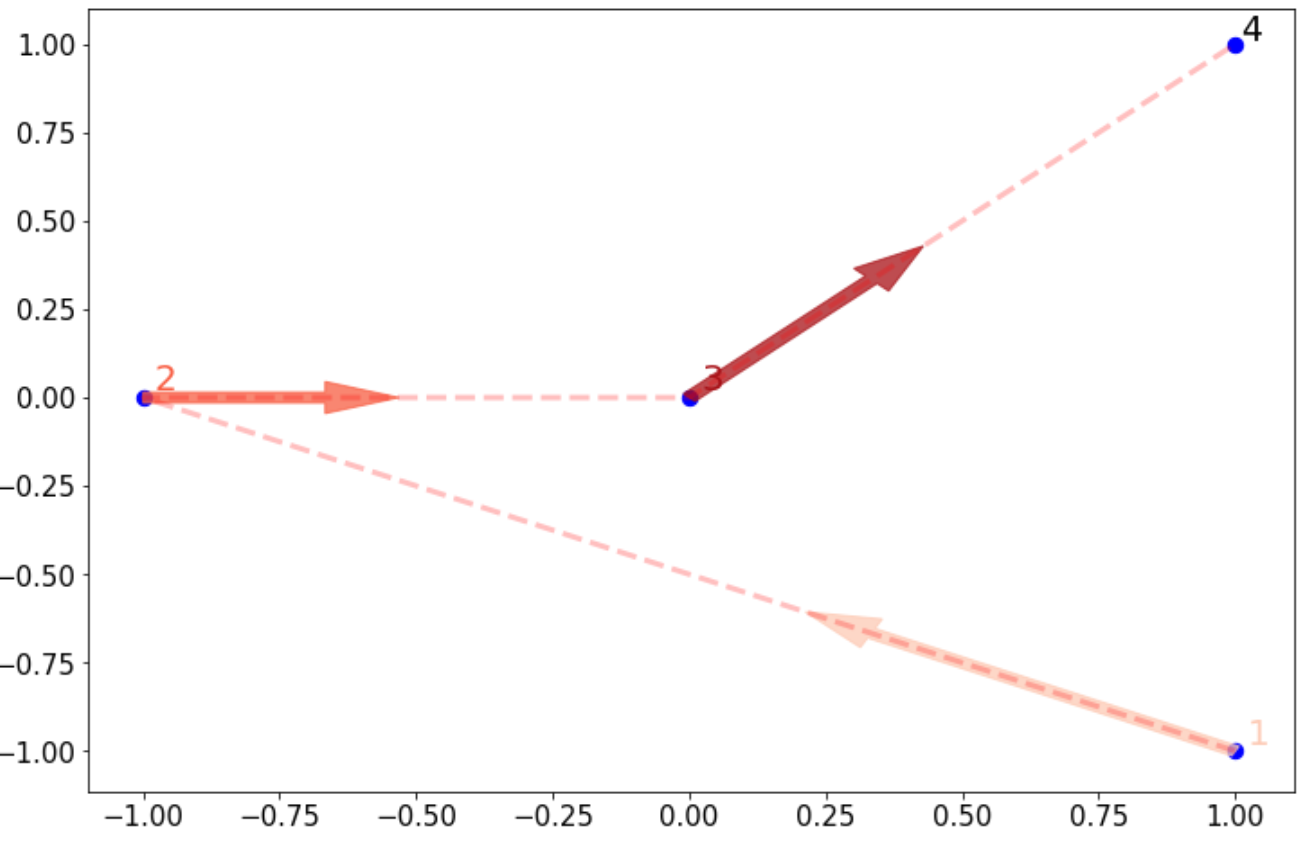}
        }\hspace{1cm}
        \subfloat[Shortest path with $n=7$, $k=0$ in~\eqref{shortestpath}.]
        {
            \includegraphics[width=0.43\textwidth]{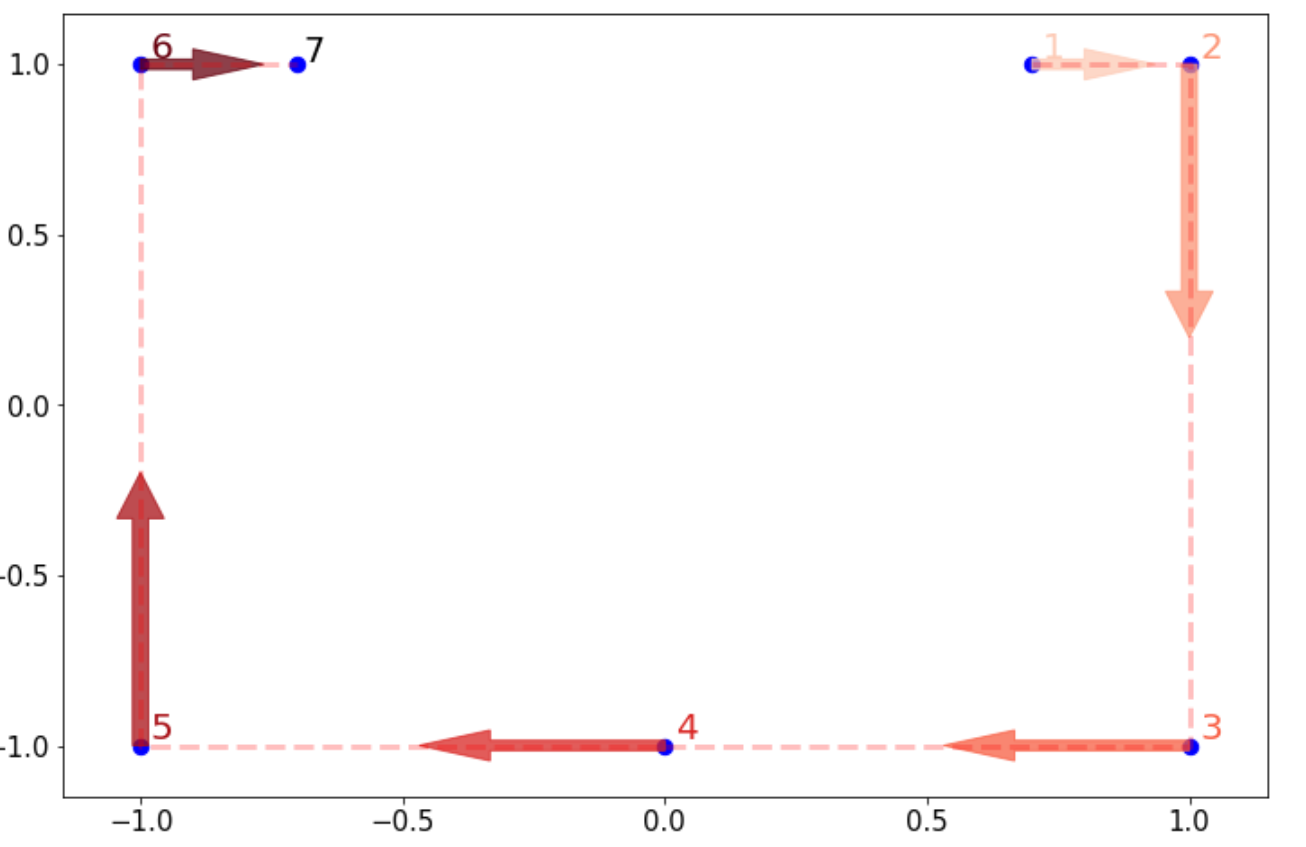}
        }\hfill
        \subfloat[Shortest path with $n=6$, $k=1$ in~\eqref{shortestpath}.]
        {
            \includegraphics[width=0.425\textwidth]{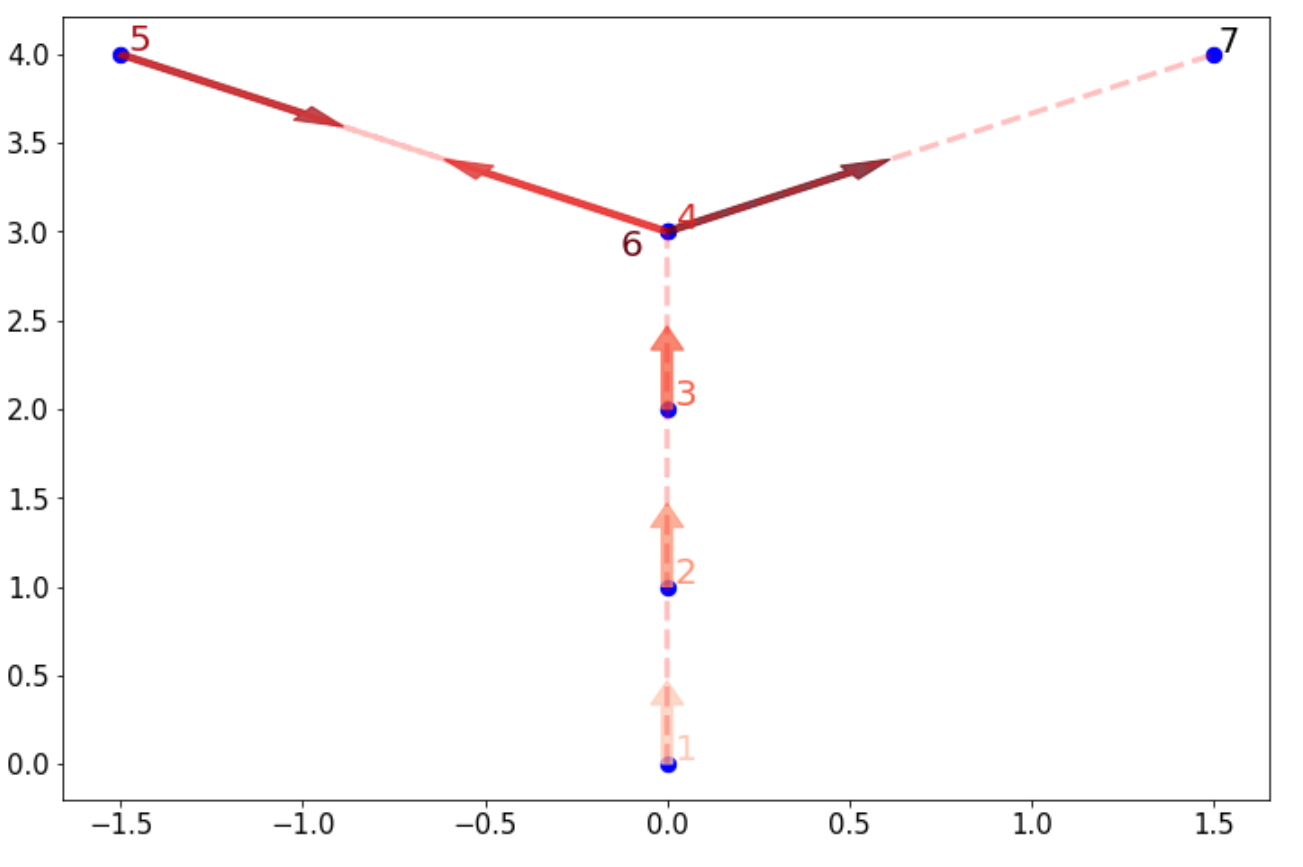}
        }\hspace{1cm}
        \subfloat[Shortest path with $n=15$, $k=1$ in~\eqref{shortestpath}.]
        {
            \includegraphics[width=0.43\textwidth]{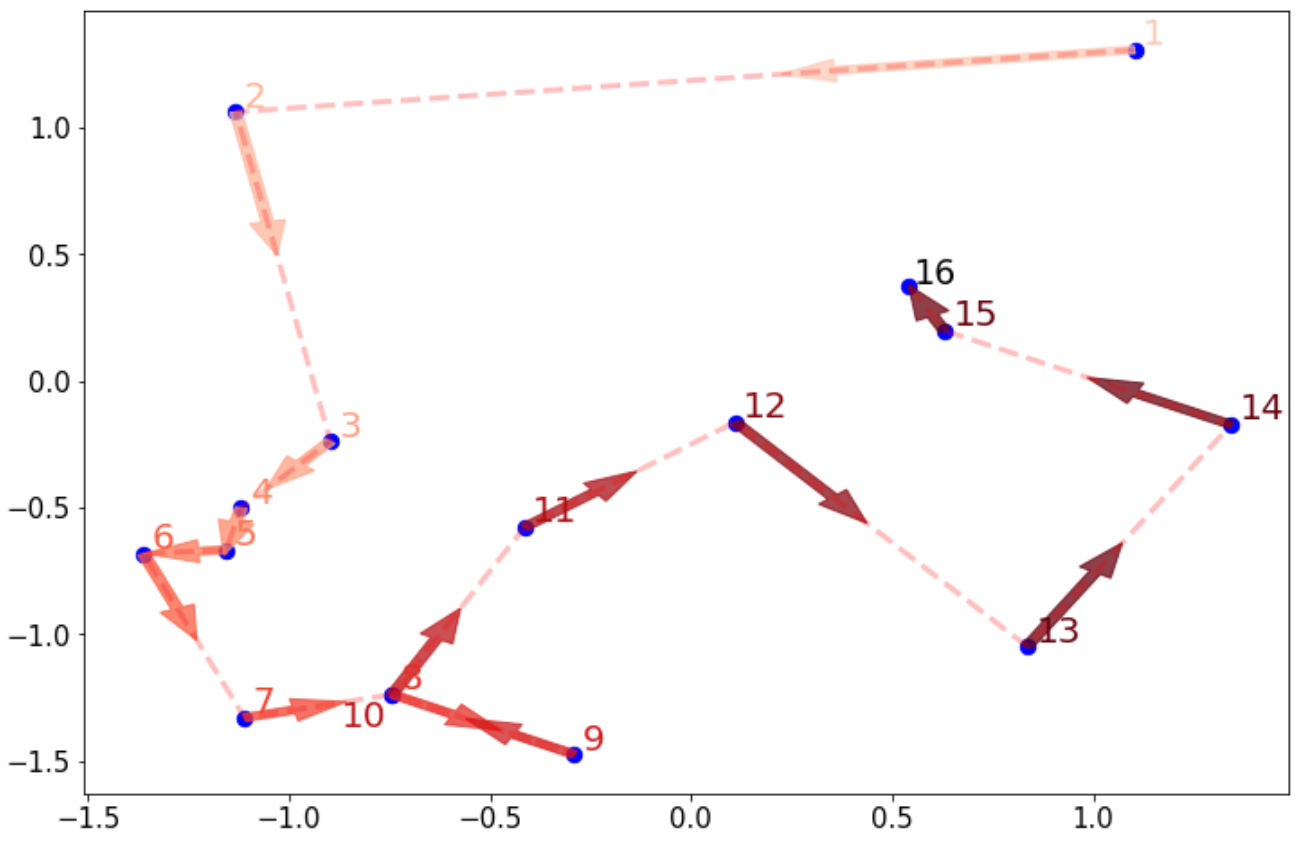}
        }
        \caption{Examples of shortest paths in dimension $2$, with the squared Euclidean distance. \label{fig:squared_euc_distance}}
    \end{figure}
    
    Let us now provide some intuition on the way the optimal function ${\widehat G}_K^\star: [0,1] \to \mathbb{R}^d$ is obtained. In a nutshell, this function strictly follows $\sigma$, one of the optimal paths in~\eqref{shortestpath}. Thus, there exist some $0 \leqslant t_1 < \cdots < t_{n+k} \leqslant 1$ such that ${\widehat G}_K^\star(t_j) = X_{\sigma(j)}$, $j \in \{1, \hdots, n+k\}$. Since the optimal path (and therefore ${\widehat G}^{\star}_{K}$) can visit several times each sample point $X_i$, we need to take into account how long ${\widehat G}^{\star}_{K}$ stays constant at $X_i$, whenever it visits this data point. This period of time is denoted by $\varphi(i)$ and chosen to be equal to
    \[
    \varphi(i) =\frac{1}{|\sigma^{-1}(i)|} \Big( \frac{1}{n}- \sum \limits_{j\in \sigma^{-1}(i)} \frac{1}{2K}(\|X_{\sigma(j-1)}-X_i\|+\|X_{\sigma(j+1)}-X_i\|)\Big)
    \]
    (by convention, $X_{\sigma(0)} = X_{\sigma(1)}$ and $X_{\sigma(n+k+1)} = X_{\sigma(n+k)}$). The quantity $|\sigma^{-1}(i)|\times \varphi(i)$ thus corresponds to the total measure of the atoms $X_i$ under the distribution ${\widehat G}^{\star}_{K\sharp  U}$. Finally, for any $j \in \{1, \hdots, n+k \}$, we let 
    \[
        V_j  = \sum \limits_{\ell=1}^{j-1} \Big(\varphi(\sigma(\ell)) + \frac{\|X_{\sigma(\ell+1)}-X_{\sigma(\ell)}\|}{K}\Big) = V_{j-1} + \varphi(\sigma(j-1)) + \frac{\|X_{\sigma(j)}-X_{\sigma(j-1)}\|}{K}.
    \] 
    This quantity $V_j$ is more complicated to grasp, but intuitively, it corresponds to the time steps where the function ${\widehat G}^{\star}_{K}$ has arrived on a sample point $X_{\sigma(j)}$ and will pause at $X_{\sigma(j)}$ for a time equal to $\varphi(\sigma(j))$. 
    
    A visual explanation of the construction mechanism of ${\widehat G}^{\star}_{K}$ is depicted in Figure~\ref{fig:Gstar_explained}. The top shows the trajectory of ${\widehat G}^{\star}_{K}$ following an optimal path $\sigma$ in~\eqref{shortestpath}. The bottom shows the succession of time steps at which $\widehat{G}^{\star}_{K}$ passes from one point to another.
     \begin{figure}[h]
        \centering
        {
            \includegraphics[width=0.80\linewidth]{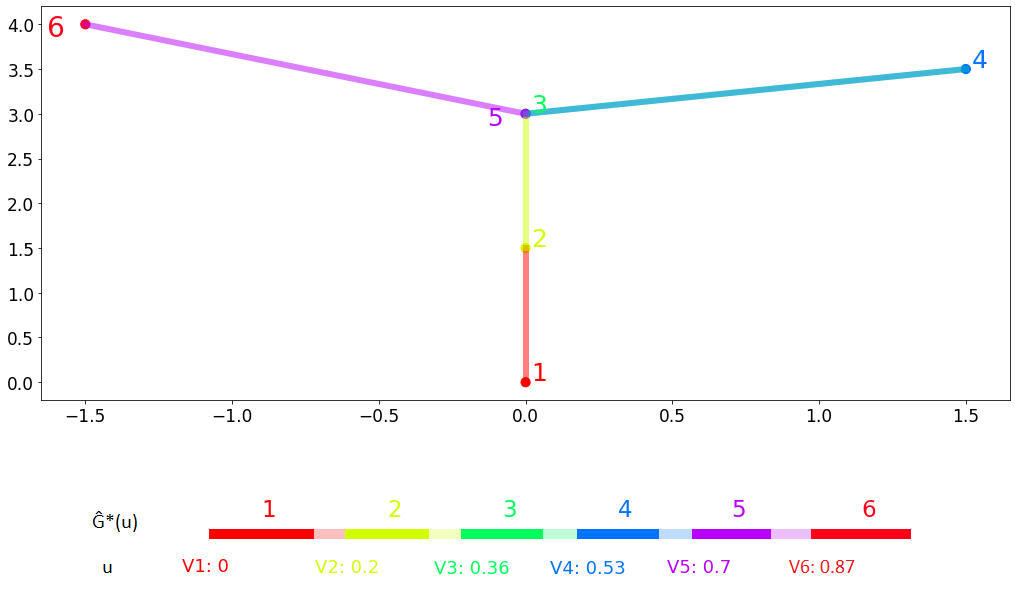}
        }
        \caption{${\widehat G}^{\star}_{K}$ explained. For each data point $X_{\sigma(j)}$ (embedded by a specific color), the bold part of the interval symbolizes the time ${\widehat G}^{\star}_{K}$ is equal to $X_{\sigma(j)}$, while the light part refers to the jump from $X_{\sigma(j)}$ to $X_{\sigma(j+1)}$. Note that ${\widehat G}^{\star}_{K}$ follows the optimal path under the squared Euclidean norm. 
        \label{fig:Gstar_explained}}
    \end{figure}
    
    Equipped with this notation, we may now properly define the function ${\widehat G}_K^\star :[0,1]\to \mathbb{R}^d$, as follows:
    \begin{equation} 
    \label{Gstar2}
    {\widehat G}^{\star}_{K}(u) = \left\{
    \begin{array}{ll}
     X_{\sigma(j)} &   \text{if } u \in [V_j,V_j + \varphi(\sigma(j))] \smallskip\\
        &  \text{for } 1\leqslant j\leqslant n+k \medskip\\
        X_{\sigma(j)} + \big(u- (V_j + \varphi(\sigma(j)))\big)K \frac{X_{\sigma(j+1)}-X_{\sigma(j)}}{\|X_{\sigma(j+1)}-X_{\sigma(j)}\|} & \text{if }  u \in [V_j+ \varphi(\sigma(j)),V_{j+1}] \smallskip\\
        &  \text{for } 1\leqslant j\leqslant n+k-1.
    \end{array}
    \right.
    \end{equation} 
    Observe that the function ${\widehat G}_{K}^{\star}$ is well-defined as soon as 
    \[K\geqslant n \max \limits_{i=1,\hdots,n} \sum \limits_{j\in \sigma^{-1}(i)} \frac{1}{2}(\|X_{\sigma(j-1)}-X_i\|+\|X_{\sigma(j+1)}-X_i\|),\] 
    and that it belongs to $\text{Lip}_K([0,1],\mathbb R^d)$. Making the connection with the univariate case of Section \ref{1DO}, we have that if $d=1$, then each point is visited only once, so that $k=0$ and, for each $i \in \{1, \hdots, n\}$, $X_{\sigma(i)}=X_{(i)}$ (or $X_{\sigma(i)}=X_{(n-i+1)}$). Besides, $|\sigma^{-1}(i)|=1$ and  $\varphi(i)=1/n - (X_{(i+1)} - X_{(i-1)})/(2K)$. We thus recover the univariate function defined in \eqref{Gstar}.
    \subsection{Optimality properties}
    In this subsection, we first compute the $1$-Wasserstein distance between ${\widehat G}^{\star}_{K\sharp  U}$ and $\mu_n$, and then prove that this value minimizes Problem~\eqref{eq:studyGANs} in the multivariate setting, under a mild assumption.
    \begin{proposition} 
    \label{proposition4}
    Assume that 
    \[K\geqslant n \max \limits_{i=1,\hdots,n} \sum \limits_{j\in \sigma^{-1}(i)} \frac{1}{2}(\|X_{\sigma(j-1)}-X_i\|+\|X_{\sigma(j+1)}-X_i\|),\] 
    and let ${\widehat G}_{K}^{\star} \in {\rm Lip}_K([0,1],\mathbb R^d)$ be defined in~\eqref{Gstar2}. Then 
    \[
    W_1({\widehat G}^{\star}_{K\sharp  U},\mu_n) =   \frac{1}{4K} \sum \limits_{j=1}^{n+k-1} \|X_{\sigma(j+1)}-X_{\sigma(j)}\|^2.
    \]
    \end{proposition}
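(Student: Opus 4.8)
The plan is to first describe the distribution $\widehat G^{\star}_{K\sharp U}$ precisely, and then establish the two matching bounds for $W_1(\widehat G^{\star}_{K\sharp U},\mu_n)$. Write $\ell_j=\|X_{\sigma(j+1)}-X_{\sigma(j)}\|$ for brevity. From the definition~\eqref{Gstar2}, ${\widehat G}^{\star}_{K}$ is constant and equal to $X_{\sigma(j)}$ on the interval $[V_j,V_j+\varphi(\sigma(j))]$, and on $[V_j+\varphi(\sigma(j)),V_{j+1}]$ (an interval of length $\ell_j/K$) it sweeps the segment $[X_{\sigma(j)},X_{\sigma(j+1)}]$ at unit arc-length speed $K$. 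Consequently $\widehat G^{\star}_{K\sharp U}$ is the sum of an atomic part putting mass $|\sigma^{-1}(i)|\varphi(i)$ at each $X_i$ and, for each $j\in\{1,\dots,n+k-1\}$, a copy of $1/K$ times the one-dimensional Hausdorff measure on the segment $[X_{\sigma(j)},X_{\sigma(j+1)}]$. The standing assumption on $K$ guarantees $\varphi(i)\geqslant 0$, and a telescoping computation using the defining formula of $\varphi$ shows that the total mass equals $1$, so $\widehat G^{\star}_{K\sharp U}$ is indeed a probability measure.

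For the upper bound, I would exhibit the explicit transport plan that leaves each atom $|\sigma^{-1}(i)|\varphi(i)\,\delta_{X_i}$ in place (zero cost) and, on each traversal of a segment $[X_{\sigma(j)},X_{\sigma(j+1)}]$, sends the half nearer $X_{\sigma(j)}$ onto $X_{\sigma(j)}$ and the half nearer $X_{\sigma(j+1)}$ onto $X_{\sigma(j+1)}$. The mass delivered to $X_i$ is then $|\sigma^{-1}(i)|\varphi(i)+\sum_{j\in\sigma^{-1}(i)}\frac1{2K}\big(\|X_{\sigma(j-1)}-X_i\|+\|X_{\sigma(j+1)}-X_i\|\big)$, which equals $1/n$ by the very definition of $\varphi$, so the plan has second marginal $\mu_n$. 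A direct integration shows that moving each half-segment to its endpoint costs $\ell_j^2/(8K)$, hence $W_1(\widehat G^{\star}_{K\sharp U},\mu_n)\leqslant\frac1{4K}\sum_{j=1}^{n+k-1}\ell_j^2$.

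The matching lower bound is the delicate part, and it is where the minimality of $(k,\sigma)$ in~\eqref{shortestpath} is used. Since the second marginal of any coupling between $\widehat G^{\star}_{K\sharp U}$ and $\mu_n$ is supported on $\{X_1,\dots,X_n\}$, one has $W_1(\widehat G^{\star}_{K\sharp U},\mu_n)\geqslant \int {\rm dist}\big(x,\{X_1,\dots,X_n\}\big)\,{\rm d}\widehat G^{\star}_{K\sharp U}(x)$. The key claim is that, for every $x$ lying on a segment $[X_{\sigma(j)},X_{\sigma(j+1)}]$, the closest of all $n$ data points to $x$ is one of the two endpoints, i.e.\ ${\rm dist}(x,\{X_1,\dots,X_n\})=\min(\|x-X_{\sigma(j)}\|,\|x-X_{\sigma(j+1)}\|)$. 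To see this, suppose on the contrary that some $X_m$, different as a point from $X_{\sigma(j)}$ and $X_{\sigma(j+1)}$, is strictly closer to $x=(1-t)X_{\sigma(j)}+tX_{\sigma(j+1)}$ than both endpoints. Writing $m_0$ for the midpoint of the segment, the open ball centered at $x$ of radius $\min(t,1-t)\ell_j$ is contained in $B(m_0,\ell_j/2)$, so $X_m\in B(m_0,\ell_j/2)$. The polarization identity $\|p-X_{\sigma(j)}\|^2+\|p-X_{\sigma(j+1)}\|^2=2\|p-m_0\|^2+\frac12\ell_j^2$ then shows that inserting $X_m$ between $X_{\sigma(j)}$ and $X_{\sigma(j+1)}$ in the path $\sigma$ changes the objective in~\eqref{shortestpath} by $2\big(\|X_m-m_0\|^2-\ell_j^2/4\big)<0$. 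Since the new path still has image $\{1,\dots,n\}$, still has distinct consecutive indices, and belongs to $\mathscr S_{k+1}$, this contradicts the minimality of $(k,\sigma)$ and proves the claim. Granting it, the description of $\widehat G^{\star}_{K\sharp U}$ from the first paragraph gives $\int {\rm dist}(x,\{X_1,\dots,X_n\})\,{\rm d}\widehat G^{\star}_{K\sharp U}(x)=\sum_{j=1}^{n+k-1}\frac1K\int_0^{\ell_j}\min(s,\ell_j-s)\,{\rm d}s=\frac1{4K}\sum_{j=1}^{n+k-1}\ell_j^2$, i.e.\ the reverse inequality. Combining the two bounds yields the stated identity.

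I expect the geometric claim of the last paragraph — that from any point of a segment of an optimal path the nearest of all the data points is an endpoint of that segment — to be the main obstacle. It is the only spot where the optimality of $(k,\sigma)$ is genuinely invoked, and it explains why the squared Euclidean distance is the right cost in~\eqref{shortestpath}: the polarization identity is precisely what turns ``a data point sitting near the middle of a segment'' into a strict improvement of the path. The rest of the argument is bookkeeping with the definitions of $\varphi(i)$ and $V_j$ together with two elementary one-dimensional integrals.
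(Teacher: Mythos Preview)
Your proof is correct and the geometric lemma (the nearest data point to any point on an optimal-path segment is one of that segment's endpoints) is exactly the one the paper establishes, with essentially the same contradiction via inserting the rogue point $X_m$ into the path $\sigma$.

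Where you diverge from the paper is in how you exploit this lemma. The paper first shows $\lambda_1\big({\widehat G}_K^{\star-1}({\rm Vor}(i))\big)=1/n$, then appeals to Proposition~\ref{proposition3} to conclude that the nearest-neighbor map $T^\star(x)=X_i$ for $x\in{\rm Vor}(i)$ is an optimal transport map from ${\widehat G}^\star_{K\sharp U}$ to $\mu_n$; because Proposition~\ref{proposition3} requires a nonatomic source measure, an approximation by nonatomic $G_m^\star$ (as in Lemma~\ref{lemma2}) is inserted, followed by dominated convergence. You bypass all of this by splitting the equality into two inequalities: the explicit half-segment coupling for the upper bound, and the elementary inequality $W_1(\nu,\mu_n)\geqslant \int {\rm dist}(x,\{X_1,\dots,X_n\})\,{\rm d}\nu(x)$ for the lower bound. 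Your route is more self-contained and avoids the semi-discrete machinery of Section~\ref{sec:opt_transport}. The paper's route, on the other hand, yields the stronger statement that $T^\star$ is the optimal map (not just the value of $W_1$), which is implicitly reused in the proof of Theorem~\ref{theorem3}. For the proposition as stated, your argument is complete.
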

    By construction, it is clear that ${\widehat G}_{K}^{\star}$ visits each data point, following the optimal path $\sigma$. The proof of Proposition \ref{proposition4} reveals that $\widehat {G}^{\star}_{K}$ spends a time $1/n$ (in terms of Lebesgue measure) in each ``standard'' Voronoi cell $\text{Vor}(i)$, that is
        \[
        \text{Vor}(i) = \big\{x \in \mathbb{R}^d :  \|x-X_i\| \leqslant \|x-X_j\|\text{ for all }j\neq i\big\}, \quad i=1, \hdots, n.
        \]
    These cells correspond to additively weighted Voronoi cells with weight $w=(0, \hdots, 0)$. We define in the same way $\Gamma_{j_1\hdots j_p}^0$ and $\mathscr H^{0}$ as in~\eqref{gammai1...ip} and~\eqref{Hw}, respectively.
    
    In the remainder of the subsection, we prove the optimality of $\widehat G^\star_K$ on a subset smaller than $\text{Lip}_K([0,1],\mathbb R^d)$. This subset is denoted by $\text{Lip}^{\circ}_K([0,1],\mathbb R^d)$ and is defined below. Recall that for any $G \in \text{Lip}_K([0,1],\mathbb R^d)$ such that $G_{\sharp  U}$ is nonatomic, there exists according to Theorem~\ref{theorem2} a weight $w\in \mathbb{R}^n$ and an optimal transport map $T^w$ from $G_{\sharp  U}$ to $\mu_n$ such that
    \[
    W_1(G_{\sharp  U},\mu_n)  = \int_{\mathbb R^d} \|x-T^w(x)\|{\rm d}G_{\sharp  U}(x) = \int_0^1 \|G(u)-T^w(G(u))\|{\rm d}u.
    \]
    \begin{definition}\label{definition1}
    Let $G \in {\rm Lip}_K([0,1],\mathbb R^d)$. We say that $G$ is in ${\rm Lip}^\circ_K([0,1],\mathbb R^d)$ if $G_{\sharp  U}$ is nonatomic and, for all $u \in [0,1]$ and all $i\in \{1,\hdots,n\}$ such that $T^w(G(u))=X_i$, there exists $v \in [0,1]$ such that $G(v)=X_i$ and $\forall x \in [u,v]$ (or $[v,u]$), $T^w(G(x))=X_i$.
    \end{definition}
    Definition~\ref{definition1} means that as soon as the function $G$ enters a weighted Voronoi cell, then it must passes through its center. Even though ${\widehat G}^{\star}_{K\sharp  U}$ has atoms, the following theorem shows that $W_1({\widehat  G}^{\star}_{K\sharp  U},\mu_n)$ achieves the infimum of Problem~\eqref{eq:studyGANs} over $\text{ Lip}_K^\circ([0,1],\mathbb R^d)$. 
    \begin{theorem}\label{theorem3}
    Assume that $K\geqslant n \max \limits_{i=1,\hdots,n} \sum \limits_{j\in \sigma^{-1}(i)} \frac{1}{2}(\|X_{\sigma(j-1)}-X_i\|+\|X_{\sigma(j+1)}-X_i\|)$, and let the function ${\widehat G}_{K}^{\star} \in {\rm Lip}_K([0,1],\mathbb R^d)$ be defined in~\eqref{Gstar2}. Then 
    \[
    W_1({\widehat G}^{\star}_{K\sharp  U},\mu_n) = \inf \limits_{G\in {\rm Lip}^{\circ}_K ([0,1],\mathbb R^d)} W_1(G_{\sharp  U},\mu_n) = \frac{1}{4K} \sum \limits_{j=1}^{n+k-1}   \|X_{\sigma(j+1)}-X_{\sigma(j)}\|^2.
    \]
    \end{theorem}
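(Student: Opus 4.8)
The rightmost equality is nothing but Proposition~\ref{proposition4}, so the whole content is to show that $\inf_{G\in\text{Lip}^{\circ}_K([0,1],\mathds R^d)}W_1(G_{\sharp U},\mu_n)$ equals $\frac{1}{4K}\sum_{j=1}^{n+k-1}\|X_{\sigma(j+1)}-X_{\sigma(j)}\|^2$. This I would do by a two-sided estimate: a lower bound valid for \emph{every} $G\in\text{Lip}^{\circ}_K$ (this is the heart of the proof), and a matching upper bound obtained by approximating $\widehat G^{\star}_K$ — which has atoms and hence does not itself belong to $\text{Lip}^{\circ}_K$ — by genuinely nonatomic elements of $\text{Lip}^{\circ}_K$.

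\textbf{Lower bound: the structural step.} Fix $G\in\text{Lip}^{\circ}_K([0,1],\mathds R^d)$. Since $G$ is continuous on $[0,1]$, $G_{\sharp U}$ is nonatomic with compact support, so Theorem~\ref{theorem2} (via Proposition~\ref{proposition3}) supplies a weight $w\in\mathds R^n$ and an optimal transport map $T^w\in\mathscr H^w$ with $W_1(G_{\sharp U},\mu_n)=\int_0^1\|G(u)-T^w(G(u))\|{\rm d}u$ and $\lambda_1\big((T^w\circ G)^{-1}(\{X_i\})\big)=1/n$ for each $i$. Writing $A_i=(T^w\circ G)^{-1}(\{X_i\})$, these sets partition $[0,1]$. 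The key use of Definition~\ref{definition1} is the following structural claim: up to a $\lambda_1$-null set, each $A_i$ is a finite union of closed intervals, every one of which contains a point $c$ with $G(c)=X_i$ (indeed any $u\in A_i$ with $G(u)\neq X_i$ must, by Definition~\ref{definition1}, lie in a non-degenerate interval of $A_i$ meeting $\{u:G(u)=X_i\}$). Listing these intervals in increasing order along $[0,1]$ produces $0=a_0<a_1<\cdots<a_m=1$ with labels $i_1,\dots,i_m$ such that $T^w\circ G\equiv X_{i_\ell}$ on $[a_{\ell-1},a_\ell]$, consecutive labels differ (otherwise the two intervals would have been merged), and $G(c_\ell)=X_{i_\ell}$ for some $c_\ell\in[a_{\ell-1},a_\ell]$. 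Since each $A_i$ has positive measure, $\{i_1,\dots,i_m\}=\{1,\dots,n\}$, so $(i_1,\dots,i_m)\in\mathscr S_{m-n}$ in the sense of~\eqref{shortestpath}.

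\textbf{Lower bound: the metric step, and the upper bound.} On each $[a_{\ell-1},a_\ell]$ the function $f_\ell(u)=\|G(u)-X_{i_\ell}\|$ is nonnegative, $K$-Lipschitz and vanishes at $c_\ell$; the elementary fact that a nonnegative $K$-Lipschitz function which is $0$ at one endpoint and $h$ at the other has integral at least $h^2/(2K)$ gives $\int_{c_\ell}^{a_\ell}f_\ell\geqslant\|G(a_\ell)-X_{i_\ell}\|^2/(2K)$ and $\int_{a_{\ell-1}}^{c_\ell}f_\ell\geqslant\|G(a_{\ell-1})-X_{i_\ell}\|^2/(2K)$. Summing over $\ell$, regrouping the two contributions at each internal node $a_\ell$, and setting $p_\ell=\|G(a_\ell)-X_{i_\ell}\|$, $q_\ell=\|G(a_\ell)-X_{i_{\ell+1}}\|$, I get
\[
W_1(G_{\sharp U},\mu_n)\geqslant\frac{1}{2K}\sum_{\ell=1}^{m-1}\big(p_\ell^2+q_\ell^2\big)\geqslant\frac{1}{4K}\sum_{\ell=1}^{m-1}\big(p_\ell+q_\ell\big)^2\geqslant\frac{1}{4K}\sum_{\ell=1}^{m-1}\|X_{i_{\ell+1}}-X_{i_\ell}\|^2,
\]
using $p^2+q^2\geqslant(p+q)^2/2$ and then $p_\ell+q_\ell\geqslant\|X_{i_\ell}-X_{i_{\ell+1}}\|$ (triangle inequality — no Voronoi geometry is needed here). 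Since $(i_1,\dots,i_m)\in\mathscr S_{m-n}$, minimality of $(k,\sigma)$ in~\eqref{shortestpath} gives $\sum_{\ell=1}^{m-1}\|X_{i_{\ell+1}}-X_{i_\ell}\|^2\geqslant\sum_{j=1}^{n+k-1}\|X_{\sigma(j+1)}-X_{\sigma(j)}\|^2$, which is the claimed lower bound. For the upper bound, for $\delta>0$ I would modify $\widehat G^{\star}_K$ of~\eqref{Gstar2} by replacing each constant sojourn at a point $X_i$ (which, when the $X_i$ are distinct, lies in the interior of the unweighted cell $\text{Vor}(i)$) by a non-constant, piecewise-$C^1$, $K$-Lipschitz curve of the same duration staying in $B(X_i,\delta)\cap\text{Vor}(i)$ and passing through $X_i$; this is possible precisely because $d>1$. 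The resulting $G_\delta$ has $G_{\delta\sharp U}$ nonatomic, still spends Lebesgue time $1/n$ in each $\text{Vor}(i)$ so that $T^0$ transports $G_{\delta\sharp U}$ onto $\mu_n$, and satisfies the center condition of Definition~\ref{definition1} with $w=0$; hence $G_\delta\in\text{Lip}^{\circ}_K$. Transporting through $T^0$ and invoking Proposition~\ref{proposition4} then yields $W_1(G_{\delta\sharp U},\mu_n)\leqslant\frac{1}{4K}\sum_{j=1}^{n+k-1}\|X_{\sigma(j+1)}-X_{\sigma(j)}\|^2+\delta$, and $\delta\to0$ closes the estimate.

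\textbf{Main obstacle.} The delicate part is the structural step: one must control the $\lambda_1$-null boundary sets on which the map $T^w\in\mathscr H^w$ is only partially determined, exclude Cantor-type behaviour of the level sets $A_i$, and verify that consecutive labels are distinct so that the label sequence truly lies in $\mathscr S_{m-n}$. Once this decomposition is in hand, the rest is the elementary Lipschitz-plus-triangle estimate above; the approximation for the upper bound is routine modulo the use of $d>1$ to build non-constant curves of prescribed length inside arbitrarily small balls.
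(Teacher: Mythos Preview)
Your proof is correct and follows essentially the same route as the paper: obtain an interval decomposition of $[0,1]$ from Definition~\ref{definition1} and the optimal map $T^w$ of Theorem~\ref{theorem2}, lower-bound each integral via the $K$-Lipschitz ``tent'' estimate, regroup at the interior nodes using $p^2+q^2\geqslant(p+q)^2/2$ and the triangle inequality, and invoke the minimality in~\eqref{shortestpath}; for the upper bound the paper simply appeals to ``a slight adaptation of Proposition~\ref{proposition2}'' where you give an explicit $G_\delta$. You are in fact more careful than the paper about justifying that the level sets $A_i$ decompose into finitely many intervals with distinct consecutive labels, a point the paper asserts without comment.
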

\begin{proof}
    Let $G\in \text{Lip}^\circ_K([0,1],\mathbb R^d)$.
    According to Theorem~\ref{theorem2}, there exists a weight $w\in \mathbb R^n$ and an optimal transport map $T^w$ from $G_{\sharp  U}$ to $\mu_n$. We denote by $[a_1,a_2], [a_2,a_3],\hdots,[a_{p-1},a_p]$ the intervals such that $a_1=0$, $a_p=1$, and, for all $j \in \{1,\hdots,p-1\}$, there exists $\tau(j) \in \{1,\hdots,n\}$ such that $u \in [a_j,a_{j+1}]$ implies that $T^w(G(u))=X_{\tau(j)}$ (with $\tau(j)\neq \tau(j+1)$).
    
    Using the fact that $G$ is $K$-Lipschitz and satisfies Definition~\ref{definition1}, it is easy to see that
    \[a_{j+1}-a_j \geqslant \frac{1}{K}\big(\|G(a_j)-X_{\tau(j)}\|+\|G(a_{j+1})-X_{\tau(j)}\|\big).\]
    Observe that
    \begin{align*}
    W_1(G_{\sharp  U},\mu_n) & = \int_{\mathbb R^d} \|x-T^w(x)\|{\rm d}G_{\sharp  U}(x) = \int_0^1 \|G(u)-T^w(G(u))\|{\rm d}u\\
    & = \sum \limits_{j=1}^{p-1} \int_{a_j}^{a_{j+1}} \|G(u)-X_{\tau(j)}\|{\rm d}u.
    \end{align*}
    Therefore,
    \begin{align*}
    W_1(G_{\sharp  U},\mu_n) & \geqslant \sum \limits_{j=1}^{p-1} \int_{a_j}^{a_{j}+\frac{\|G(a_{j})-X_{\tau(j)}\|}{K}}( \|G(a_j)-X_{\tau(j)}\|-K(u-a_j)){\rm d}u\\
    & \quad + \int_{a_{j+1}-\frac{\|G(a_{j+1})-X_{\tau(j)}\|}{K}}^{a_{j+1}}K\big(u-\big(a_{j+1}-\frac{\|G(a_{j+1})-X_{\tau(j)}\|}{K}\big)\big){\rm d}u\\
    & = \frac{1}{2K} \sum \limits_{j=1}^{p-1}\big(\|G(a_{j})-X_{\tau(j)}\|^2+\|G(a_{j+1})-X_{\tau(j)}\|^2\big).
    \end{align*}
    Observe that, by the triangle inequality, 
    \[
    \|G(a_{j})-X_{\tau(j-1)}\|+\|G(a_{j})-X_{\tau(j)}\|\geqslant \|X_{\tau(j-1)}-X_{\tau(j)}\|.\]
    So, 
    \[\|G(a_{j})-X_{\tau(j-1)}\|^2+\|G(a_{j})-X_{\tau(j)}\|^2\geqslant \frac{1}{2} \|X_{\tau(j-1)}-X_{\tau(j)}\|^2.\]
    Using~\eqref{shortestpath} (Main Document), we conclude that
    \[W_1(G_{\sharp  U},\mu_n)\geqslant \frac{1}{4K} \sum \limits_{j=1}^{p-1} \|X_{\tau(j-1)}-X_{\tau(j)}\|^2\geqslant W_1({\widehat G}^{\star}_{K\sharp  U},\mu_n).\]
    Therefore,
    \[
     \inf \limits_{G\in {\rm Lip}^{\circ}_K ([0,1],\mathbb R^d)} W_1(G_{\sharp  U},\mu_n) \geqslant W_1({\widehat G}^{\star}_{K\sharp  U},\mu_n).
    \]
    Finally, a slight adaptation of Proposition~\ref{proposition2} shows that 
    \[ 
    W_1({\widehat G}^{\star}_{K\sharp  U},\mu_n) \geqslant \inf \limits_{G\in {\rm Lip}^{\circ}_K ([0,1],\mathbb R^d)} W_1(G_{\sharp  U},\mu_n),
    \]
    and the theorem is proved.
\end{proof}
Note that we could not show the optimality of ${\widehat G}^{\star}_{K}$ on $\text{Lip}_K([0,1],\mathbb R^d)$. However, all the numerical experiments indicate that the generative functions $G^\theta$ output by WGANs satisfy $W_1({\widehat G}^{\star}_{K\sharp  U}, \mu_n) < W_1(G^\theta_{\sharp  U}, \mu_n)$. Consequently, restricting the set of Lipschitz continuous functions to $\text{Lip}_K^\circ([0,1],\mathbb R^d)$ might not be necessary. We leave it as an open problem to prove that ${\widehat G}^{\star}_{K}$ is indeed the infimum over the whole set $\text{Lip}_K([0,1],\mathbb R^d)$. Similarly to the univariate case, the distribution ${\widehat G}^{\star}_{K\sharp  U}$ has atoms located at the sample points $X_i$, with respective mass $|\sigma^{-1}(i)| \times \varphi(i)$. It is also noteworthy that the minimizer ${\widehat G}^{\star}_{K}$ is not necessarily unique, because there may be different paths $\sigma$ minimizing the sum of the squared Euclidean distances in~\eqref{shortestpath}. Furthermore, if $|\sigma^{-1}(i)|\geqslant 2$, one can arbitrarily choose how to split the time period $|\sigma^{-1}(i)|\times \varphi(i)={\widehat G}^{\star}_{K\sharp  U}(\{ X_i\})$ according to the different moments ${\widehat G}^{\star}_{K}$ passes by $X_i$.
    
    In order to illustrate Theorem~\ref{theorem3}, we propose in Figure~\ref{fig:dim2} a $2$-dimensional experiment that compares the $1$-Wasserstein distance $W_1({\widehat G}^{\star}_{K\sharp  U}, \mu_n)$ with the results of parametric WGANs. The generator is composed of ReLU neural networks of depth $3$ and $6$, and width $100$, while the discriminator is composed of ReLU neural networks of depth $5$ and width $100$. We train a WGAN architecture on two different configurations, $n=5$ for the first and $n=10$ for the second, both with the choice $K=50$ (compatible with the assumption on $K$ in Theorem~\ref{theorem3}). We see, as expected, that the parametric WGAN (denoted by $G^{\theta}_{\sharp  U}$) gets close to the optimal function $\widehat{G}^\star_K$. However, since neural networks lack capacity and cannot replicate all Lipschitz functions, they operate some smoothing. Finally, observe that as $n$ grows, mimicking the optimal function $\widehat{G}^\star_K$ is harder, while increasing the depth can help.
   \begin{figure}[!h]
        \centering
        \subfloat[The sample size is $n=5$ and the depth of the generator is equal to $3$. The WGAN misses the shortest path leading to a deteriorated $1$-Wasserstein distance: $W_1({\widehat G}^{\star}_{K\sharp  U},\mu_n)=0.030$ and $W_1(G^{\theta}_{\sharp  U},\mu_n)=0.286$.]
        {
            \includegraphics[width=0.425\linewidth]{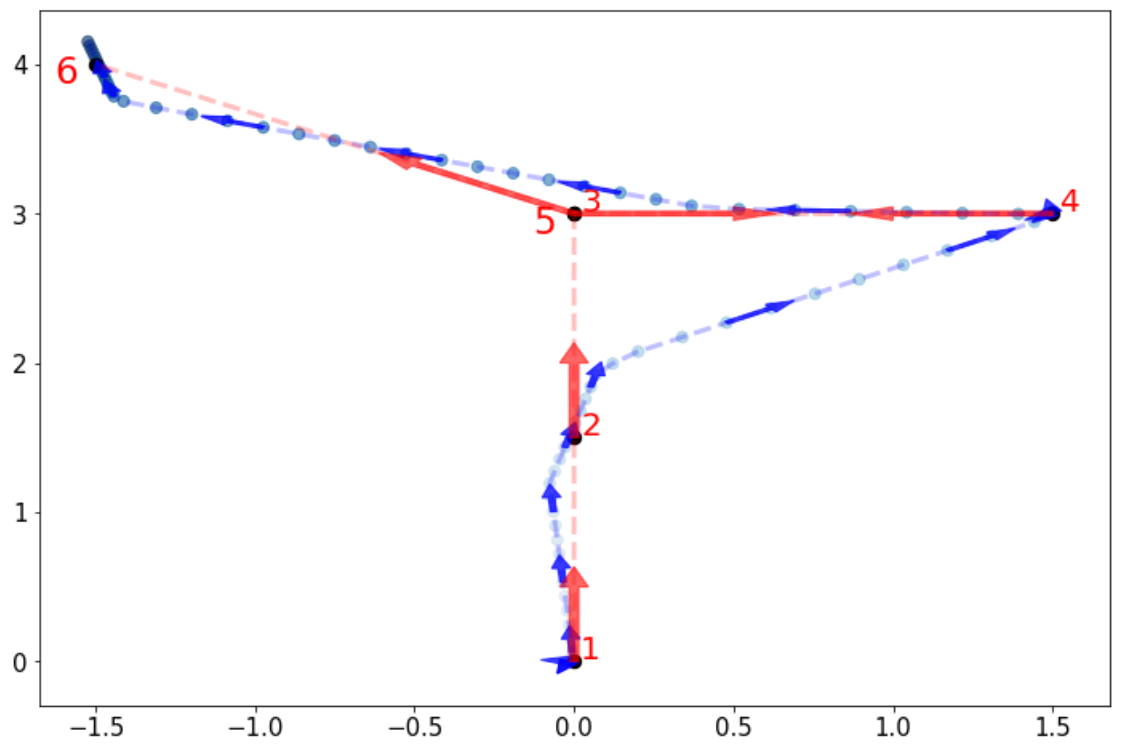}
        }\hspace{1cm}
        \subfloat[The sample size is $n=5$ and the depth of the generator is equal to $6$. The WGAN is closer to the shortest path: $W_1({\widehat G}^{\star}_{K\sharp  U},\mu_n)=0.018$ and $W_1(G^{\theta}_{\sharp  U},\mu_n)=0.174$.]
        {
            \includegraphics[width=0.425\linewidth]{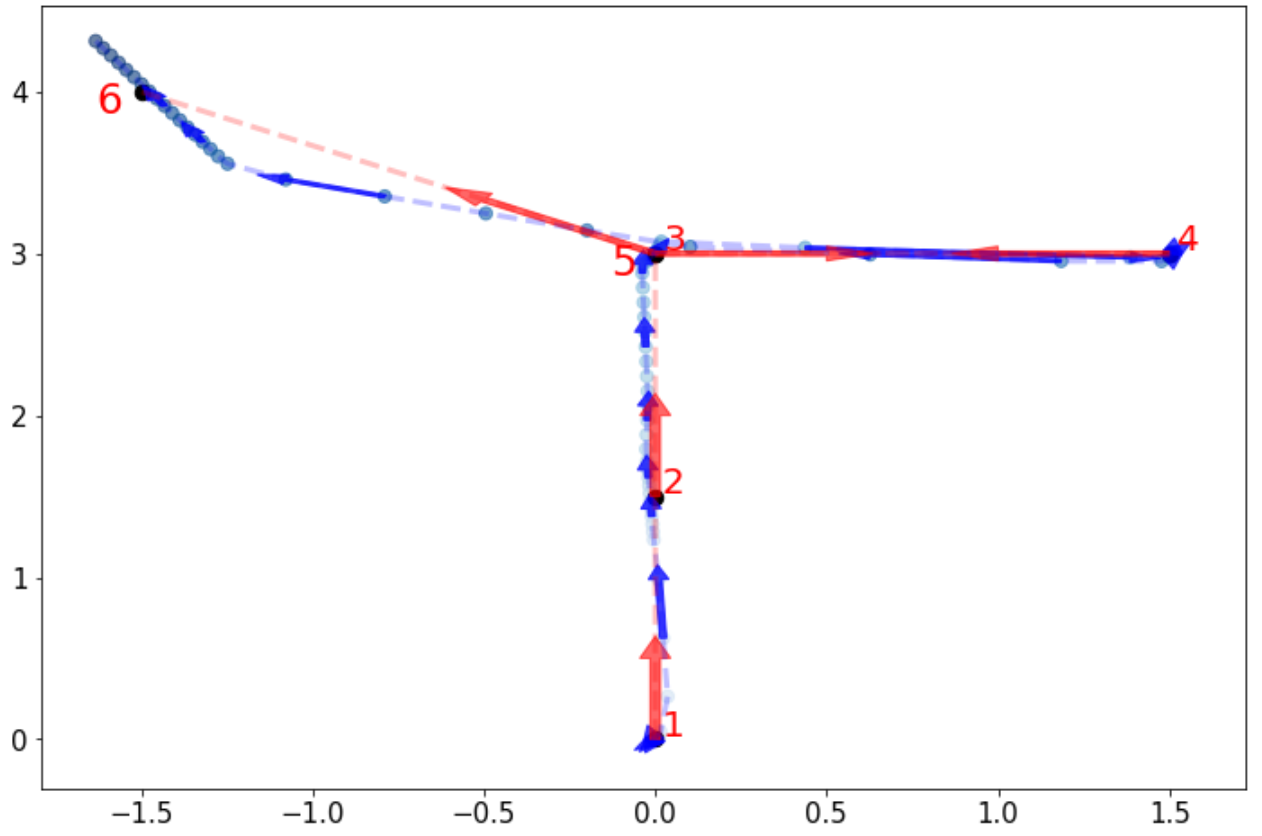}
        }\vfill
        \subfloat[The sample size is $n=10$ and the depth of the generator is equal to $3$. The WGAN misses the shortest path, with a worsened $1$-Wasserstein distance: $W_1({\widehat G}^{\star}_{K\sharp  U},\mu_n)=0.025$ and $W_1(G^{\theta}_{\sharp  U},\mu_n)=0.321$.]
        {
            \includegraphics[width=0.425\linewidth]{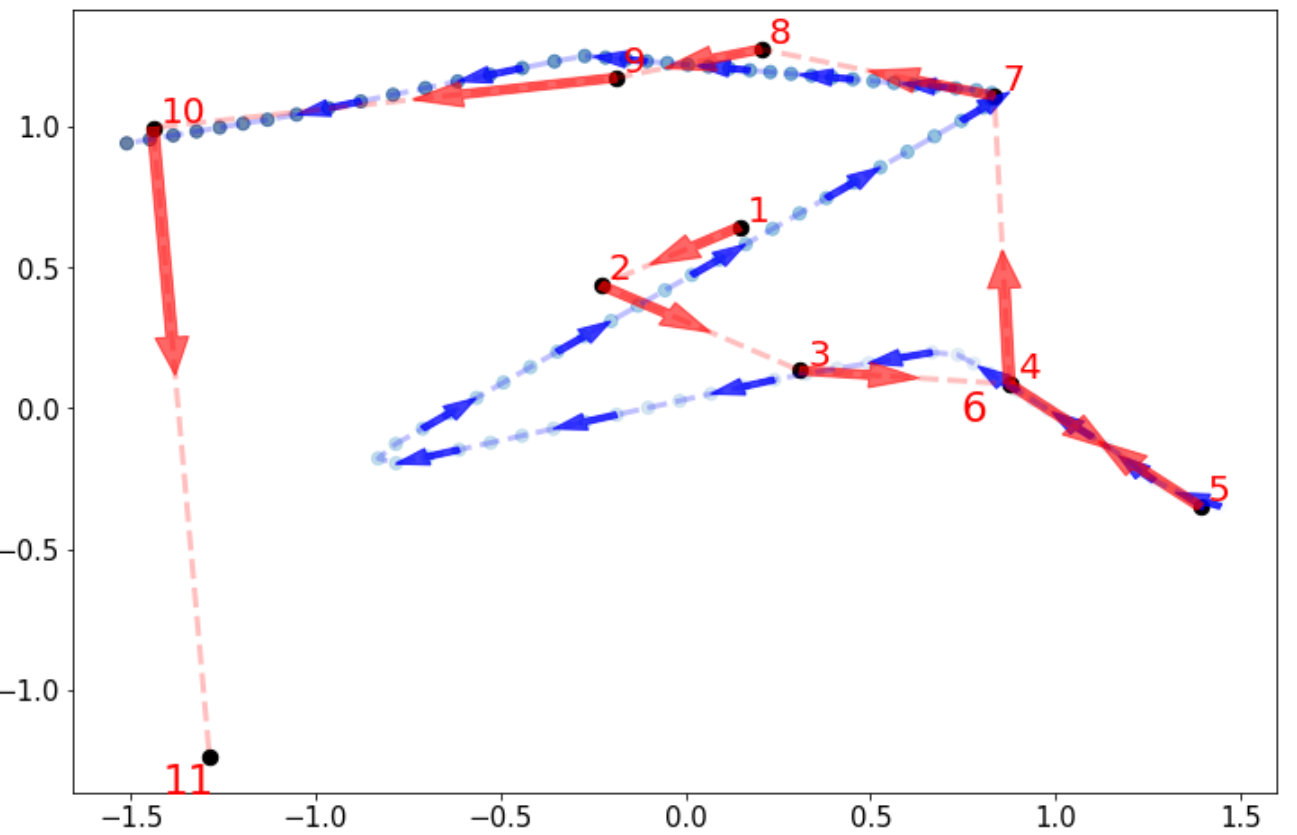}
        }\hspace{1cm}
        \subfloat[The sample size is $n=10$ and the depth of the generator is equal to $6$. The WGAN is closer to the shortest path: $W_1({\widehat G}^{\star}_{K\sharp  U},\mu_n)=0.025$ and $W_1(G^{\theta}_{\sharp  U},\mu_n)=0.160$.]
        {
            \includegraphics[width=0.425\linewidth]{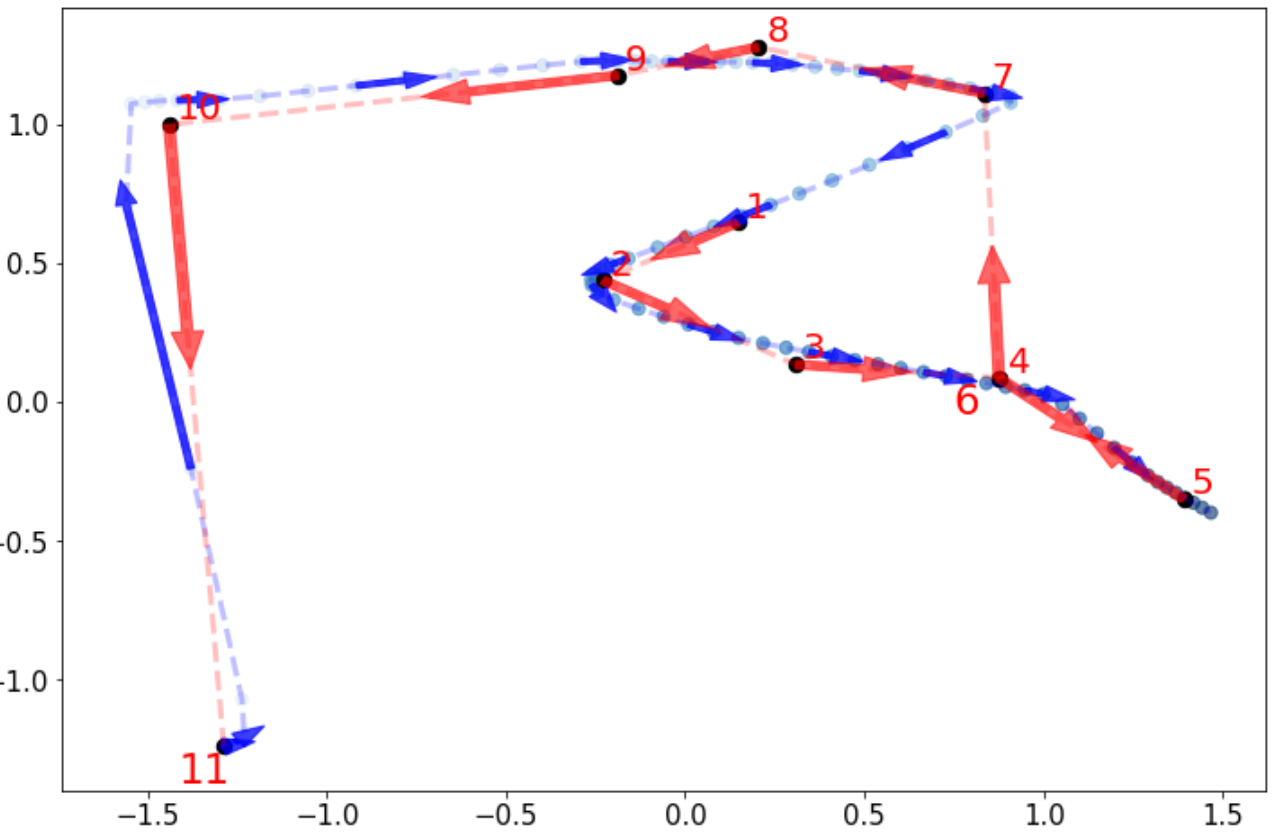}
        }
        \caption{Fitting $2$-dimensional data points with a univariate latent space. The blue curves are the ones reached after optimization with WGANs $G^\theta_{\sharp U}$ and the red curves are the constructed ones ${G}^\star_{K \sharp U}$}. \label{fig:dim2}
    \end{figure}
      
    Theorem~\ref{theorem3} is valid under the condition $K\geqslant \underline {K}_2$, where 
    \[\underline {K}_2:= n\max_{i=1,\hdots,n} \sum_{j\in \sigma^{-1}(i)} \frac{1}{2} (\|X_{\sigma(j-1)}-X_i\|+\|X_{\sigma(j+1)}-X_i\|).\]
    As $\underline {K}_2$ (and thus $K$) is a function of $n$, it is therefore natural to understand the behavior of $W_1({\widehat G}^{\star}_{K\sharp  U},\mu)$ when $n$ tends to infinity. 
    \begin{proposition}\label{propositionK2}
    Assume that $\mu$ has a probability density with respect to the Lebesgue measure on $\mathbb R^d$ and that $S(\mu)$ is bounded. 
    \begin{enumerate}
    \item We have  
    \[\frac{1}{\underline {K}_2}=\mathscr O(n^{-1+1/d}) \text{ a.s.}\]
    \item If, in addition, the density of $\mu$ is bounded away from $0$ on $S(\mu)$, then, for all $K\geqslant \underline {K}_2$, in probability,
    \[W_1({\widehat G}^{\star}_{K\sharp  U},\mu)=\left\{
        \begin{array}{ll}
            \mathscr O(\frac{\log n}{\sqrt{n}}) & \mbox{for } d=2 \\
            \mathscr O (n^{-1/d}) & \mbox{for } d\geqslant 3.
        \end{array}
    \right.
    \]
    \end{enumerate}
    \end{proposition}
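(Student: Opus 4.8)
The plan is to treat the two items separately. Item~1 rests on a short re‑indexing identity relating $\underline K_2$ to the total (un‑squared) Euclidean length of the optimal path $\sigma$; item~2 combines Proposition~\ref{proposition4} with the triangle inequality $W_1(\widehat G^{\star}_{K\sharp U},\mu)\le W_1(\widehat G^{\star}_{K\sharp U},\mu_n)+W_1(\mu_n,\mu)$, a comparison bound for the squared path length, and the Fournier--Guillin rates for $W_1(\mu_n,\mu)$. For item~1, set $a_i=\sum_{j\in\sigma^{-1}(i)}\tfrac12(\|X_{\sigma(j-1)}-X_i\|+\|X_{\sigma(j+1)}-X_i\|)$, so that $\underline K_2=n\max_{1\le i\le n}a_i$. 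Since $\{\sigma^{-1}(i)\}_{i=1}^{n}$ partitions $\{1,\dots,n+k\}$ and, with the conventions $X_{\sigma(0)}=X_{\sigma(1)}$, $X_{\sigma(n+k+1)}=X_{\sigma(n+k)}$, each edge $\{X_{\sigma(\ell)},X_{\sigma(\ell+1)}\}$ is counted with total weight $1$,
\[
\sum_{i=1}^{n}a_i=\sum_{\ell=1}^{n+k-1}\|X_{\sigma(\ell+1)}-X_{\sigma(\ell)}\|=:L .
\]
Hence $\underline K_2=n\max_i a_i\ge n\cdot\frac1n\sum_i a_i=L$. The edges traversed by the walk $\sigma$ form a connected graph spanning $X_1,\dots,X_n$, so $L$ is at least the length of a Euclidean minimum spanning tree on the sample; by the classical growth rate of the minimum spanning tree (Beardwood--Halton--Hammersley, Steele), $n^{-1+1/d}\,\mathrm{MST}_n\to c_d\int f^{1-1/d}$ a.s., and the limit is strictly positive because $\mu$ is a non‑degenerate absolutely continuous law with bounded support (so $0<\int f^{1-1/d}<\infty$). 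Therefore $L\ge c\,n^{1-1/d}$ a.s.\ eventually, i.e.\ $1/\underline K_2=\mathscr O(n^{-1+1/d})$ a.s.

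For item~2, Proposition~\ref{proposition4} and $K\ge\underline K_2$ give
\[
W_1(\widehat G^{\star}_{K\sharp U},\mu_n)=\frac{1}{4K}\sum_{j=1}^{n+k-1}\|X_{\sigma(j+1)}-X_{\sigma(j)}\|^2\le\frac{1}{4\underline K_2}\,S_n,\qquad S_n:=\sum_{j=1}^{n+k-1}\|X_{\sigma(j+1)}-X_{\sigma(j)}\|^2 .
\]
Because $\sigma$ minimises~\eqref{shortestpath}, $S_n$ is no larger than the squared length of any admissible competitor; taking the Euler walk of a doubled minimum spanning tree (each tree edge crossed twice, which is a genuine element of $\mathscr S_{k'}$ since a tree has no loop), one gets $S_n\le 2\sum_{e\in\mathrm{MST}_n}\|e\|^2$. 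The $2$‑power‑weighted minimum spanning tree satisfies $\sum_{e\in\mathrm{MST}_n}\|e\|^2=\mathscr O(n^{1-2/d})$ a.s.\ for $d\ge3$ and $\mathscr O(\log n)$ a.s.\ in the critical dimension $d=2$, the integrability $\int f^{1-2/d}<\infty$ being automatic for $\mu$ with bounded support; alternatively one proves the same bound self‑containedly by building a boustrophedon path on a grid of mesh $\asymp n^{-1/d}$ and using $f\ge c_0>0$ on $S(\mu)$, together with Borel--Cantelli, to force runs of empty cells to have length $\mathscr O(\log n)$. Combining with item~1, $W_1(\widehat G^{\star}_{K\sharp U},\mu_n)=\mathscr O(n^{-1+1/d})\cdot\mathscr O(n^{1-2/d})=\mathscr O(n^{-1/d})$ for $d\ge3$, and $=\mathscr O(n^{-1/2}\log n)$ for $d=2$. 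Finally, since $\mu$ has bounded support, \citet[][Theorem~1]{FournierGuillin2015} gives $\mathbb E\,W_1(\mu_n,\mu)=\mathscr O(n^{-1/d})$ for $d\ge3$ and $\mathscr O(n^{-1/2}\log n)$ for $d=2$, so $W_1(\mu_n,\mu)$ is of that order in probability; adding the two contributions yields the announced rates (for $d=2$ the logarithm is inherited entirely from the second term).

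The routine ingredients are the re‑indexing identity, the MST‑doubling comparison, and the citation of Fournier--Guillin. The main obstacle is the control of $S_n$: reducing it to the $2$‑power‑weighted minimum spanning tree is clean, but $d=2$ is exactly the critical exponent of Steele's subadditive theory, so one needs either a careful logarithmic‑correction estimate or the explicit grid construction — and the grid construction is where the geometry of $S(\mu)$ (boundary cells, possible disconnectedness) and the long runs of empty cells must be handled, which is precisely where the hypothesis that $f$ is bounded away from $0$ is used. A minor point to keep in mind is that every competitor walk built from a spanning tree must be verified to lie in $\mathscr S_{k'}$, i.e.\ never to repeat a vertex in two consecutive positions; this is automatic for Euler walks of doubled loopless trees.
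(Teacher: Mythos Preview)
Your proof is correct and shares the overall architecture of the paper's: the re-indexing identity $\sum_i a_i=L$ for item~1, and the triangle inequality together with Fournier--Guillin for item~2. There are, however, two genuine differences in execution worth noting.

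First, you work with the minimum spanning tree throughout, whereas the paper uses the shortest Hamiltonian path (TSP). Both give the $n^{1-1/d}$ lower bound for $L$ via Beardwood--Halton--Hammersley/Steele, so this is purely cosmetic for item~1.

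Second, and more substantively, for item~2 the paper does \emph{not} decouple $1/K$ and $S_n$. Instead, it takes the TSP-optimal permutation $\tau\in\mathscr P_n$, bounds $S_n\le\sum_{j}\|X_{\tau(j+1)}-X_{\tau(j)}\|^2$ (since $\tau$ is an admissible competitor in~\eqref{shortestpath} with $k'=0$), and combines this with $\underline K_2\ge\sum_j\|X_{\tau(j+1)}-X_{\tau(j)}\|$ to get the single ratio
\[
\frac{1}{K}\,S_n\;\le\;\frac{\sum_j\|X_{\tau(j+1)}-X_{\tau(j)}\|^2}{\sum_j\|X_{\tau(j+1)}-X_{\tau(j)}\|}.
\]
Then \citet[][Theorem~1.3]{Yukich2000} handles the $\ell$-power TSP for all $0\le\ell\le d$ simultaneously, giving $\mathscr O(n^{-1/d})$ for every $d\ge 2$ with no separate treatment of the critical case $d=2$. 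Your decoupled approach forces you into the critical-exponent analysis of the $2$-power MST; your $\mathscr O(\log n)$ claim there is in fact pessimistic (the sharp order is $\mathscr O(1)$), though the overshoot is harmless for the final rate. The paper's ratio trick is the cleaner route.

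One small slip: your parenthetical ``for $d=2$ the logarithm is inherited entirely from the second term'' contradicts the line just before it, where your own bound on $S_n$ already produces a $\log n$ in the first term.
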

    The proof of Proposition~\ref{propositionK2} reveals that, for $d \geqslant 2$, $W_1({\widehat G}^{\star}_{K\sharp  U},\mu_n)=\mathscr O(n^{-1/d})$ in probability, which coincides with the rate of $W_1(\mu,\mu_n)$ for $d \geqslant 3$ \citep[][Theorem 1]{FournierGuillin2015}. However, for $d=2$, $W_1(\mu,\mu_n)=\mathscr O(\frac{\log n}{\sqrt{n}})$, and the speed of convergence to $0$ of $W_1({\widehat G}^{\star}_{K\sharp  U},\mu)$ is therefore slightly slowed down by the term $W_1(\mu,\mu_n)$. In essence, this proposition states that while $K\geqslant \underline {K}_2$ tends to infinity as $n$ grows, the infimum is taken over a larger collection of functions, which enables ${\widehat G}^{\star}_{K\sharp  U}$ to get closer to the target distribution $\mu$ for the $1$-Wasserstein distance. \citet{liang2018well} derived minimax-type results for classes of absolutely continuous distributions defined with Sobolev constraints. 
    \section{Conclusion}\label{sec:conclusion}
    We provided in this paper a thorough analysis of the properties of WGANs, in both the finite sample and asymptotic regimes. Although the dimension of the latent space is assumed to be equal to $1$, the results are valid regardless of the dimension $d$ of the output space. In this setting, we showed that for a fixed sample size $n$, optimal WGANs are closely linked with connected paths minimizing the sum of the squared Euclidean distances between the sample points. We also highlighted the fact that WGANs are able to approach (for the $1$-Wasserstein distance) the target distribution as $n$ tends to infinity, at a given convergence rate and provided the family of Lipschitz functions grows with $K$. We derived in passing new results on optimal transport theory in the semi-discrete setting. In a nutshell, the main message is that WGANs generate data that lie on very specific regions of the ambient space---thus showing some limited ``creativity''--- while being able to asymptotically recover the unknown distribution of the observations under appropriate assumptions.
    
    Nevertheless, many questions remain open and should, in our eyes, be given special attention. First, the current approach is based on a somewhat ideal definition of WGANs, in the sense that we use $\text{Lip}_K([0,1],\mathbb R^d)$ and $\text{Lip}_1(\mathbb R^d,\mathbb R)$ for, respectively, the generator and the discriminator. However, one should keep in mind that in practice both the generator and the discriminator are implemented by deep neural networks. It follows that the results of the paper have to be appreciated in light of the approximation capabilities of neural networks. In particular, larger datasets will require deeper and more expressive networks to reconstruct the optimal functions ${\widehat G}_{K}^{\star}$. Also, using neural networks, the sample points in the dataset are less likely to be overfitted, thus getting closer to the true purpose of generative models, which is to mimic the observations without resampling from the learning database. We believe that studying the potential benefits of this regularization effect is an interesting problem. Next, it was assumed throughout that the latent random variable $U$ is uniform. The extension to latent variables with unbounded support, such as Gaussian distributions, is not straightforward and requires careful investigation. 
    \begin{figure}[h]
        \centering
        \subfloat[$W_1({\widehat G}^{\star}_{K\sharp  U},\mu_n)=0.27$.]
        {
            \includegraphics[width=0.23\linewidth]{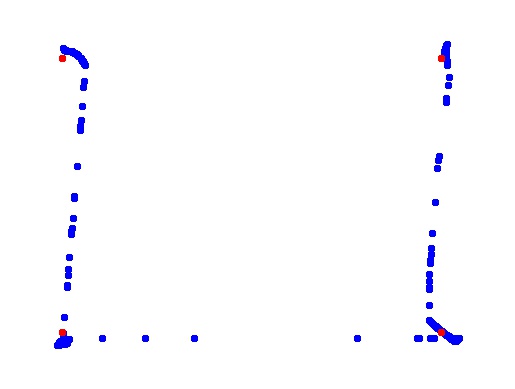}
        }\hfill
        \subfloat[$W_1({\bar G}^{\star}_{K\sharp  U},\mu_n)=0.15$.]
        {
            \includegraphics[width=0.23\linewidth]{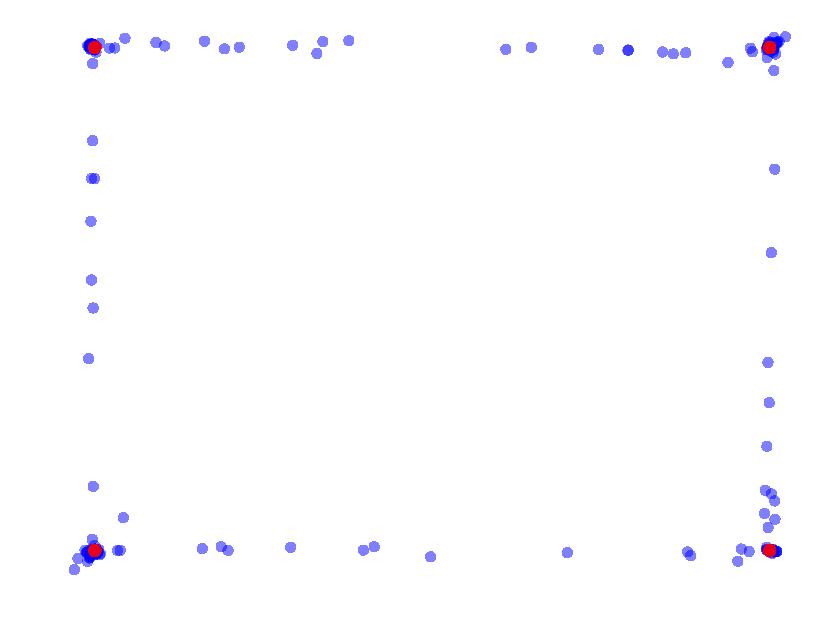}
        }
        \subfloat[Latent space heatmap.]
        {
            \includegraphics[width=0.23\linewidth]{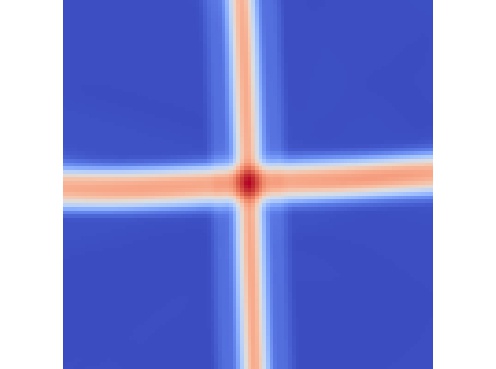}
        }
        \subfloat[Supp. green points.]
        {
            \includegraphics[width=0.23\linewidth]{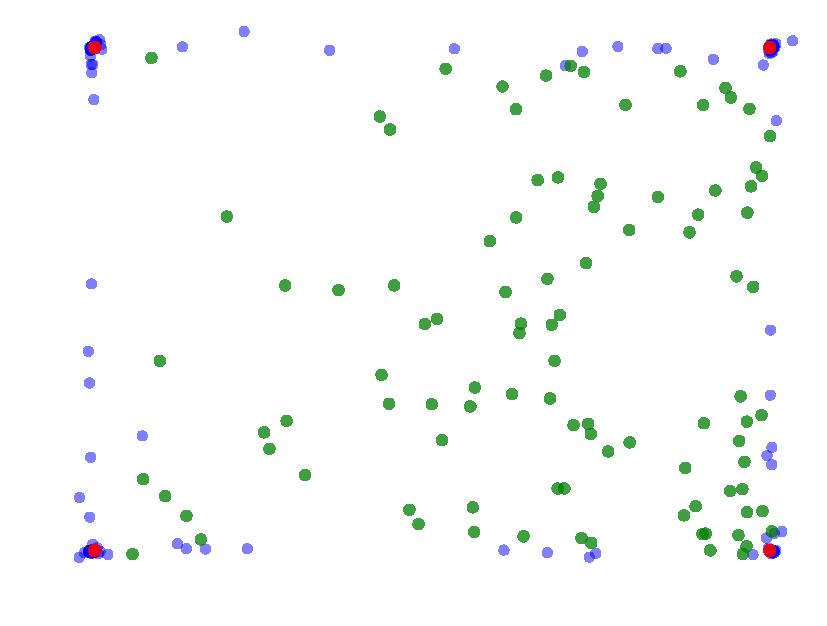}
        } \vfill
        \subfloat[$W_1({\widehat G}^{\star}_{K\sharp  U},\mu_n)=0.16$.]
        {
            \includegraphics[width=0.23\linewidth]{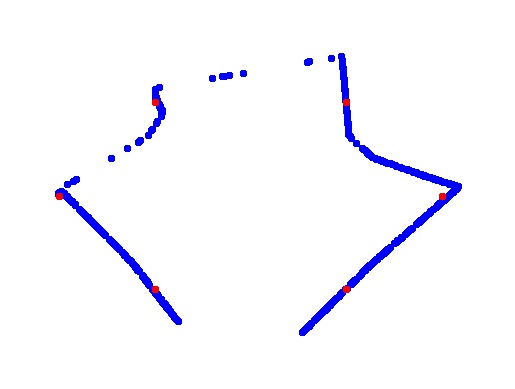}
        }\hfill
        \subfloat[$W_1({\bar G}^{\star}_{K\sharp  U},\mu_n)=0.10$.]
        {
            \includegraphics[width=0.23\linewidth]{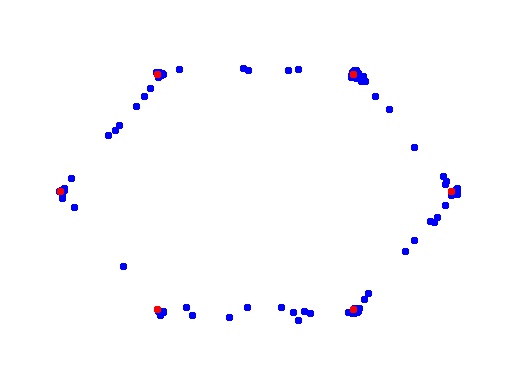}
        }
        \subfloat[Latent space heatmap.]
        {
            \includegraphics[width=0.23\linewidth]{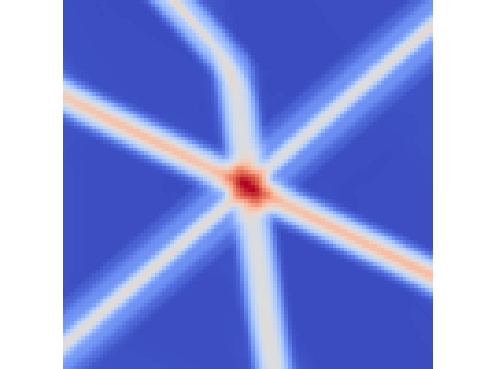}
        }
        \subfloat[Supp. green points.]
        {
            \includegraphics[width=0.23\linewidth]{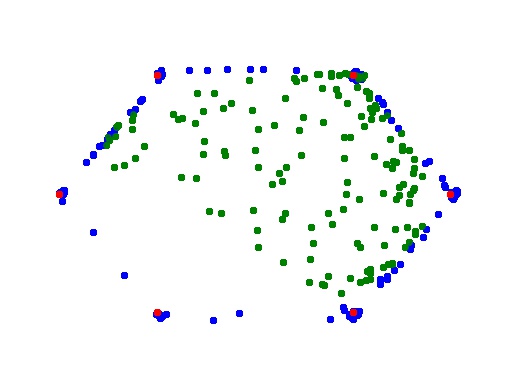}
        }\vfill
        \subfloat[$W_1({\widehat G}^{\star}_{K\sharp  U},\mu_n)=0.42$.]
        {
            \includegraphics[width=0.23\linewidth]{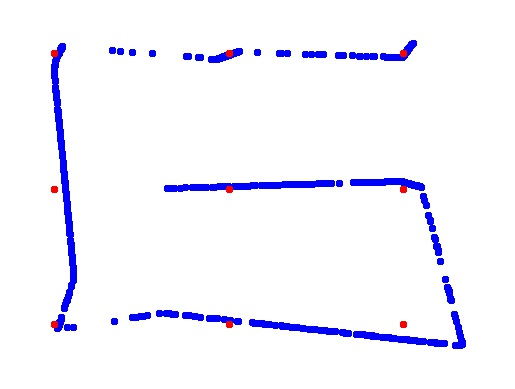}
        }\hfill
        \subfloat[$W_1({\bar G}^{\star}_{K\sharp  U},\mu_n)=0.13$.]
        {
            \includegraphics[width=0.23\linewidth]{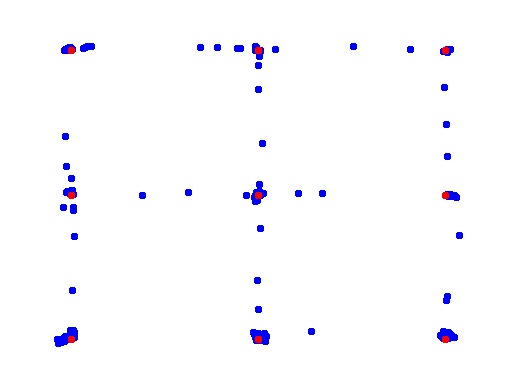}
        }
        \subfloat[Latent space heatmap.]
        {
            \includegraphics[width=0.23\linewidth]{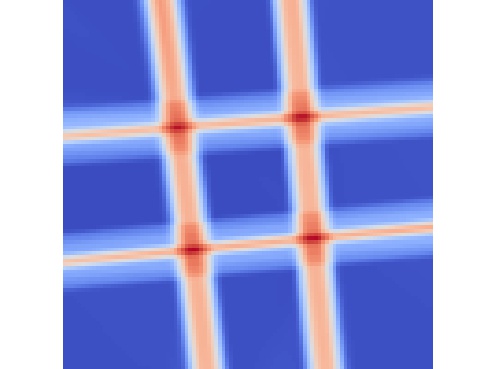}
        }
        \subfloat[Supp. green points.]
        {
            \includegraphics[width=0.23\linewidth]{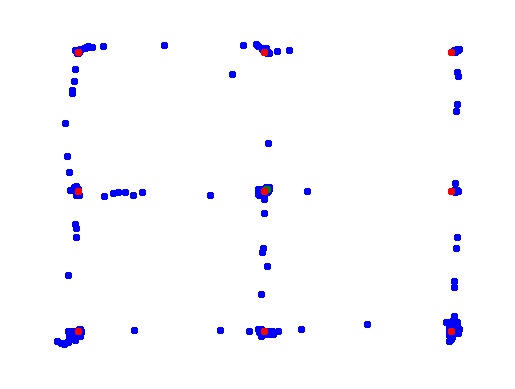}
        }
        
        \caption{Influence of the dimension $p$ of the latent space. In the left column, we use a uniform latent distribution on $[0,1]$ (target points in red, sampled points in blue). In the second column, we use a uniform latent distribution on $[0,1]^2$ (the optimal generator is denoted by ${\bar G}^{\star}_K$). The third column shows heatmaps of the gradients' norm of the optimal generator (the bluer the lower and the redder the higher). Finally, the last column shows supplementary points sampled close to $(\frac{1}{2},\frac{1}{2})$ (in the latent space). \label{fig:fig_conclusion}}
    \end{figure}
    
    Finally, an interesting research direction is to understand and analyze the mechanisms of WGANs when the dimension $p$ of the latent space is strictly larger than 1. In this context, the univariate shortest paths will be replaced by surfaces, and the interesting question will then be to understand the driving forces of WGANs when $p<d$ and $p=d$. As a teaser, we show in Figure~\ref{fig:fig_conclusion} the impact of increasing the dimension of the latent space from $p=1$ to $p=2$, in the case where data (in red) lie in dimension $d=2$. We note that when $p=1$, the WGAN is able to find the shortest paths for the squared Euclidean distance, as predicted by the theory. For $p=2$, the situation is quite intriguing since the $1$-Wasserstein distance between the empirical measure and the pushforward distribution of $U$ by the optimal function $G$ is decreasing. Besides, the generated distributions seem to be concentrated with positive mass on the data points and, with decreasing probabilities, on a path---theoretically undetermined---linking them. Note however that it seems also possible to generate samples anywhere in the convex hull of the data points. This is illustrated in the fourth column of the figure, where we voluntarily sample latent vectors close to the center $(\frac{1}{2},\frac{1}{2})$. We visualize on the heatmaps in the third column the appearance of areas with high gradients of the optimal generator, dividing the latent space. Analyzing the geometrical properties of these latent configurations is a very exciting avenue for future research.

%
%

\begin{acks}
The authors thank J.~Lambolley and M.~Pierre for stimulating and fruitful discussions.
They also thank the Editor-in-Chief and two anonymous referees for their careful reading of the paper and constructive comments, which led to a substantial improvement of the document.
\end{acks}
%

\bibliography{adversarial_training}



\appendix
\section{Proof of Lemma ~\ref{lemma1}}
    We only focus on the first statement since the proof of the second one is similar. Let $G, G' \in \text{Lip}_K ([0,1],\mathbb R^d)$. Observe that by the triangle inequality and the primal definition of the $1$-Wasserstein distance, we have 
    \begin{align*}
        |W_1(G_{\sharp U}, \mu_n) - W_1(G'_{\sharp U}, \mu_n)|
        & \leqslant W_1(G_{\sharp U}, G'_{\sharp U}) \\
        & \leqslant  \int_{\mathbb{R}^d\times \mathbb R^d} \|x-y\| {\rm d}\gamma(x,y),
    \end{align*}
        where $\gamma$ is the pushforward distribution of $U$ by the pair $(G,G')$, with marginals $G_{\sharp U}$ and $G'_{\sharp U}$.  Thus,
        \begin{align*}
            |W_1(G_{\sharp U}, \mu_n) - W_1(G'_{\sharp U}, \mu_n)| & 
            \leqslant \int_{[0,1]} \|G(u)- G'(u)\| {\rm d} u \\
            &\leqslant \|G-G'\|_\infty,
        \end{align*}
    where $\|\cdot\|_{\infty}$ denotes the supremum norm of functions, i.e., for $f:[0,1]\to\mathbb{R}^d$, $\|f\|_\infty = \sup\{\|f(x)\| : x\in[0,1] \}$. Hence the map $\text{Lip}_K ([0,1],\mathbb R^d)\ni G \mapsto W_1(G_{\sharp U}, \mu_n)$ is continuous with respect to the uniform norm.
    
    Now let $G^0\equiv X_1$ be a constant function on $[0,1]$. Then, clearly,
    $W_1(G^0_{\sharp U}, \mu_n)< \infty.$ Next, let $G$ be any function in $\text{Lip}_K ([0,1],\mathbb R^d)$ such that \[\|G\|_{\infty}\geqslant W_1(G^0_{\sharp U}, \mu_n) + K+ \underset{ i=1,\hdots,n}{\max} \|X_i\|.\] Then, upon observing that there exists $u_0 \in [0,1]$ such that $\|G(u_0)\|=\|G\|_{\infty}$ and using the fact that $G$ is $K$-Lipschitz continuous on $[0,1]$, we deduce that for all $u \in [0,1]$ and any $i \in \{1,\hdots, n\}$, one has 
    \[\|G(u)-X_i\|\geqslant \|G\|_\infty -K- \|X_i\| \geqslant\|G\|_\infty  -K-\underset{ i=1,\hdots,n}{\max} \|X_i\|.\]
    Hence, $\|G(u)-X_i\| \geqslant W_1(G^0_{\sharp U}, \mu_n)$, which implies that $W_1(G_{\sharp U}, \mu_n) \geqslant W_1(G^0_{\sharp U}, \mu_n)$. Therefore, letting 
    \[
    \mathscr H_K=
        \{ G \in \text{Lip}_K ([0,1],\mathbb R^d)\ :\ \|G\|_\infty \leqslant W_1(G^0_{\sharp U}, \mu_n) + K+\underset{i=1,\hdots, n}{\max} \|X_i\|\}, 
    \]
    we see that
    \[
    \inf_{G\in \textrm{Lip}_K([0,1], \mathbb R^d)} W_1(G_{\sharp  U},\mu_n)
    =
    \inf_{G\in \mathscr H_K} W_1(G_{\sharp  U},\mu_n).
    \]
    
    Endowed with the uniform norm, $\mathscr H_K$ is closed and relatively compact by the Arzel{\`a}-Ascoli theorem. It is thus a compact subset of $\text{Lip}_K ([0,1],\mathbb R^d)$. Consequently, by continuity and the above equality, $\text{Lip}_K ([0,1],\mathbb R^d)\ni G \mapsto W_1(G_{\sharp U}, \mu_n)$ attains its minimum on $\mathscr H_K$. Therefore,  ${\widehat{\mathscr{G}}}_K$ is not empty. 
    \section{Proof of Theorem~\ref{theorem1asymptotique1}}
    \subsection*{Proof of $1(i)$} Since $\mu$ is of order $1$, one has $\lim_{n \to \infty}W_1(\mu,\mu_n)=0$ a.s.~according to \citet[][Theorem 6.8]{villani2008optimal}. Hence, by the triangle inequality and because ${\widehat G}_{K} \in \mathscr {\widehat G}_{K}$, we only need to prove that 
    \[\lim_{n\to \infty} \inf_{G\in \textrm{Lip}_K([0,1],\mathbb{R})} W_1(G_{\sharp  U},\mu_n)=0 \text{ a.s.}\]
    If $K\geqslant K_0$, then $\text{Lip}_{K_0}([0,1],\mathbb{R})\subseteq\text{Lip}_K([0,1],\mathbb{R})$. Therefore,
    \[0\leqslant \inf_{G\in \text{Lip}_K([0,1],\mathbb{R})} W_1(G_{\sharp  U},\mu_n)\leqslant \inf_{G\in \text{Lip}_{K_0}([0,1],\mathbb{R})} W_1(G_{\sharp  U},\mu_n)\leqslant W_1(F^{-1}_{\sharp  U},\mu_n),\]
    since, by assumption, $F^{-1}\in \text{Lip}_{K_0}([0,1],\mathbb{R})$. 
    But $F^{-1}(U)$ has distribution $\mu$, and thus one has $\lim_{n\to \infty} W_1(F_{\sharp  U}^{-1},\mu_n)=0$. This proves the result.
    \subsection*{Proof of $(2)$} The result is proved by contradiction. Fix $K>0$ and assume that on an event of strictly positive probability 
    \[\liminf_{n\to \infty} \ W_1({\widehat G}_{K\sharp  U},\mu)=0.\]
    Since $\lim_{n \to \infty}W_1(\mu,\mu_n)=0$ a.s.~and ${\widehat G}_{K} \in \mathscr {\widehat G}_{K}$, we see that
    \[\inf_{G\in \text{Lip}_K([0,1],\mathbb{R})} W_1(G_{\sharp  U},\mu)=0.\]
    Now, by Lemma~\ref{lemma1}, there exists $G_K \in \text{ Lip}_K([0,1], \mathbb{R})$ such that
    \[W_1(G_{K\sharp  U},\mu)=\inf_{G\in \text{Lip}_K([0,1],\mathbb{R})} W_1(G_{\sharp  U},\mu).\]
    So, $W_1(G_{K\sharp  U},\mu)=0$ and therefore, since $F^{-1}(U)$ has distribution $\mu$, we have 
    \begin{equation}
    \label{refloi} 
    G_K(U)\stackrel{\mathscr L}{\sim} F^{-1}(U). 
    \end{equation} 
    Next, by continuity of $G_K$, there exists a compact set $C\subseteq \mathbb{R}$ such that $\mathbb P(G_K (U)\in C)=1$. But, since $S(\mu)$ is unbounded, $\mathbb P(F^{-1}(U)\in C)=\mu(C)<1$, which contradicts~\eqref{refloi}. 
    \subsection*{Proof of $1(ii)$} We show the result by contradiction, assuming as in the proof of statement $(2)$ that for $K<1/K_1$, on an event of strictly positive probability, 
    \[\liminf_{n\to \infty} \ W_1({\widehat G}_{K\sharp  U},\mu)=0.\]
    Arguing as in the previous proof, we have that $G_K(U)\stackrel{\mathscr L}{\sim} F^{-1}(U)$. Then, it is a classical exercise to deduce from~\eqref{refloi}, since $F^{-1}(u)>-\infty$ for all $u\in (0,1)$ and $F$ is continuous, that $F\circ G_K (U)\stackrel{\mathscr L}{\sim} U$. Iterating this relation leads to
    \begin{equation}
    \label{iter}
    (F\circ G_K)^\ell(U)\stackrel{\mathscr L}{\sim} U, \quad \forall \ell\geqslant 0.
    \end{equation}
    Moreover, both assumptions $F\in \text{Lip}_{K_1}(\mathbb{R},[0,1])$ and $G_K\in \text{Lip}_K([0,1],\mathbb{R})$ imply 
    \[|F\circ G_K(u)-F\circ G_K (v)|\leqslant KK_1|u-v|\leqslant KK_1, \quad \forall (u,v)\in [0,1]^2.\]
    Repeating this inequality entails, for all $\ell\geqslant 0$,
    \[|(F\circ G_K)^\ell(u)-(F\circ G_K)^\ell (v)|\leqslant (KK_1)^\ell, \quad \forall (u,v)\in [0,1]^2.\]
    But, for all $u\in [0,1]$, the sequence $((F\circ G_K)^\ell(u))_{\ell\geqslant 1}$ is bounded by 1. In addition, $KK_1<1$ by assumption. Thus, there exist $a\in [0,1]$ and a subsequence $(\ell_q)_{q\geqslant 1}$ such that, for all $u\in [0,1]$, 
    \[\lim_{q\to \infty} (F\circ G_K)^{\ell_q}(u)=a.\]
    Hence, as $q\to\infty$, $(F\circ G_K)^{\ell_q}(U)$ almost surely converges to $a$, which contradicts~\eqref{iter}. 
    \section{Proof of Theorem~\ref{theorem1asymptotique2}}
    Looking for a contradiction, we start as in the proof of Theorem~\ref{theorem1asymptotique1}, cases $(1ii)$ and $(2)$, by assuming that on an event of strictly positive probability, 
    {\[\liminf_{n\to \infty} \ W_1({\widehat G}_{K\sharp U},\mu)=0.\]}
    As we have seen, this implies $W_1(G_{K\sharp  U},\mu)=0$ and, in turn, 
    since the support of ${G}_{K\sharp U}$ is included in $G_K([0,1])$,  $S(\mu)\subseteq G_K([0,1])$. By our assumption on $S(\mu)$, we therefore conclude that $\lambda_d(G_K ([0,1]))>0$. Moreover, since $G_K \in \text{Lip}_K([0,1],\mathbb{R}^d)$, we have that
    $0<\lambda_d(G_K([0,1])) = \mathcal H_d(G_K([0,1]))\leqslant K^d \mathcal H_d([0,1])$, where $\mathcal H_d$ is the $d$-dimensional Hausdorff measure \citep[see, e.g.,][Theorem 2.8]{EvansGariepy2015}. But this is impossible since $\mathcal H_d([0,1])=0$ as soon as $d>1$. 
    \section{Proof of Proposition~\ref{proposition1}}
    To lighten the notation, it is assumed throughout the proof that the $X_i$'s are ordered by increasing values, i.e., $X_1 \leqslant X_2 \leqslant \cdots \leqslant X_n$. According to \citet[][Proposition 2.17]{santambrogio2015optimal}, the $1$-Wasserstein distance between two probability measures $\pi_1$ and $\pi_2$ on the real line, with respective generalized inverses $F_1^{-1}$ and $F_2^{-1}$, is such that 
\[W_1(\pi_1,\pi_2)=\int_0^1 |F_1^{-1}(u)-F_2^{-1}(u)|{\rm d} u.\]
Since ${\widehat G}_{K}^\star$ is monotone and continuous, the generalized inverse of ${\widehat G}_{K\sharp  U}^\star$ is ${\widehat G}_{K}^\star$. Also, denoting by ${F}^{-1}_{\mu_n}$ the generalized inverse of $\mu_n$, we have ${F}^{-1}_{\mu_n}(u)=\sum_{i=1}^nX_i \mathds{1}{\{u\in ((i-1)/n,i/n]\}}$. Therefore,
\begin{align*}
    W_1({\widehat G}^{\star}_{K\sharp  U},\mu_n)& = \int_0^1 |{\widehat G}_{K}^\star(u)-{F}^{-1}_{\mu_n}(u)|{\rm d} u\\
    &  =  \sum \limits_{i=1}^{n-1} \int_{i/n-\frac{X_{i+1}-X_{i}}{2K}}^{i/n}  \Big|X_i+K\big(u-(\frac{i}{n}-\frac{X_{i+1}-X_{i}}{2K})\big)-X_i\Big|{\rm d}u\\
    & \quad + \sum \limits_{i=1}^{n-1} \int_{i/n}^{i/n+\frac{X_{i+1}-X_{i}}{2K}}  \Big|\frac{X_{i+1}-X_{i}}{2K}+K(u-\frac{i}{n})-X_{i+1}\Big|{\rm d}u\\
    & = \sum \limits_{i=1}^{n-1} \frac{1}{2}K\big(\frac{(X_{i+1}-X_i)^2}{4K^2} +\frac{(X_{i+1}-X_i)^2}{4K^2}\big) \\
    & = \frac{1}{4K} \sum \limits_{i=1}^{n-1}(X_{i+1}-X_i)^2,
\end{align*} 
as desired. 
    \section{Proof of Theorem~\ref{theorem1}}
    As in the proof of Proposition~\ref{proposition1}, it is assumed without loss of generality that the $X_i$'s are ordered by increasing values, i.e., $X_1 \leqslant X_2 \leqslant \cdots \leqslant X_n$. Let $G:[0,1] \to \mathbb{R}$ be an arbitrary $K$-Lipschitz continuous function in $\widehat {\mathscr G}_K$, with $K\geqslant n \max \limits_{i=1,\hdots,n-1} (X_{i+1}-X_i)$. According to Proposition~\ref{proposition1}, the first statement will be proven if we show that for such a function $G$, 
    \[W_1(G_{\sharp  U},\mu_n) \geqslant \sum \limits_{i=1}^{n-1} \frac{(X_{i+1}-X_i)^2}{4K}.\]
    
Let $\Pi(\pi_1,\pi_2)$ be the set of couplings between two probability measures $\pi_1$ and $\pi_2$. According to \citet[][Lemma 2.12]{ambrosio2013user}, for any $\pi \in \Pi(G_{\sharp U},\mu_n)$, there exists a coupling $\gamma\in \Pi(\lambda_1,\mu_n)$ such that $\pi=(G,\textrm{Id})_{\# \gamma}$, where $\lambda_1$ stands for the Lebesgue measure on the interval $[0,1]$ and $\textrm{Id}$ is the identity function. Therefore,
\begin{align*}
    W_1(G_{\sharp  U},\mu_n)& = \inf_{\pi \in \Pi(G_{\sharp U},\mu_n)}\int_{\mathbb{R}\times \mathbb{R}}|x-y|{\rm d}\pi(x,y)\\
    & \geq \inf_{\gamma \in \Pi(\lambda_1,\mu_n)}\int_{[0,1]\times \mathbb{R}}|G(u)-y|{\rm d}\gamma(u,y).
\end{align*}

Since the function $(u,y)\mapsto |G(u)-y|$ is continuous, then, according to \citet[][Theorem B]{pratelli2020existence}, we have 
    \[
    \inf \limits_{\gamma \in \Pi(\lambda_1,\mu_n)}\int_{[0,1]\times \mathbb{R}} |G(u)-y|{\rm d} \gamma(u,y) = \inf \limits_T\int_0^1 |G(u)-T(u)|{\rm d} u,
    \]
    where the infimum is taken over all measurable functions $T:[0,1] \to \{X_1, \hdots, X_n\}$ such that $T_{\sharp  U} = \mu_n$. 
    Any such transport map $T$ takes the form $T(u) = \sum_{i=1}^n  X_i\mathds{1}\{u \in C_i\}$, where $C_1,\hdots,C_n$ are Borel subsets of $[0,1]$ such that 
    $\lambda_1(C_i)= \frac{1}{n}$. We conclude that
    \begin{equation}\label{Monge}  W_1(G_{\sharp  U},\mu_n)  \geqslant \inf \limits_{C_1,\hdots,C_n}  \sum \limits_{i=1}^n\int_{C_i}  |G(u)-X_i|{\rm d}u,
    \end{equation} 
    where the infimum is taken over all disjoint Borel sets $C_1,\hdots,C_n \subseteq [0,1]$ such that $\lambda_1(C_i)= \frac{1}{n}$. To prove the first statement of the theorem, it is therefore sufficient to lower bound the infimum above. 
    
    The case $n=1$ is clear since the function $G(u) \equiv X_1$ satisfies $W_1(G_{\sharp  U},\mu_1)=0$. Thus, in the sequel, it is assumed that $n\geqslant 2$. We let $a = \inf \limits_{[0,1]} G$, $b= \sup \limits_{ [0,1]} G$, and $\ell_1\leqslant \ell_2$ so that $X_{{\ell}_1} = \min \limits_{X_i \geqslant a} X_i$ and $X_{{\ell}_2} = \max \limits_{X_i \leqslant b} X_i$. Note that we can safely assume that $\ell_1$ and $\ell_2$ are well-defined, since for $\hat{G}(u) := G(u)\mathds{1}\{G(u) \in [X_1,X_n]\} +X_1 \mathds{1}\{G(u) < X_1\} +X_n \mathds{1}\{G(u) > X_n\}$, we have 
    \[\inf \limits_{C_1,\hdots,C_n}  \sum \limits_{i=1}^n\int_{C_i}  |G(u)-X_i|{\rm d}u \geqslant \inf \limits_{C_1,\hdots,C_n}  \sum \limits_{i=1}^n\int_{C_i}  |\hat{G}(u)-X_i|{\rm d}u.\]
    We also suppose that $n>\ell_2 \geqslant \ell_1 + 1>1$ and leave the other cases as straightforward adaptations. Since $G$ is continuous, for each $i \in \{{\ell}_1,\hdots,{\ell}_2-1\}$, there exists $u_i \in [0,1]$ such that $G(u_i) = \frac{X_i+X_{i+1}}{2}$. We let $A_i^- = [u_i-\frac{X_{i+1}-X_{i}}{2K},u_i]$, $A_i^+ = [u_i,u_i +  \frac{X_{i+1}-X_{i}}{2K}]$, and write $T(u) = \sum \limits_{j=1}^n X_j \mathds{1}\{u\in C_j\}$. With this notation, 
    \begin{align} 
    \int_{A_i^-} |G(u)-T(u)|{\rm d}u & = \sum \limits_{j=1}^i \int_{A_i^-} (G(u)-X_i + X_i - X_j) \mathds{1}\{u\in C_j\} {\rm d}u \nonumber\\
    & +  \sum \limits_{j=i+1}^n \int_{A_i^-} (X_{i+1}-G(u) + X_j - X_{i+1}) \mathds{1}\{u\in C_j\} {\rm d}u\nonumber\\
    & = \sum \limits_{j=1}^i \Big[\int_{A_i^-} (G(u)-X_i ) \mathds{1}\{u\in C_j\} {\rm d}u +\lambda_1(C_j\cap A_i^-)(X_i-X_j)\Big] \nonumber\\
    & \hspace{-1cm} + \sum \limits_{j=i+1}^n \Big[\int_{A_i^-} (X_{i+1}-G(u)) \mathds{1}\{u\in C_j\} {\rm d}u + \lambda_1(C_j\cap A_i^-)(X_j - X_{i+1})\Big].\label{Ai-}
    \end{align}
    Exploiting the fact that the function $G$ is $K$-Lipschitz continuous and $G(u_i)=\frac{X_{i}+X_{i+1}}{2}$, we have that for $u\in A_i^- \cup A_i^+$, $\frac{X_{i}+X_{i+1}}{2}-K|u_i-u|\leqslant G(u) \leqslant \frac{X_{i}+X_{i+1}}{2}+K|u_i-u|$. Thus, 
    \begin{align}
     &\sum_{j=1}^i \int_{A_i^-} (G(u)-X_i ) \mathds{1}\{u\in C_j\} {\rm d}u + \sum_{j=i+1}^n \int_{A_i^-} (X_{i+1}-G(u)) \mathds{1}\{u\in C_j\} {\rm d}u\nonumber\\ 
    & \quad \geqslant  \sum_{j=1}^i \int_{A_i^-} \Big(\frac{X_{i}+X_{i+1}}{2}-K(u_i-u)-X_i \Big) \mathds{1}\{u\in C_j\} {\rm d}u\nonumber \\ 
    & \qquad + \sum_{j=i+1}^n \int_{A_i^-} \Big(X_{i+1}-\Big(\frac{X_{i}+X_{i+1}}{2}+K(u_i-u)\Big)\Big) \mathds{1}\{u\in C_j\} {\rm d}u \nonumber \\
    & \quad = \sum_{j=1}^n \int_{A_i^-} \Big(\frac{X_{i+1}-X_i}{2}-K(u_i-u)\Big) \mathds{1}\{u\in C_j\} {\rm d}u\nonumber\\
    & \quad = \int_{A_i^-} \Big(\frac{X_{i+1}-X_i}{2}-K(u_i-u)\Big){\rm d}u\nonumber \\
    & \quad = \frac{(X_{i+1}-X_i)^2}{4K}-\frac{1}{2}\frac{(X_{i+1}-X_i)^2}{4K} \nonumber\\
    & \quad = \frac{(X_{i+1}-X_i)^2}{8K}. \label{Kspeed} 
    \end{align}  
    Combining this inequality with~\eqref{Ai-} yields
    \begin{align*}
    \int_{A_i^-} |G(u)-T(u)|{\rm d}u &\geqslant \frac{(X_{i+1}-X_i)^2}{8K}\\
    & \quad + \sum \limits_{j=1}^{i-1}  \lambda_1(C_j\cap A_i^-) (X_i-X_j) +  \sum \limits_{j=i+1}^{n}  \lambda_1(C_j\cap A_i^-) (X_j-X_{i+1}). 
    \end{align*}
    
    Employing the same technique for $A_i^+$, we obtain 
    \begin{align*}
    \int_{A_i^+} |G(u)-T(u)|{\rm d}u & \geqslant \frac{(X_{i+1}-X_i)^2}{8K}\\
    & \quad + \sum \limits_{j=1}^{i-1} \lambda_1(C_j\cap A_i^+) (X_i-X_j) + \sum \limits_{j=i+1}^{n} \lambda_1(C_j\cap A_i^+) (X_j-X_{i+1}).
    \end{align*}
    So, letting $A_i =A_i^- \cup A_i^+$ and using the fact that $X_{\ell+1}\geqslant X_\ell$ for all $\ell\leqslant n-1$, we are led to
    \begin{align}
    \int_{A_i} |G(u)-T(u)|{\rm d}u &  \geqslant \frac{(X_{i+1}-X_i)^2}{4K} \nonumber\\
    & \quad + \sum \limits_{j=1}^{i-1} \lambda_1(C_j\cap A_i) (X_{j+1}-X_j) + \sum \limits_{j=i+2}^{n} \lambda_1(C_j\cap A_i) (X_j-X_{j-1}). \label{one}
    \end{align}
    
    Now, let $u_{{\ell}_1-1} \in [0,1]$ be such that $G(u_{{\ell}_1-1})= \frac{a+X_{{\ell}_1}}{2}$. With a slight abuse of notation, define $A_{{\ell}_1-1}^-=[u_{{\ell}_1-1} -\frac{X_{{\ell}_1}-a}{2K},u_{{\ell}_1-1}]$ and $A_{{\ell}_1-1}^+=[u_{{\ell}_1-1}, u_{{\ell}_1-1} +\frac{X_{{\ell}_1}-a}{2K}]$. Then, using the same method as above, one easily shows that, for $A_{{\ell}_1-1} = A_{{\ell}_1-1}^-\cup A_{{\ell}_1-1}^+$, 
    \begin{align*}
    \int_{A_{{\ell}_1-1}} |G(u)-T(u)|{\rm d}u & \geqslant \frac{(X_{{\ell}_1}-a)^2}{4K} \\
    & \hspace{-0.5cm} + \sum \limits_{j=1}^{{\ell}_1-1} \lambda_1(C_j\cap A_{{\ell}_1-1}) (a-X_j) + \sum \limits_{j={\ell}_1+1}^{n} \lambda_1(C_j\cap A_{{\ell}_1-1}) (X_j-X_{{\ell}_1}).
    \end{align*}
    In a similar fashion, for $u_{{\ell}_2}\in [0,1]$ such that $G(u_{{\ell}_2}) = \frac{X_{{\ell}_2}+b}{2}$ and, with a slight abuse of notation, letting $A_{\ell_2}=[u_{\ell_2}-\frac{b-X_{\ell_2+1}}{2K},u_{\ell_2}+\frac{b-X_{\ell_2+1}}{2K}]$, we obtain
    \begin{align*}
    \int_{A_{{\ell}_2}} |G(u)-T(u)|{\rm d}u & \geqslant \frac{(b-X_{{\ell}_2})^2}{4K} \\
    & \quad + \sum \limits_{j=1}^{{\ell}_2-1} \lambda_1(C_j\cap A_{{\ell}_2}) (X_{{\ell}_2}-X_j) + \sum \limits_{j={\ell}_2+1}^{n} \lambda_1(C_j\cap A_{{\ell}_2}) (X_j-b).
    \end{align*}
    Accordingly,
    \begin{align}
    \int_{A_{{\ell}_1-1}\cup A_{{\ell}_2}} |G(u)-T(u)|{\rm d}u & \geqslant \frac{(X_{{\ell}_1}-a)^2}{4K} + \frac{(b-X_{{\ell}_2})^2}{4K} \nonumber\\
    & \quad + \sum \limits_{j=1}^{{\ell}_1-2} \lambda_1(C_j\cap A_{{\ell}_1-1}) (X_{j+1}-X_j)\nonumber \\
    & \quad +\lambda_1(C_{{\ell}_1-1} \cap A_{{\ell}_1-1})(a-X_{{\ell}_1-1})\nonumber\\
    & \quad + \sum \limits_{j={\ell}_1+1}^{n} \lambda_1(C_j\cap A_{{\ell}_1-1}) (X_{j}-X_{j-1})\nonumber\\
    & \quad + \sum \limits_{j=1}^{{\ell}_2-1} \lambda_1(C_j\cap A_{{\ell}_2}) (X_{j+1}-X_j)\nonumber \\
    & \quad +\lambda_1(C_{{\ell}_2+1} \cap A_{{\ell}_2})(X_{{\ell}_2+1}-b)\nonumber\\
    & \quad + \sum \limits_{j={\ell}_2+2}^{n} \lambda_1(C_j\cap A_{{\ell}_2}) (X_{j}-X_{j-1}).\label{two}
    \end{align}
    
    Let $B=\bigcup_{i={\ell}_1-1}^{{\ell}_2}A_i$, and observe that the target integral can be decomposed in the following way:
    \begin{align}
    \int_0^1 |G(u)-T(u)|{\rm d}u & =\int_{B}|G(u)-T(u)|{\rm d}u  +\int_{B^c}|G(u)-T(u)|{\rm d}u.\label{three}
    \end{align} 
    Inequalities~\eqref{one} and~\eqref{two} provide a lower bound on the first term on the right-hand side of~\eqref{three}. Let us now work out the second term. To this aim, observe that 
    \begin{align*}
    \int_{B^c} |G(u)-T(u)|{\rm d}u & \geqslant \sum \limits_{j=1}^{{\ell}_1-1} \int_{B^c} |G(u)-X_j|\mathds{1}\{u\in  C_j\}{\rm d}u \\
    & \quad + \sum \limits_{j={\ell}_2+1}^{n} \int_{B^c} |G(u)-X_j|\mathds{1}\{u\in  C_j\} {\rm d}u\\ 
    & \geqslant \sum \limits_{j=1}^{{\ell}_1-2} \int_{B^c}  (X_{{\ell}_1-1}-X_j)\mathds{1}\{u\in  C_j\}{\rm d}u \\
    & \quad + \int_{B^c} (a-X_{{\ell}_1-1})\mathds{1}\{u\in  C_{{\ell}_1-1}\} {\rm d}u\\
    & \quad + \int_{B^c} (X_{{\ell}_2+1}-b)\mathds{1}\{u\in  C_{{\ell}_2+1}\} {\rm d}u \\
    & \quad + \sum \limits_{j={\ell}_2+2}^{n} \int_{B^c} (X_j-X_{{\ell}_2+1})\mathds{1}\{u\in  C_j\} {\rm d}u.
    \end{align*}
    Exploiting $\lambda_1(C_j)=\frac{1}{n}$ for $j \in \{1, \hdots, n\}$, we see that
    \begin{align}
    \int_{B^c} |G(u)-T(u)|{\rm d}u 
    &  \geqslant \sum \limits_{j=1}^{{\ell}_1-2} \Big(\frac{1}{n}-\sum \limits_{i={\ell}_1-1}^{{\ell}_2} \lambda_1(C_j\cap A_i)\Big)(X_{j+1}-X_j)\nonumber\\
    & \quad +\Big(\frac{1}{n}-\sum \limits_{i={\ell}_1-1}^{{\ell}_2} \lambda_1(C_{{\ell}_1-1}\cap A_i)  \Big) (a-X_{{\ell}_1-1})\nonumber\\
    & \quad + \Big(\frac{1}{n}-\sum \limits_{i={\ell}_1-1}^{{\ell}_2}\lambda_1(C_{{\ell}_2+1}\cap A_i)\Big) (X_{{\ell}_2+1}-b)\nonumber\\
    & \quad+ \sum \limits_{j={\ell}_2+2}^{n} \Big(\frac{1}{n}-\sum \limits_{i={\ell}_1-1}^{{\ell}_2} \lambda_1(C_j\cap A_i) \Big) (X_{j}-X_{j-1}).\label{four}
    \end{align}
    Thus, using identity~\eqref{three} together with inequalities~\eqref{one},~\eqref{two}, and~\eqref{four}, we are led to 
    \begin{align*}
    \int_0^1 |G(u)-T(u)|{\rm d}u & \geqslant \frac{(X_{{\ell}_1}-a)^2}{4K} + \frac{(b-X_{{\ell}_2})^2}{4K} \\
    & \hspace{-0.35cm} + \sum \limits_{j=1}^{{\ell}_1-2} \Big(\frac{1}{n}-\sum \limits_{i={\ell}_1-1}^{{\ell}_2} \lambda_1(C_j\cap A_i) + \sum \limits_{i={\ell}_1-1}^{{\ell}_2} \lambda_1(C_j\cap A_i)\Big)(X_{j+1}-X_j)\\
    & \hspace{-0.35cm} +\Big(\frac{1}{n}-\sum \limits_{i={\ell}_1-1}^{{\ell}_2} \lambda_1(C_{{\ell}_1-1}\cap A_i)  +\sum \limits_{i={\ell}_1-1}^{{\ell}_2} \lambda_1(C_{{\ell}_1-1}\cap A_i) \Big) (a-X_{{\ell}_1-1})\\
    & \hspace{-0.35cm} + \sum \limits_{i={\ell}_1}^{{\ell}_2-1} \frac{(X_{i+1}-X_i)^2}{4K}\\
    & \hspace{-0.35cm} + \Big(\frac{1}{n}-\sum \limits_{i={\ell}_1-1}^{{\ell}_2}\lambda_1(C_{{\ell}_2+1}\cap A_i) +\sum \limits_{i={\ell}_1-1}^{{\ell}_2}\lambda_1(C_{{\ell}_2+1}\cap A_i) \Big)(X_{{\ell}_2+1}-b)\\
    & \hspace{-0.35cm} + \sum \limits_{j={\ell}_2+2}^{n} \Big(\frac{1}{n}-\sum \limits_{i={\ell}_1-1}^{{\ell}_2} \lambda_1(C_j\cap A_i)+\sum \limits_{i={\ell}_1-1}^{{\ell}_2} \lambda_1(C_j\cap A_i) \Big) (X_{j}-X_{j-1}).
    \end{align*}
    So,
    \begin{align*}
    \int_0^1 |G(u)-T(u)|{\rm d}u & \geqslant \frac{(X_{{\ell}_1}-a)^2}{4K} + \sum \limits_{i={\ell}_1}^{{\ell}_2-1} \frac{(X_{i+1}-X_i)^2}{4K} +\frac{(b-X_{{\ell}_2})^2}{4K}\\
    & \quad +  \sum \limits_{j\in \{1,\hdots,{\ell}_1-2\}\cup \{{\ell}_2+1,\hdots,n-1\}} \frac{X_{j+1}-X_j}{n} + \frac{1}{n} (a-X_{{\ell}_1-1}) \\
    & \quad + \frac{1}{n} (X_{{\ell}_2+1}-b).
    \end{align*} 
    Since $K\geqslant n \max \limits_{i=1,\hdots,n-1} (X_{i+1}-X_i)$, we have $\frac{X_{j+1}-X_j}{n} \geqslant \frac{(X_{j+1}-X_j)^2}{K}$, and thus
    \begin{align}\label{Uni}  
    \frac{(X_{{\ell}_1}-a)^2}{4K} + \frac{1}{n} (a-X_{{\ell}_1-1}) &\geqslant \frac{1}{4K} \big((X_{{\ell}_1}-a)^2 + 4(a-X_{{\ell}_1-1})(X_{{\ell}_1}-X_{{\ell}_1-1})\big)\nonumber \\
    & = \frac{1}{4K} \big((X_{{\ell}_1}-a)^2 + 4(a-X_{{\ell}_1-1})(X_{{\ell}_1}-a\big)\nonumber \\
    & \quad + 4 (a-X_{{\ell}_1-1})^2\big)\nonumber \\
    & \geqslant \frac{(X_{{\ell}_1}-X_{{\ell}_1-1})^2}{4K}.
    \end{align}
    Similarly, 
    \[\frac{(X_{{\ell}_2}-b)^2}{4K} + \frac{1}{n} (X_{{\ell}_2+1}-b) \geqslant \frac{(X_{{\ell}_2+1}-X_{{\ell}_2})^2}{4K}.\] 
    Using once again the assumption on $K$, we conclude that 
    \[\int_0^1 |G(u)-T(u)|{\rm d}u \geqslant \sum \limits_{i=1}^{n-1} \frac{(X_{i+1}-X_i)^2}{4K}.\]
    
    To complete the proof, it remains to show that ${\widehat G}_{K}^{\star}$ and ${\widehat G}_{K}^{\star}\circ S$ are the only minimizers of~\eqref{eq:studyGANs} (Main Document).
    Returning to inequality~\eqref{Uni}, we see that if the function $G$ does not visit each data points, then 
    \[
    \int_0^1 |G(u)-T(u)|{\rm d}u > \sum \limits_{i=1}^{n-1} \frac{(X_{i+1}-X_i)^2}{4K}.
    \]
    Also, according to~\eqref{Kspeed}, for the function $G$ to be optimal it needs to go at speed $K$ between each observation. Finally, with equation~\eqref{Monge}, we have that an optimal $G$ must be such that
    \[
    \lambda_1 \big(\{u \in [0,1] :  |G(u) -X_i |\leqslant  |G(u) -X_j |, \ j=1, \hdots, n \}\big) = \frac{1}{n},
    \]
    a property satisfied by ${\widehat G}_{K}^{\star}$ and ${\widehat G}_{K}^{\star}\circ S$ according to~\eqref{NFR} (Main Document). We conclude that ${\widehat G}^{\star}_K$ and $ {\widehat G}^{\star}_K\circ S$ are the unique minimizers of Problem~\eqref{eq:studyGANs} (Main Document) as they are the only functions satisfying these three conditions. 
    \section{Proof of Proposition~\ref{K1d1}}
    The first statement is a straightforward consequence of \citet[][Theorem 2]{Deheuvels84}. Regarding the second statement, we know from Theorem~\ref{theorem1} that, for all $K\geqslant \underline {K}_1$, 
    \[W_1({\widehat G}^{\star}_{K\sharp  U},\mu_n)=\inf_{G\in \text{Lip}_K([0,1],\mathbb{R})} W_1(G_{\sharp  U},\mu_n) =  \frac{1}{4K} \sum_{i=1}^{n-1} (X_{(i+1)}-X_{(i)})^2.\]
    Therefore,
    \begin{align*}
    W_1({\widehat G}^{\star}_{K\sharp  U},\mu_n)
    & \leqslant  \frac{\sum_{i=1}^{n-1} (X_{(i+1)}-X_{(i)})^2}{n\max_{i=1,\hdots,n-1} (X_{(i+1)}-X_{(i)})}\\
    & \leqslant  \frac{1}{n} \sum_{i=1}^{n-1} (X_{(i+1)}-X_{(i)})\\
    & =  \frac{1}{n} (X_{(n)}-X_{(1)})\\
    & \leqslant \frac{B-A}{n}.
    \end{align*}
    Recalling that $W_1(\mu,\mu_n)=\mathscr O(n^{-1/2})$ in probability \cite[][Theorem 1]{FournierGuillin2015}, the conclusion follows from the triangle inequality.
    
    \section{Proof of Proposition~\ref{proposition2}}
    The result is a consequence of the following lemma:
    \begin{lemma}\label{lemma2}
    For each $G \in {\rm Lip}_K([0,1],\mathbb R^d)$, there exists a sequence of functions $(G_m)_{m\in \mathbb{N}}$ in ${\rm Lip}_K([0,1],\mathbb R^d)$ such that each $G_{m\sharp  U}$ is nonatomic and $W_1(G_{m\sharp  U},\mu_n) \to W_1(G_{\sharp  U},\mu_n)$ as $m\to\infty$. 
    \end{lemma}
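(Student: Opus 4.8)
The plan is to obtain $G_m$ by slightly tilting $G$ towards a fixed curve that moves at full speed in one coordinate direction. Pick a unit coordinate vector $e_1 \in \mathbb{R}^d$, set $H(u) = Ku\,e_1$ for $u \in [0,1]$, and for $t \in (0,1)$ define $G^t := (1-t)G + tH$. Since both $G$ and $H$ belong to the convex set ${\rm Lip}_K([0,1],\mathbb{R}^d)$, so does $G^t$, and $\|G^t - G\|_\infty = t\,\|H-G\|_\infty \leqslant t(K + \|G\|_\infty) \to 0$ as $t \to 0$. By the continuity of $G \mapsto W_1(G_{\sharp U},\mu_n)$ for the uniform norm — established at the beginning of the proof of Lemma~\ref{lemma1} — it therefore suffices to choose a sequence $t_m \downarrow 0$ in $(0,1)$ for which every $G^{t_m}_{\sharp U}$ is nonatomic, and to set $G_m := G^{t_m}$.

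The heart of the argument is to show that $G^t_{\sharp U}$ is nonatomic for all but at most countably many $t \in (0,1)$. Suppose $G^t_{\sharp U}$ has an atom at some $z$, i.e.\ $B := (G^t)^{-1}(\{z\})$ has positive Lebesgue measure. Being Lipschitz, $G^t$ is differentiable $\lambda_1$-almost everywhere; combining this with the Lebesgue density theorem, almost every $u \in B$ is at once a density point of $B$ and a differentiability point of $G^t$, and at such a point one gets $(G^t)'(u) = 0$ (approach $u$ along a sequence in $B$, on which $G^t$ is constant equal to $z$). Hence $\lambda_1\big(\{u : (G^t)'(u) = 0\}\big) > 0$. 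But wherever $G$ is differentiable one has $(G^t)'(u) = (1-t)G'(u) + tK e_1$, which vanishes precisely when $G'(u) = -\tau(t)\,e_1$ with $\tau(t) := tK/(1-t)$. The sets $E_\tau := \{u \in [0,1] : G'(u)\ \text{exists and equals}\ -\tau e_1\}$, $\tau \geqslant 0$, are pairwise disjoint subsets of $[0,1]$, so $\lambda_1(E_\tau) > 0$ for at most countably many $\tau$; since $t \mapsto \tau(t)$ is injective on $(0,1)$, the set of ``bad'' $t$ is countable.

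Choosing $t_m \downarrow 0$ in $(0,1)$ outside this countable set and putting $G_m := (1-t_m)G + t_m H$ then yields a sequence in ${\rm Lip}_K([0,1],\mathbb{R}^d)$ with $G_{m\sharp U}$ nonatomic and $W_1(G_{m\sharp U},\mu_n) \to W_1(G_{\sharp U},\mu_n)$, as required. I expect the only delicate point to be the measure-theoretic step that converts an atom of $G^t_{\sharp U}$ into a positive-measure set on which $(G^t)'$ vanishes; the rest is bookkeeping, and the specific choice of $H$ (speed $K$ in a fixed direction, so $H' \equiv Ke_1 \neq 0$) is exactly what guarantees that this vanishing set coincides with a single level set $\{G' = -\tau(t)e_1\}$ of $G'$, with no residual contribution from a set where both $G'$ and $H'$ vanish.
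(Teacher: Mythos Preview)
Your proof is correct and takes a genuinely different route from the paper's. The paper modifies $G$ \emph{locally}: it identifies the maximal nondegenerate intervals on which $G$ is constant and replaces $G$ there by small tent functions in the $e_1$ direction, then argues that the resulting $G_m$ is ``not constant over any non degenerated interval'' and hence $G_{m\sharp U}$ is nonatomic. Your approach is \emph{global}: you tilt $G$ towards the full-speed line $H(u)=Kue_1$ via $G^t=(1-t)G+tH$, and use a neat disjointness/counting argument on the level sets $\{G'=-\tau e_1\}$ to show that only countably many $t$ can produce an atom.

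What your approach buys is robustness at exactly the step you flagged as delicate. The paper's implication ``not constant on any interval $\Rightarrow$ pushforward nonatomic'' is in fact not valid for Lipschitz maps: take $G(u)=\mathrm{dist}(u,C)$ with $C\subset[0,1]$ a fat Cantor set; then $G$ is $1$-Lipschitz, nowhere constant on an interval, yet $G^{-1}(0)=C$ has positive measure, so $G_{\sharp U}$ has an atom at $0$. In that example the paper's set $\mathcal I$ of nondegenerate constant intervals is empty and $G_m\equiv G$. Your argument sidesteps this entirely, because you never rely on constant intervals: an atom of $G^t_{\sharp U}$ forces $\lambda_1(\{(G^t)'=0\})>0$ directly via Lebesgue density plus Rademacher, and the injective parametrisation $t\mapsto\tau(t)=tK/(1-t)$ then pins each bad $t$ to a distinct level set of $G'$. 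The trade-off is that the paper's construction, when it works (e.g.\ for piecewise linear $G$), is slightly more explicit, whereas yours is nonconstructive in the choice of $t_m$ but fully general.
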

    \begin{proof}
    Let $G \in \text{Lip}_K([0,1],\mathbb R^d)$ and $m\in\mathbb N$.
    We define $G_m$ by slightly modifying $G$ on each  interval where it is constant. More precisely, let $\mathcal{I}$ be the set of all non degenerated connected components of  $G^{-1}(\{y\in\mathbb R^d : \lambda_1(G^{-1}(y))>0\})$. This set is at most countable and, since $G$ is continuous, it contains only disjoint closed intervals, i.e.,    
    \[
    \mathcal{I} = \{[a_\ell, b_\ell] : \ell\in \mathcal L\},
    \]
    where $\mathcal L\subset\mathbb N $ and $0\leqslant a_\ell < b_\ell \leqslant 1$.
    Let $K_m = \min(K,1/m)$, $e_1=(1,0,\hdots,0)\in\mathbb R^d$, and 
    \[
    G_m(u) = \begin{cases}
    G(a_\ell) + K_m\big(\frac{b_\ell-a_\ell}{2} - 
    \big\lvert\frac{a_\ell+b_\ell}{2} - u \big\rvert\big) e_1 &\text{if $u\in [a_\ell, b_\ell]$ for some $\ell\in\mathcal L$} \\
    G(u) &\text{otherwise.}
    \end{cases}
    \]
    It is easy to see that $G_m\in{\rm Lip}_K([0,1],\mathbb R^d)$. Moreover, $G_m$ is not constant over any non degenerated interval. Thus, the distribution $\ G_{m\sharp U}$ is nonatomic. In addition, $\|G_m-G\|_\infty \to 0$ as $m\to\infty$. In particular, for any continuous bounded function $f:\mathbb R^d\to \mathbb R$, $\|f(G_m)-f(G)\|_\infty \to 0$, so that $G_{m\sharp  U} \to G_{\sharp  U}$ weakly, as $m$ tends to infinity. As the $G_{m\sharp  U}$'s have supports included in the same compact set, we conclude by \citet[][Theorem 6.9]{villani2008optimal}
     that $\lim_{m\to\infty} W_1(G_{m\sharp  U},G_{\sharp  U})= 0$. But, by the triangle inequality,
     \[
    \big| W_1(G_{m\sharp  U},\mu_n)-W_1(G_{\sharp  U},\mu_n)\big|\leqslant  W_1(G_{m\sharp  U},G_{\sharp  U}),
     \]
    from which $\lim_{m\to \infty} W_1(G_{m\sharp  U},\mu_n)= W_1(G_{\sharp  U},\mu_n)$ follows, as desired.
    \end{proof}
    \section{Proof of Proposition~\ref{proposition3}}
 Assuming that such a transport map $T^\star \in \mathscr{H}^{w^\star}$ exists, we write $w_{T^\star(x)}^\star$ instead of $w_i^\star$ whenever $T^\star(x)=X_i$, $i\in \{1,\hdots,n\}$.
Let $\varphi:\mathbb{R}^d\rightarrow \mathbb{R}$ be the $1$-Lipschitz map defined by
\[
\varphi(x)=\|x-T^\star(x)\|-w_{T^\star(x)}^\star.
\]
Since $T^\star(X_i)=X_i$ for all $i\in \{1,\hdots,n\}$, we have in particular that $\varphi(x)-\varphi(T^\star(x))=\|x-T^\star(x)\|$. Then, denoting by
\[
\partial \varphi:= \{(x,y)\in \mathbb{R}^d\times \mathbb{R}^d : \varphi(x)-\varphi(y)=\|x-y\|\}
\]
the superdifferential of $\varphi$ \citep[][Definition 5.7]{villani2008optimal}, the graph of $T^\star$ is included in $\partial \varphi$. Therefore,
\begin{align*}
    \int_{\mathbb{R}^d\times \mathbb{R}^d}\|x-T^\star(x)\|{\rm d}\nu(x) & =\int_{\mathbb{R}^d\times \mathbb{R}^d}(\varphi(x)-\varphi(T^\star(x))){\rm d}\nu(x)\\
    & = \int_{\mathbb{R}^d}\varphi(x){\rm d}\nu(x)-\int_{\mathbb{R}^d}\varphi(y){\rm d}\mu_n(y)\\
    & \leq W_1(\nu,\mu_n).
\end{align*}
We conclude that $T^\star$ is an optimal transport map.
    \section{Proof of Proposition~\ref{proposition4}}
    Let us first show that, for all $i\in \{1,\hdots,n+k -1\}$ and $j\notin \{\sigma(i),\sigma(i+1)\}$, 
    \[[V_i+ \varphi(\sigma(i)),V_{i+1}] \cap {\widehat G}_K^{\star{-1}} (\text{Vor}(j)^\circ) = \emptyset.\] 
    Suppose on the contrary that there exists $t\in (0,1)$ such that $Y_i := X_{\sigma(i)}+t(X_{\sigma(i+1)}-X_{\sigma(i)}) \in \text{Vor}(j)^\circ$. Then 
    \[X_j \in B^{\circ}(Y_i,\|X_{\sigma(i)}-Y_i\|) \cap B^{\circ}(Y_i,\|X_{\sigma(i+1)}-Y_i\|),\]
    where $B^{\circ}(x,\varepsilon)$ stands for the open ball centered at $x$ of radius $\varepsilon$. Observe that for $t\leqslant 1/2$,  \[B^{\circ}(Y_i,\|X_{\sigma(i)}-Y_i\|) \subseteq B^{\circ}\Big(\frac{X_{\sigma(i)}+X_{\sigma(i+1)}}{2},\frac{\|X_{\sigma(i+1)}-X_{\sigma(i)}\|}{2}\Big),\]
    whereas for $t\geqslant 1/2$,
    \[B^{\circ}(Y_i,\|X_{\sigma(i+1)}-Y_i\|) \subseteq B^{\circ}\Big(\frac{X_{\sigma(i)}+X_{\sigma(i+1)}}{2},\frac{\|X_{\sigma(i+1)}-X_{\sigma(i)}\|}{2}\Big).
    \]
    Consequently, 
    \[X_j \in B^{\circ}\Big(\frac{X_{\sigma(i)}+X_{\sigma(i+1)}}{2},\frac{\|X_{\sigma(i+1)}-X_{\sigma(i)}\|}{2}\Big).
    \]
    We deduce that $\langle X_{\sigma(i)}-X_j, X_{\sigma(i+1)}-X_j\rangle < 0$ (notation $\langle \cdot,\cdot \rangle$ means the scalar product), and so 
    \[\|X_{\sigma(i+1)}-X_{\sigma(i)}\|^2> \|X_{\sigma(i+1)}-X_j\|^2+\|X_{\sigma(i)}-X_j\|^2.
    \]
    However, such an inequality is impossible by definition of $\sigma$. We conclude that, for all $t \in [0,1/2]$,
    \[ X_{\sigma(i)} + t(X_{\sigma(i+1)}-X_{\sigma(i)}) \in \text{Vor}(\sigma(i))
    \]
    and, for all $t \in [1/2,1]$, 
    \[X_{\sigma(i)} + t(X_{\sigma(i+1)}-X_{\sigma(i)}) \in \text{Vor}(\sigma(i+1)).
    \]
    
    Let us now turn to the computation of $W_1({\widehat G}^{\star}_{K\sharp  U},\mu_n)$. First, by definition of $\varphi(i)$, for $i \in \{1,\hdots,n\}$, we have 
    \begin{align*}
    &\sum \limits_{j\in \sigma^{-1}(i)}  \lambda_1\Big(\big[V_j,V_j+ \varphi(i)+ \frac{\|X_{\sigma(j+1)}-X_i\|}{2K}\big]\Big) \\
    & \ + \lambda_1\Big(\big[V_{j-1} + \varphi(\sigma(j-1))+ \frac{\|X_{\sigma(j-1)}-X_i\|}{2K},V_{j-1} + \varphi(\sigma(j-1))+ \|X_{\sigma(j-1)}-X_i\|\big]\Big)\\ 
    & \ =\sum \limits_{j\in \sigma^{-1}(i)} \Big(\varphi(i)+ \frac{\|X_{\sigma(j+1)}-X_i\|}{2K} +\frac{\|X_{\sigma(j-1)}-X_i\|}{2K} \Big)\\ 
    & \ =\frac{1}{n}.
    \end{align*}
    This shows that $\lambda_1({\widehat G}_K^{\star-1}(\text{Vor}(i)))=\frac{1}{n}$, $i\in \{1,\hdots,n\}$---or, said differently, that the function 
    ${\widehat G}_{K}^{\star}$ spends a total time $1/n$ in each Voronoi cell. Now, introduce $T^{\star} : \mathbb{R}^d \to \{X_1,\hdots,X_n\}$ defined ${\widehat G}_{K\sharp U}$-almost everywhere by $T^{\star}(x)=X_i $  if $x \in {\rm Vor}(i)$. Then, clearly, $ T^{\star} \in {\mathscr H}^0$, 
    where we recall that
    \begin{align*}
    {\mathscr H}^0 &=\big\{T:\mathbb{R}^d \to \{X_1, \hdots, X_n\} \ : \ \forall x \in \text{Vor}(i), T(x)=X_i \\
    & \hspace{4.5cm}\text{and } \forall x \in \Gamma_{j_1\hdots j_p}^0, T(x) \in \{X_{j_1},\hdots  ,X_{j_p}\}\big\}.
    \end{align*}
    Arguing as in the proof of Lemma~\ref{lemma2}, one shows that there exists a sequence of functions $(G^\star_m)_{m \in \mathbb N}\subset\text{Lip}_K([0,1],\mathbb R^d)$ such that each $G^\star_{m\sharp  U}$ is nonatomic, $W_1(G^\star_{m\sharp  U},\mu_n)\to W_1({\widehat G}^{\star}_{K\sharp  U},\mu_n)$ as $m\to \infty$, and, for all $m$ large enough, $\lambda_1(G^{\star-1}_m(\text{Vor}(i)))=\frac{1}{n}$, $i\in \{1,\hdots,n\}$. According to Proposition~\ref{proposition3}, we have  
    \[W_1(G^\star_{m\sharp  U},\mu_n) = \int_0^1 \|G^\star_m(u)-T^{\star}(G^\star_m(u))\|{\rm d}u.\]
    By dominated convergence, we obtain $W_1({\widehat G}^{\star}_{K\sharp  U},\mu_n) = \int_0^1 \|{\widehat G}^{\star}_{K}(u)-T^{\star}({\widehat G}^{\star}_{K}(u))\|{\rm d}u$, so that $T^{\star}$ is an optimal transport map from ${\widehat G}_{K}^\star$ to $\mu_n$. Finally, 
    \begin{align*}
    W_1({\widehat G}^{\star}_{K\sharp  U},\mu_n) & = \int_0^1 \|{\widehat G}^{\star}_{K}(u)-T^{\star}({\widehat G}^{\star}_{K}(u))\|{\rm d}u \\
    & = \sum \limits_{j=1}^{n+k-1} \int_{V_j}^{V_j+ \varphi(\sigma(j)) + \frac{\|X_{\sigma(j+1)}-X_{\sigma(j)}\|}{2K}} \|X_{\sigma(j)}-{\widehat G}_{K}^\star(u)\|{\rm d}u \\
    & \quad + \int_{V_j+ \varphi(\sigma(j))+ \frac{\|X_{\sigma(j+1)}-X_{\sigma(j)}\|}{2K}}^{V_j+ \varphi(\sigma(j)) + \|X_{\sigma(j+1)}-X_{\sigma(j)}\|} \|X_{\sigma(j+1)}-{\widehat G}_{K}^\star(u)\|{\rm d}u\\
    & = \sum \limits_{j=1}^{n+k-1} \int_{V_j + \varphi(\sigma(j)) }^{V_j+ \varphi(\sigma(j)) + \frac{\|X_{\sigma(j+1)}-X_{\sigma(j)}\|}{2K}} K\big(u-(V_j + \varphi(\sigma(j)) )\big){\rm d}u\\
    &  \hspace{-0.3cm} + \int_{V_j+ \varphi(\sigma(j)) + \frac{\|X_{\sigma(j+1)}-X_{\sigma(j)}\|}{2K}}^{V_j+ \varphi(\sigma(j)) + \|X_{\sigma(j+1)}-X_{\sigma(j)}\|} K(V_j+ \varphi(\sigma(j)) + \|X_{\sigma(j+1)}-X_{\sigma(j)}\|-u){\rm d}u\\
    & = \sum \limits_{j=1}^{n+k-1} \frac{1}{8K}  \|X_{\sigma(j+1)}-X_{\sigma(j)}\|^2 + \frac{1}{8K}  \|X_{\sigma(j+1)}-X_{\sigma(j)}\|^2 \\
    & = \frac{1}{4K} \sum \limits_{j=1}^{n+k-1}  \|X_{\sigma(j+1)}-X_{\sigma(j)}\|^2.
    \end{align*}
    \section{Proof of Proposition~\ref{propositionK2}}
    First note, since $\sigma$ is a path with points that may be visited several times, that
    \begin{align} 
    \underline {K}_2 & \geqslant  \sum_{i=1}^n \sum_{j\in \sigma^{-1}(i)} \frac{1}{2} (\|X_{\sigma(j-1)}-X_i\|+\|X_{\sigma(j+1)}-X_i\|) \nonumber\\
    & \geqslant \inf_{\tau\in \mathscr P_n} \sum_{j=1}^{n-1} \|X_{\tau(j)}-X_{\tau(j+1)}\|, \label{minK}
    \end{align}
    where $\mathscr P_n$ stands for the set of permutations of $\{1,\hdots,n\}$. But, according to \cite{Steele88}, under the conditions of the theorem, there exists a constant $C>0$ satisfying 
    \[\lim_{n\to \infty} n^{-1+1/d} \inf_{\tau\in \mathscr P_n} \sum_{j=1}^{n-1} \|X_{\tau(j)}-X_{\tau(j+1)}\|=C \text{ a.s.}\]
    This shows the first statement of the proposition.
    
    We start the proof of the second statement by recalling that, according to \citet[][Theorem 1]{FournierGuillin2015}, one has, in probability,
    \[W_1(\mu,\mu_n)=\left\{
        \begin{array}{ll}
            \mathscr O(\frac{\log n}{\sqrt{n}}) & \mbox{for } d=2 \\
            \mathscr O (n^{-1/d}) & \mbox{for } d\geqslant 3.
        \end{array}
    \right.
    \]
    Therefore, by the triangle inequality, it is enough to show that, for $d \geqslant 2$, in probability, 
    \[W_1({\widehat G}^{\star}_{K\sharp  U},\mu_n)=\mathscr O(n^{-1/d}).\]
    According to Theorem~\ref{theorem3}, we only need to show that, in probability,
    \[\frac{1}{4K} \sum_{j=1}^{n+k-1} \|X_{\sigma(j+1)}-X_{\sigma (j)}\|^2=\mathscr O(n^{-1/d}),\]
    whenever $K\geqslant \underline {K}_2$. But, by the very definition~\eqref{shortestpath} (Main Document) of the pair $(k,\sigma)$, we have 
    \[\sum_{j=1}^{n+k-1} \|X_{\sigma(j+1)}-X_{\sigma (j)}\|^2\leqslant \sum_{j=1}^{n-1} \|X_{\tau(j+1)}-X_{\tau (j)}\|^2,\]
    where $\tau\in \mathscr P_n$ is a permutation that  minimizes the length among the whole set of paths that visit only once each data, i.e.,
    \[\sum_{j=1}^{n-1} \|X_{\tau(j+1)}-X_{\tau (j)}\|\leqslant \sum_{j=1}^{n-1} \|X_{\tau'(j+1)}-X_{\tau' (j)}\|, \text{ for all } \tau'\in\mathscr P_n.\]
    Therefore, since $K\geqslant \underline {K}_2$, we have by inequality~\eqref{minK} ,
    \[\frac{1}{K} \sum_{j=1}^{n+k-1} \|X_{\sigma(j+1)}-X_{\sigma (j)}\|^2\leqslant \frac{\sum_{j=1}^{n-1} \|X_{\tau(j+1)}-X_{\tau (j)}\|^2}{\sum_{j=1}^{n-1} \|X_{\tau(j+1)}-X_{\tau (j)}\|}.\]
    Now, under the additional condition on the density of $\mu$, we know by \citet[][Theorem 1.3]{Yukich2000} that, for each $0\leqslant \ell\leqslant d$, there exists $C(\ell)>0$ such that 
    \[\lim_{n\to \infty} n^{-1+\ell/d} \sum_{j=1}^{n-1} \|X_{\tau(j+1)}-X_{\tau (j)}\|^\ell=C(\ell) \text{ a.s.}\]
    By the above, we conclude that
    \[\frac{1}{4K} \sum_{j=1}^{n+k-1} \|X_{\sigma(j+1)}-X_{\sigma (j)}\|^2=\mathscr{O}(n^{-1/d}) \text{ a.s.}\]

\end{document}